\def\eqref#1{equation~\ref{#1}}
\def\Eqref#1{Equation~\ref{#1}}
\def\1{\bm{1}}
\DeclareMathAlphabet{\mathsfit}{\encodingdefault}{\sfdefault}{m}{sl}
\SetMathAlphabet{\mathsfit}{bold}{\encodingdefault}{\sfdefault}{bx}{n}
\def\gA{{\mathcal{A}}}
\def\gX{{\mathcal{X}}}
\def\gY{{\mathcal{Y}}}
\def\sK{{\mathbb{K}}}
\def\sP{{\mathbb{P}}}
\newcommand{\R}{\mathbb{R}}
\newcommand{\KL}{D_{\mathrm{KL}}}
\DeclareMathOperator*{\argmax}{arg\,max}
\algrenewcommand\algorithmicrequire{\textbf{Input:}}
\algrenewcommand\algorithmicensure{\textbf{Output:}}
\newtheorem{theorem}{Proposition}
\newtheorem{lemma}{Lemma}
\newcommand{\myparagraph}[1]{\noindent\textbf{#1}}
\title{Variational Information Pursuit for Interpretable Predictions}
\author{Aditya Chattopadhyay, Kwan Ho Ryan Chan, Benjamin D.~Haeffele, \textbf{Donald Geman}, \textbf{Ren{\'e} Vidal} \\
Johns Hopkins University, USA \\\texttt{\{achatto1,kchan49,bhaeffele,geman,rvidal\}@jhu.edu}
}
\begin{document}
\maketitle
\vspace{-0.5cm}
\begin{abstract}
 There is a growing interest in the machine learning community in developing predictive algorithms that are ``interpretable by design". Towards this end, recent work proposes to make interpretable decisions by sequentially asking interpretable queries about data until a prediction can be made with high confidence based on the answers obtained (the history). To promote short query-answer chains, a greedy procedure called Information Pursuit (IP) is used, which adaptively chooses queries in order of information gain. Generative models are employed to learn the distribution of query-answers and labels, which is in turn used to estimate the most informative query. However, learning and inference with a full generative model of the data is often intractable for complex tasks. In this work, we propose Variational Information Pursuit (V-IP), a variational characterization of IP which bypasses the need for learning generative models. V-IP is based on finding a query selection strategy and a classifier that minimizes the expected cross-entropy between true and predicted labels. We then demonstrate that the IP strategy is the optimal solution to this problem. Therefore, instead of learning generative models, we can use our optimal strategy to directly pick the most informative query given any history. We then develop a practical algorithm by defining a finite-dimensional parameterization of our strategy and classifier using deep networks and train them end-to-end using our objective. Empirically, V-IP is 10-100x faster than IP on different Vision and NLP tasks with competitive performance. Moreover, V-IP finds much shorter query chains when compared to reinforcement learning which is typically used in sequential-decision-making problems. Finally, we demonstrate the utility of V-IP on challenging tasks like medical diagnosis where the performance is far superior to the generative modelling approach.\footnote{Code is available at \url{https://github.com/ryanchankh/VariationalInformationPursuit}}
\end{abstract}

\section{Introduction}
\label{section: Introduction}
Suppose a doctor diagnoses a patient with a particular disease. One would want to not only know the disease but also an evidential explanation of the diagnosis in terms of clinical test results, physiological data, or symptoms experienced by the patient. For practical applications, machine learning methods require an emphasis not only on metrics such as generalization and scalability but also on criteria such as interpretability and transparency. With the advent of deep learning methods over traditionally interpretable methods such as decision trees or logistic regression, the ability to perform complex tasks such as large-scale image classification now often implies a sacrifice in interpretability. However, interpretability is important in unveiling potential biases for users with different backgrounds~\citep{yu2018three} or gaining users' trust.

Most of the prominent work in machine learning that addresses this question of interpretability is based on post hoc analysis of a trained deep network's decisions \citep{simonyan2013deep, ribeiro2016should, shrikumar2017learning, zeiler2014visualizing, selvaraju2017grad, smilkov2017smoothgrad, chattopadhyay2019neural, lundberg2017unified}. These methods typically assign \textit{importance scores} to different features used in a model's decision by measuring the sensitivity of the model output to these features. However, explanations in terms of importance scores of raw features might not always be as desirable as a description of the reasoning process behind a model's decision. Moreover, there are rarely any guarantees for the reliability of these post hoc explanations to faithfully represent the model's decision-making process \citep{koh2020concept}. Consequently, post hoc interpretability has been widely criticized \citep{adebayo2018sanity,kindermans2019reliability,rudin2019stop,slack2020fooling,shah2021input,yang2019benchmarking} and there is a need to shift towards ML algorithms that are \textit{interpretable by design}.

\begin{wrapfigure}{r}{0.65\textwidth}
  \vspace{-0.4cm}
  \includegraphics[width=\linewidth]{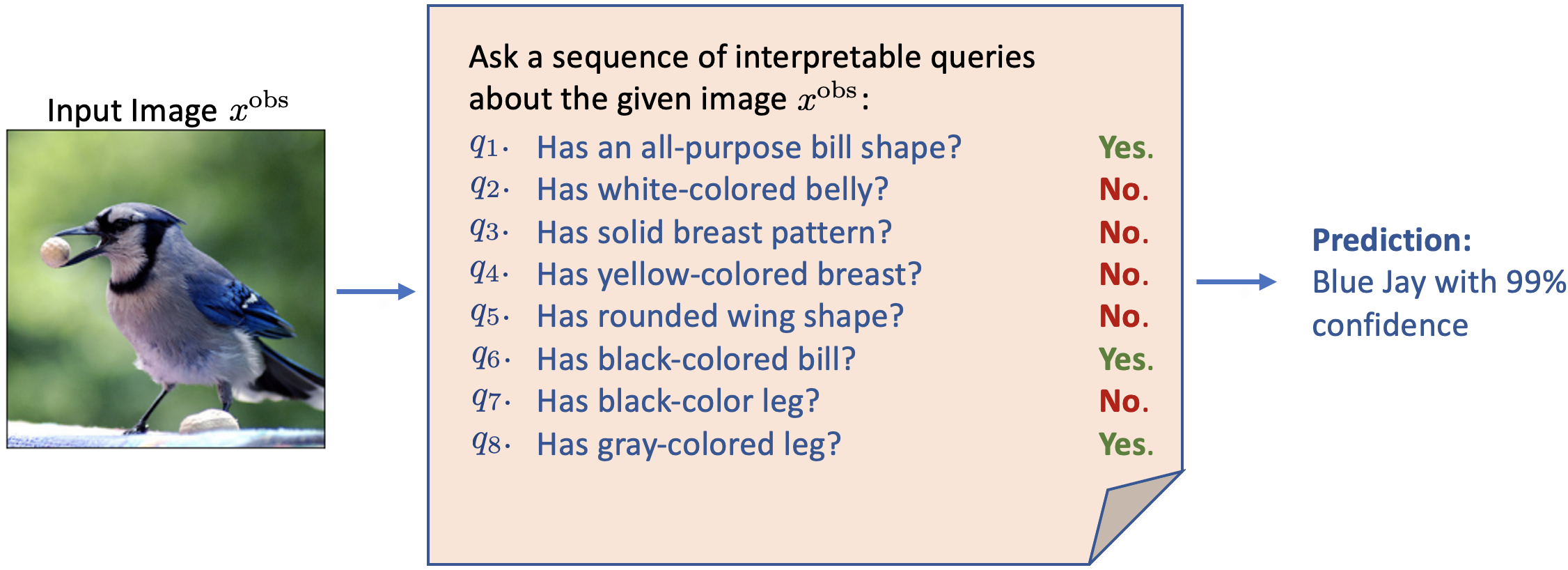}
  \vspace{-0.5cm}
  \caption{Illustration of the framework on a bird classification task. The query set consists of questions about the presence or absence of different visual attributes of birds. Given an image $x^{\textrm{obs}}$, a sequence of interpretable queries is asked about the image until a prediction can be made with high confidence. The choice of each query depends on the query-answers observed so far.}
  \vspace{-0.3cm}
  \label{fig: teaser}
\end{wrapfigure}

An interesting framework for making interpretable predictions was recently introduced by \cite{chattopadhyay2022interpretable}. The authors propose the concept of an \textit{interpretable query set} $Q$, 
a set of user‐defined and task‐specific functions 
$q: \gX \to \gA$, which map a data point in $\gX$ to an answer in $\gA$, 
each having a clear interpretation to the end‐user. For instance, a plausible query set for identifying bird species might involve querying beak shape, head colour, and other visual attributes of birds. Given a query set, their method sequentially asks queries about $X$ until the answers obtained are sufficient for predicting the label/hypothesis $Y$ with high confidence. 
Notably, as the final prediction is solely a function of this sequence of query-answer pairs, these pairs provide a complete explanation for the prediction.
Figure \ref{fig: teaser} illustrates the framework on a bird classification task.

To obtain short explanations (short query-answer chains), the authors propose to use a greedy procedure called Information Pursuit or IP (first introduced in \cite{geman1996active}.) Given any input $x^{\textrm{obs}}$, IP sequentially chooses which query to ask next by selecting the query which has the largest mutual information about the label/hypothesis $Y$ given the history of query-answers obtained so far.
To compute this mutual information criteria, a generative model is first trained to learn the joint distribution between all query-answers $q(X)$ and $Y$; in particular, Variational Autoencoders (VAEs) \citep{kingma2013auto} are employed. This learnt VAE is then used to construct Monte Carlo estimates for mutual information via MCMC sampling. Unfortunately, the computational costs of MCMC sampling coupled with the challenges of learning accurate generative models that enable fast inference limit the application of this framework to simple tasks. As an example, classifying MNIST digits using $3 \times 3$ overlapping patches as queries\footnote{Each patch query asks about the pixel intensities observed in that patch for $x^{\textrm{obs}}$.} with this approach would take weeks!

In this paper, we question the need to learn a full generative model between all query-answers $q(X)$ and $Y$ given that at each iteration IP is only interested in finding the most informative query given the history. More specifically, we present a variational charaterization of IP which 
is based on the observation that, given any history, the query $q^*$, whose answer minimizes the KL-divergence between the label distribution $P(Y \mid X)$ and the posterior $P(Y \mid q^*(X), \textrm{history})$, will be the most informative query as required by IP. As a result, we propose to minimize this KL-divergence term in expectation (over randomization of histories) by optimizing over querier functions, which pick a query from $Q$ given history, parameterized by deep networks. The optimal querier would then learn to directly pick the most informative query given any history, thus bypassing the need for explicitly computing mutual information using generative models. Through extensive experiments, we show that the proposed method is not only faster (since MCMC sampling methods are no longer needed for inference), but also achieves competitive performance when compared with the generative modeling approach and also outperforms other state-of-the-art sequential-decision-making methods.

\myparagraph{Paper Contributions.}
(1) We present a variational characterization of IP, termed Variational-IP or V-IP, and show that the solution to the V-IP objective is exactly the IP strategy. 
(2) We present a practical algorithm for optimizing this objective using deep networks. (3) Empirically, we show that V-IP achieves competitive performance with the generative modelling approach on various computer vision and NLP tasks with a much faster inference time. (4) Finally, we also compare our approach to Reinforcement Learning (RL) approaches used in sequential-decision making areas like Hard Attention~\citep{mnih2014recurrent} and Symptom Checking~\citep{peng2018refuel}, where the objective is to learn a policy which adaptively chooses a fixed number of queries, one at a time, such that an accurate prediction can be made. In all experiments, V-IP is superior to RL methods.



\section{Related Work}
\label{section: Related Work}
\myparagraph{Interpretability in Machine Learning.} These works can be broadly classified into two main categories: (i) \textit{post-hoc interpretability}, and (ii) algorithms that are \textit{interpretable by design}. A large number of papers in this area are devoted to \textit{post-hoc} interpretability. However, as stated in the Introduction, the reliability of these methods have recently been called into question \citep{adebayo2018sanity, yang2019benchmarking, kindermans2019reliability, shah2021input, slack2020fooling, rudin2019stop, koh2020concept, subramanya2019fooling}. Consequently, recent works have focused on developing ML algorithms that are \textit{interpretable by design}. Several of these works aim at learning deep networks via regularization such that it can be approximated by a decision tree \citep{wu2021optimizing} or locally by a linear network \citep{bohle2021convolutional, alvarez2018towards}. However, the framework of \cite{chattopadhyay2022interpretable} produces predictions that are completely explained by interpretable query-chains and is not merely an approximation to an interpretable model like decision trees. Another line of work tries to learn latent semantic concepts or prototypes from data and subsequently base the final prediction on these learnt concepts \citep{sarkar2022framework, nauta2021neural, donnelly2022deformable, li2018deep, yeh2020completeness}. However, there is no guarantee that these learnt concepts would be interpretable to the user or align with the user's requirement. In sharp contrast, allowing the user to define an interpretable query set in \citep{chattopadhyay2022interpretable} guarantees by construction that the resulting query-chain explanations would be interpretable and useful. 

\myparagraph{Sequential Decision-Making.} An alternative approach to learning short query-chains is to use methods for sequential decision learning. These algorithms can be used for making interpretable decisions by sequentially deciding ``what to query next?" in order to predict $Y$ as quickly as possible. \cite{mnih2014recurrent} introduced a reinforcement-learning (RL) algorithm to sequentially observe an image through glimpses (small patches) and predict the label, and called their approach Hard Attention. \cite{rangrej2021probabilistic} introduced a probabilistic model for Hard Attention which is similar to the IP algorithm. More specifically, they propose to learn a partial-VAE \cite{ma2018eddi} to directly learn the distribution of images given partially observed pixels. This VAE is then used to select glimpses in order of information gain, as in IP. In another work, \cite{peng2018refuel} introduced an RL-based framework to sequentially query patient symptoms for fast diagnosis. In \S\ref{section: Experiments}, we compare V-IP with prior works in this area and show that in almost all cases our method requires a smaller number of queries to achieve the same level of accuracy. We conjecture that the superiority of V-IP over RL-based methods is because the V-IP optimization is not plagued by sparse rewards over long trajectories, for example, a positive reward for correct prediction after a large number of symptom queries as in \cite{peng2018refuel}. Instead, \ref{eq:Practical V-IP} can be abstractly thought of as, given history, $s$, choosing a query $q$ (the action) and receiving $\KL(P(Y \mid x) \mid\mid P(Y \mid q(x), s))$ as an immediate reward. A more rigorous comparison of the two approaches would be an interesting future work. 

\section{Methods}
\label{section: Methods}
\subsection{G-IP: Information Pursuit via Generative Models and MCMC}

Let $X: \Omega \to \gX$ and $Y: \Omega \to \gY$ be random variables representing the input data and corresponding labels/output. We use capital letters for random variables and small letters for their realizations. $\Omega$ is the underlying sample space on which all random variables are defined. Let $P(Y \mid X)$ denote the ground truth conditional distribution of $Y$ given data $X$. Let $Q$ be a set of task-specific, user-defined, interpretable functions of data, $ q: \gX \to \gA$, where $q(x)\in\gA$ is the answer to query $q\in Q$ evaluated at $x\in\gX$. We assume that $Q$ is sufficient for solving the task, i.e., we assume that $\forall (x,y) \in \gX\times\gY$
\begin{equation}
\label{eq: sufficient query set}
P(y \mid x) = P(y \mid \{x' \in \gX : q(x') = q(x) \ \forall q\in Q\}).
\end{equation}
In other words $Q(X) := \{q(X): q \in Q\}$ is a sufficient statistic for $Y$.

The Information Pursuit (IP) algorithm \citep{geman1996active} proceeds as follows; given a data-point $x^{\textrm{obs}}$, a sequence of most informative queries is selected as
\begin{align}
\label{eq: IP algorithm}
    q_1 &= \text{IP}(\emptyset) = \argmax_{q \in Q}{I(q(X) ; Y)}; \\
    q_{k+1} &= \text{IP}(\{q_i, q_i(x^{\text{obs}})\}_{1:k}) = \argmax_{q \in Q} I(q(X); Y \mid q_{1:k}(x^{\textrm{obs}})). \nonumber
\end{align}
Here $q_{k + 1} \in Q$ refers to the new query selected by IP at step $k + 1$, based on the history (denoted as $q_{1:k}(x^{\textrm{obs}})$)\footnote{Conditioning on $q_{1:k}(x^{\textrm{obs}})$ is to be understood as conditioning on the event $\{x' \in \gX \mid \{q_i, {q_i}(x^{\textrm{obs}})\}_{1:k} = \{q_i, {q_i}(x')\}_{1:k}\}$}, and $q_{k + 1}(x^{\textrm{obs}})$ indicates the corresponding answer. The algorithm terminates after $L$ queries, where $L$ depends on the data point $x^{\textrm{obs}}$, if all remaining queries are nearly uninformative, that is, $\forall q \in Q \; I(q(X); Y \mid q_{1:L})) \approx 0$. The symbol $I$ denotes mutual information. Evidently \eqref{eq: IP algorithm} requires estimating the query with maximum mutual information with $Y$ based on history. One approach to carrying out IP is by first learning the distribution $P(Q(X), Y)$ from data using generative models and then using MCMC sampling to estimate the mutual information terms\footnote{Since mutual information requires computing expectation over density ratios which is still intractable despite having learnt a generative model.}. However, learning generative models for distributions with high-dimensional support is challenging, and performing multiple iterations of MCMC sampling can likewise be computationally demanding. To address this challenge, in the nest subsection we propose a variational characterization of IP that completely bypasses the need to learn and sample from complex generative models.


\subsection{V-IP: A Variational Characterization of Information Pursuit}
\label{subsection: Variational Information Pursuit}

\begin{wrapfigure}{r}{0.55\textwidth}
  \vspace{-0.4cm}
  \includegraphics[width=\linewidth]{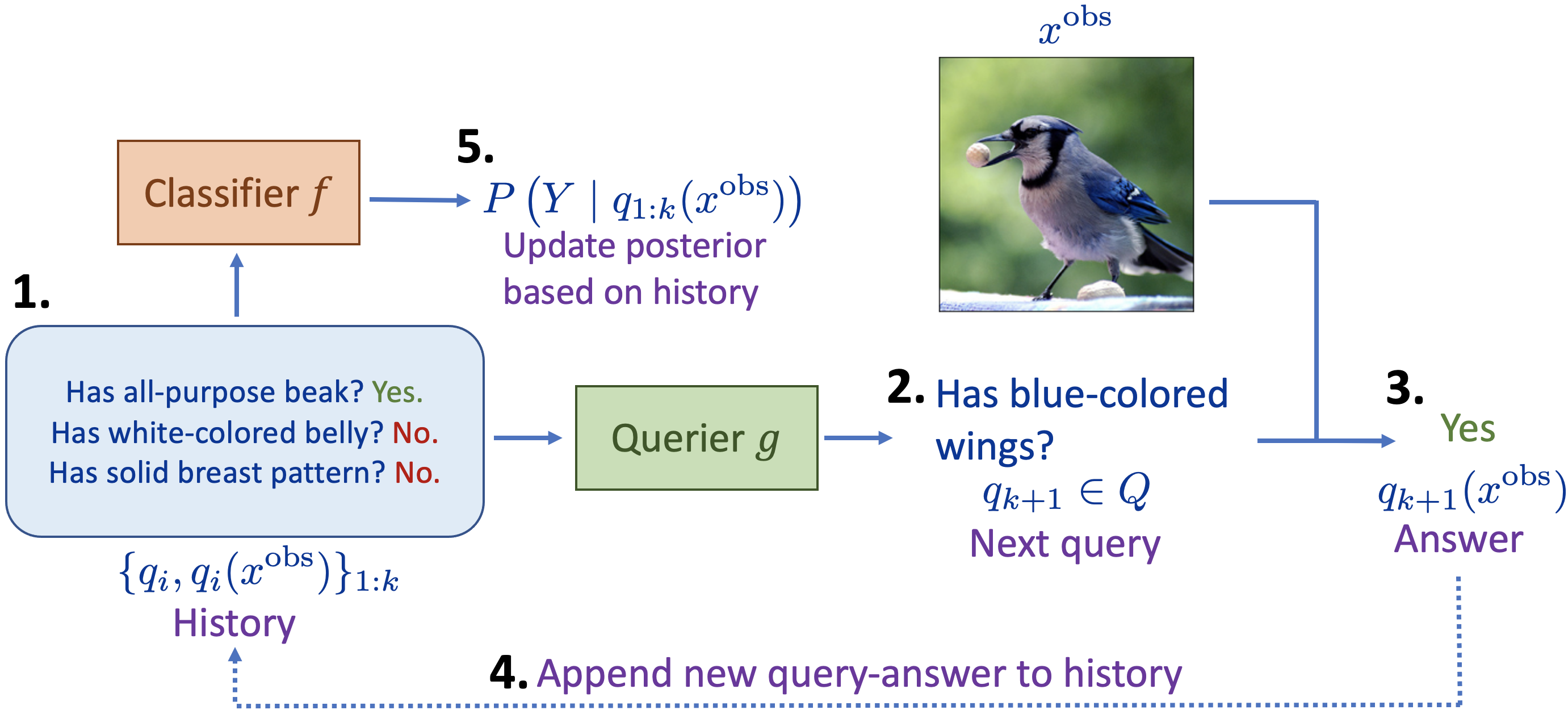}
    \caption{\textbf{Overview.} Interpretable predictions using V-IP.}
    \label{figure: IP_using_VIP}
  \vspace{-0.3cm}
\end{wrapfigure}

We begin this section by describing our variational characterization of IP. The proposed approach is motivated by the fact that
generative models are only a means to an end; what we need is the function, that we call \emph{querier}, that maps the histories observed, $\{q_i, q_i(x^{\textrm{obs}})\}_{1:k}$, to the most informative next query $q_{k+1} \in Q$. It turns out that this most informative query is exactly the query $q^*$ whose answer will minimize the KL divergence between the conditional label distribution $P(Y \mid X)$ and the posterior $P(Y \mid q^*(X), \{q_i(x^{\textrm{obs}})\}_{1:k})$. Based on this insight, is it possible to define an optimization problem to directly learn this querier function? This requires a few ingredients, 
\begin{itemize}[leftmargin=*,noitemsep]
    \item First, we need to learn a querier that, given any possible history one might encounter during IP, chooses the next most informative query. One possible strategy for this is to minimize the KL divergence objective in expectation over random histories of query-answer pairs. 
    \item The posterior $P(Y \mid q^*(X),\{q_i(x^{\textrm{obs}})\}_{1:k})$ depends on the data distribution and is typically unknown. Thus, we need to estimate this using probabilistic classifiers. A possible solution is to jointly optimize this expected KL divergence over both querier and classifier functions. 
\end{itemize}

This leads to the following variational characterization of IP which will allow us to avoid generative models. Let $\sK(x)$ be the set of all finite-length query-answer pairs of the form $(\{q_1, q_1(x)\}, ..., \{q_m, q_m(x)\})$, generated using queries from $Q$ evaluated on any $x \in \gX$. We then define $\bar{\sK} := \cup_{x \in \gX} \sK(x)$ and denote elements of $\bar{\sK}$ as ``histories". We define a classifier $f:\bar{\sK} \rightarrow \mathcal{P}_\gY$ as a function which maps arbitrary query-answer sequences to a distribution over $\gY$. We define a querier $g:\bar{\sK} \rightarrow Q$ as a function which maps arbitrary query-answer sequences to a query $q \in Q$. The variational objective for IP is given by the following functional optimization problem,
\begin{equation}
\label{eq:V-IP-functional} \tag{V-IP}
\begin{split}
    \min_{f, g} \quad &\mathbb{E}_{X, S} \left[\KL\left(P(Y \mid X) \ \| \ \hat{P}(Y \mid q(X), S)\right)\right]  \\
    \textrm{where } \quad & q := g(S) \in Q \notag \\
    & \hat{P}(Y \mid q(X), S) := f(\{q, q(X)\} \cup S),
    \end{split}
\end{equation}
and the minimum is taken over all possible mappings $f$ (classifier) and $g$ (querier). Here, $S$ is a random set of query-answer pairs taking values in $\bar{\sK}$\footnote{Throughout this paper whenever we condition on $S = s$, we mean conditioning on the event of all data $x' \in \gX$ which share the same answers to queries as in $s$.}. Given $S = s$ and $X = x^{\textrm{obs}}$, the querier $g$ chooses a query $q \in Q$, evaluates it on $x^{\textrm{obs}}$ and passes the pair $\{q, q(x^{\textrm{obs}})\}$ to the classifier. The classifier $f$ then makes a prediction based on $s$ appended with this additional pair $\{q, q(x^{\textrm{obs}})\}$. 

Let $(f^*, g^*)$ be an optimal solution to \ref{eq:V-IP-functional}. The querier $g^*$ will be the IP strategy with the requirement that the distribution of $S$, denoted as $P_S$, in \ref{eq:V-IP-functional} is chosen such that the histories observed while carrying out IP must have positive probability mass under $P_S$. Thus, given a data-point $x^{\textrm{obs}}$,
\begin{equation}
\label{eq: V-IP algorithm}
\begin{split}
    q_1 &= g^*(\emptyset) = \argmax_{q \in Q}{I(q(X) ; Y)}; \\
    q_{k+1} &= g^*(\{q_i, q_i(x^{\text{obs}})\}_{1:k}) = \argmax_{q \in Q} I(q(X); Y \mid q_{1:k}(x^{\textrm{obs}})). 
    \end{split}
\end{equation}
%
The above sequential procedure is illustrated in Figure \ref{figure: IP_using_VIP}. As before $\{q_i, q_i(x^{\text{obs}})\}_{1:k}$ is referred to as the history observed after $k$ queries and is a realization of $S$. This is formalized in the following proposition whose proof can be found in Appendix \ref{appendix: VIP same as IP}.

\begin{theorem}
\label{prop: VIP is IP}
Let $(f^*, g^*)$ be an optimal solution to \ref{eq:V-IP-functional}. For any realization $S = s$ such that $P(S = s) > 0$, define the optimization problem:
\begin{align}
\label{eq: characterizing q* and p*}
    \max_{\tilde{P} \in \mathcal{P}_{\gY}, q \in Q} I(q(X); Y \mid s) - \mathbb{E}_{X \mid s}\left[\KL\left(P(Y \mid q(X), s) \ \| \ \tilde{P}(Y \mid q(X), s)\right)\right].
\end{align}
Then there exists an optimal solution $(\tilde{P}_s^*, q_s^*)$ to the above objective such that $q_s^* = g^*(s)$ and $\tilde{P}_s^* = f^* (\{q_s^*,q_s^*(X)\} \cup s)$.
\end{theorem}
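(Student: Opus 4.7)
The plan is to show that, for each realization $s$ with $P(S=s)>0$, the V-IP objective decomposes pointwise in $s$, and that its per-$s$ minimization is equivalent, up to an additive constant in $(\tilde{P},q)$, to the maximization in (4). First I would rewrite
$$\mathbb{E}_{X,S}[\KL(P(Y\mid X)\,\|\,\hat{P}(Y\mid q(X),S))] = \mathbb{E}_S[\Phi(S;g,f)],$$
where $\Phi(s;g,f) := \mathbb{E}_{X\mid s}[\KL(P(Y\mid X)\,\|\,f(\{g(s),g(s)(X)\}\cup s))]$. Because $g(s)\in Q$ and the values of $f$ on the histories $\{g(s),a\}\cup s$ (over possible answers $a$) are free parameters that do not appear in $\Phi(s';\cdot)$ for any other $s'\neq s$, any global minimizer $(f^*,g^*)$ must also be a per-$s$ minimizer of $\Phi(s;\cdot)$ for every $s$ with $P(S=s)>0$. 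So it suffices to show that per-$s$ minimization of $\Phi$ is equivalent to the maximization in (4).

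The core technical step is the identity: for any $s$ with $P(S=s)>0$, any $q\in Q$, and any conditional distribution $\tilde{P}(Y\mid q(X),s)$,
$$\mathbb{E}_{X\mid s}[\KL(P(Y\mid X)\,\|\,\tilde{P}(Y\mid q(X),s))] = I(X;Y\mid s) - I(q(X);Y\mid s) + \mathbb{E}_{X\mid s}[\KL(P(Y\mid q(X),s)\,\|\,\tilde{P}(Y\mid q(X),s))].$$
To derive this I would split the log-ratio inside the KL as
$$\log\frac{P(y\mid X)}{\tilde{P}(y\mid q(X),s)} = \log\frac{P(y\mid X)}{P(y\mid q(X),s)} + \log\frac{P(y\mid q(X),s)}{\tilde{P}(y\mid q(X),s)},$$
and take expectations over $X\mid s$. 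For the second piece I would apply the tower property by conditioning on $q(X)$ given $s$, using $\mathbb{E}_{X\mid q(X)=a,s}[P(y\mid X)] = P(y\mid q(X)=a,s)$, to collapse the $P(y\mid X)$ weights and obtain $\mathbb{E}_{X\mid s}[\KL(P(Y\mid q(X),s)\,\|\,\tilde{P}(Y\mid q(X),s))]$. For the first piece I would use the standard entropy decomposition together with $P(Y\mid X,S=s)=P(Y\mid X)$ on the support of $s$ to obtain $H(Y\mid q(X),s)-H(Y\mid X) = I(X;Y\mid s)-I(q(X);Y\mid s)$.

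Since $I(X;Y\mid s)$ is constant in $(\tilde{P},q)$, minimizing $\Phi(s;\cdot)$ is equivalent to maximizing the objective in (4). Setting $q_s^*:=g^*(s)$ and $\tilde{P}_s^* := f^*(\{q_s^*,q_s^*(X)\}\cup s)$ therefore yields an optimal pair for (4), proving the proposition. The main obstacle I anticipate is the event-based bookkeeping around conditioning on $S=s$: verifying that $P(Y\mid X,S=s)=P(Y\mid X)$ on that event (since on the event $\{S=s\}$, $s$ is effectively a deterministic function of $X$), justifying that $f^*$ may be varied pointwise on extensions of $s$ without altering $\Phi(s';\cdot)$ for $s'\neq s$, and applying the tower property carefully to absorb the $P(y\mid X)$ weights into the $q(X)$-conditional posterior. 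Once this bookkeeping is in place, the remainder of the argument is a short application of standard information-theoretic identities.
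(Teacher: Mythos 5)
Your proposal is correct and takes essentially the same route as the paper: the identity you derive by splitting the log-ratio and applying the tower property is precisely the paper's Lemma~\ref{lemma: KL to MI} (the paper obtains $I(X;Y\mid q(X),s)=I(X;Y\mid s)-I(q(X);Y\mid s)$ via two chain-rule decompositions of $I(X,q(X);Y\mid s)$ rather than your entropy decomposition, an immaterial difference). Your observation that the objective decouples over realizations $s$ with $P(S=s)>0$, so that a global minimizer $(f^*,g^*)$ must be a per-$s$ minimizer, matches the paper's concluding step as well.
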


Thus, at the optima, the KL divergence term in \eqref{eq: characterizing q* and p*} would be $0$ and $g^*$ would pick the most informative query for any given subset of query-answer pairs $S = s$ as is presented in \eqref{eq: V-IP algorithm}. 

Theoretical guarantees aside, solving the optimization problem defined in \ref{eq:V-IP-functional} is challenging since functional optimization over all possible classifier and querier mappings is intractable. In the following subsection, we present a practical algorithm for approximately solving this V-IP objective.

\subsection{V-IP with deep networks}
\label{subsection: V-IP with deep networks}
Instead of optimizing $f$ and $g$ over the intractable function space, we parameterize them using deep networks with weights $\theta$ and $\eta$ respectively. Our practical version of \ref{eq:V-IP-functional}, termed as Deep V-IP, is as follows,
\begin{equation}
\label{eq:Practical V-IP} \tag{Deep V-IP}
\begin{split}
    \min_{\theta, \eta} \quad &\mathbb{E}_{X, S} [\KL(P(Y \mid X) \ \| \ P_\theta(Y \mid q_{\eta}(X), S)] \\
    \textrm{where } \quad & q_{\eta} := g_{\eta}(S)  \\
    & P_\theta(Y \mid q_{\eta}(X), S) := f_{\theta}(\{q_{\eta}, q_{\eta}(X)\} \cup S).
    \end{split}
\end{equation}
Note that all we have done is replace arbitrary functions $f$ and $g$ in \ref {eq:V-IP-functional} by deep networks parameterized by $\theta$ and $\eta$. To find good solutions there are two key constraints. First, the architecture for the classifier $f_{\theta}$ and the querier $g_{\eta}$ functions need to be expressive enough to learn over an exponential (in $|Q|$) number of possible realizations of $S$. Second, we need to choose a sampling distribution $P_S$ for $S$. Notice, that for any reasonable-sized  $Q$, there will be an exponentially large number of possible realizations of $S$. Ideally, we would like to choose a $P_S$ that assigns positive mass only to histories observed during the exact IP procedure, however this is like the ``chicken-and-egg" dilemma.  We now briefly discuss the architectures used for optimizing the \ref{eq:Practical V-IP} objective, followed by an exposition on the sampling distribution for $S$.

\myparagraph{Architectures.} The architecture for both the querier and classifier networks (described in more detail in Appendix \ref{appendix: experiment details}) are chosen in such a way that they can operate on arbitrary length query-answer pairs. There can be several choices for this. In this paper, we primarily use masking where the deep networks operate on fixed size inputs (the answers to all queries $q \in Q$ evaluated on input $x$) with the unobserved query-answers masked out. We also experiment with set-based deep architectures proposed in \cite{ma2018eddi}. We show by ablation studies in Appendix \ref{appendix: ablation studies} that the masking-based architecture performs better. In practice, $q_\eta = \texttt{argmax}(g_{\eta}(S))$, where $g_{\eta}(S) \in \R^{|Q|}$ is the output of the querier network, which assigns a score to every query in $Q$ and $\texttt{argmax}$ computes an $1$-hot indicator of the $\max$ element index of its input vector. To ensure differentiability through $\texttt{argmax}$ we use the straight-through softmax estimator \citep{paulus2020rao} which is described in detail in Appendix \ref{appendix: straight-through}. Finally, $P_\theta(Y \mid q_{\eta}(X), S)$ is the output of Softmax applied to the last layer of $f_{\theta}$.

\myparagraph{Sampling of $\boldsymbol{S}$.} Choosing a sampling distribution that only has positive mass on histories observed during exact $\textrm{IP}$ is like the ``chicken-and-egg" dilemma. A simple alternative is to consider a distribution that assigns a positive mass to all possible sequences of query-answer pairs from $\bar{\sK}$. This however would lead to slow convergence since the deep networks now have to learn to choose the most informative query given a large number query-answer pair subsets that would never be observed for any $x^{\textrm{obs}} \in \gX$ if one could do exact $\textrm{IP}$. To remedy this, we choose to adaptively bias our sampling distribution towards realizations of $S$ one would observe if they carried out \eqref{eq: V-IP algorithm} using the current estimate for querier in place of $g^*$. More concretely, we optimize \ref{eq:Practical V-IP} by sequentially biasing the sampling distribution as follows:
\begin{enumerate}[leftmargin=*,noitemsep]
    \vspace{-0.2cm}
    \item \underline{Initial Random Sampling}: We choose an initial distribution $P^{0}_{S}$ which ensures all elements of $\bar{\sK}$ have positive mass. We first sample $X \sim P_{\textrm{Data}}$. Then we sample $k \sim \text{Uniform}\{0, 1, ..., |Q|\}$ as the number of queries. Subsequently, $k$ queries from $Q$ are selected for $X$ uniformly at random. 
    \item \underline{Subsequent Biased Sampling}: The distribution $P^{j + 1}_{S}$ is obtained by using the solution querier $g_{\eta^j}$ to \ref{eq:V-IP-functional} using $P^{j}_{S}$ as the sampling distribution. In particular, we first sample $X \sim P_{\textrm{Data}}$ and  $k \sim \text{Uniform}\{0, 1, ..., |Q|\}$ as before. Subsequently, we find the first $k$ query-answer pairs for this sampled $X$ using \eqref{eq: V-IP algorithm} with $g_{\eta^j}$ as our querier.
    \vspace{-0.2cm}
\end{enumerate}

Notice that, the empty set $\emptyset$, corresponding to empty history, would have positive probability under any $P^j_S$ and hence the querier would eventually learn to pick the most informative first query. Subsequent sequential optimization would aim at choosing the most informative second query and so on, assuming our architectures are expressive enough. 
In practice, we optimize with random sampling of $S$ using stochastic gradients for numerous epochs. We then take the solution $g_{\eta^0}$ and fine-tune it with biased sampling strategies, each time optimizing using a single batch and consequently changing the sampling strategy according to the updated querier. 
Refer Appendix \ref{appendix: ablation studies} for ablation studies on the effectiveness of the biased sampling strategy for $S$.

\textbf{Stopping Criterion.} There are two possible choices; 
(i) \textit{Fixed budget}: Following prior work in sequential active testing \citep{ma2018eddi, rangrej2021probabilistic}, we stop asking queries after a fixed number of iterations.
(ii) \textit{Variable query-lengths}: Different data-points might need different number of queries to make confident predictions. For supervised learning tasks, where $Y$ is ``almost" a deterministic function of $X$, that is, $\max_Y P(Y \mid X) \approx 1$, for any given $x^{\textrm{obs}} \in \gX$, we terminate after $L$ steps if $\max_Y P(Y \mid q_{1:L}(x^{\textrm{obs}})) \geq 1 - \epsilon$, where $\epsilon$ is a hyperparameter. This is termed as the ``MAP criterion". For tasks where $Y$ is more ambiguous and not a deterministic function of $X$, we choose to terminate once the posterior is ``stable" for a pre-defined number of steps. This stability is measured by the difference between the two consecutive posterior entropies, $H\left(Y \mid q_{1:k}(x^{\textrm{obs}})\right) - H\left(Y \mid q_{1:k + 1}(x^{\textrm{obs}})\right) \leq \epsilon$. This criterion, termed as the ``stability criterion", is an unbiased estimate of the mutual information-based stopping criteria used in \cite{chattopadhyay2022interpretable}.

\textbf{Qualitative differences between Generative-IP and Variational-IP.} We will refer to the generative approach for carrying out IP as described in \cite{chattopadhyay2022interpretable} as Generative-IP or G-IP. The difference between G-IP and V-IP is similar in spirit to that of generative versus discriminative modelling in classification problems \citep{ng2001discriminative}. We conjecture that, when the data distribution agrees with the modelling assumptions made by the generative model, for example, conditional independence of query answers given $Y$, and the dataset size is ``small," then G-IP would obtain better results than V-IP since there are not enough datapoints for learning competitive querier and classifier networks. We thus expect the gains of V-IP to
be most evident on datasets where learning a good generative model is difficult.


\section{Experiments}
\label{section: Experiments}
In this section, through extensive experiments, we evaluate the effectiveness of the proposed method. We describe the query set used for each dataset in Table~\ref{table:query_set_description}, with more details in Appendix \ref{appendix: experiment details}. The choice of query sets for each dataset was made to make our approach comparable with prior work. We also complement the results presented here with more examples in the Appendix.


\begin{table}[t]
\centering
\caption{{Descriptions of query set for different datasets.}}\label{table:query_set_description}
\vspace{-0.15cm}
\resizebox{\textwidth}{!}{
\begin{tabular}{l|l|c}
\toprule[1pt]
Dataset & Query Set $Q$ & Size $|Q|$ \\
\midrule[1pt]
CUB-200 \citep{wah2011caltech} & indicator for the presence of different visual semantic attributes of birds & 312 \\
\midrule[0.1pt]
HuffingtonNews \citep{huffpostdataset} & indicator for presence of words in article headline or description & 1000 \\
\midrule[0.1pt]
MNIST~\citep{lecun1998gradient}, & \multirow{3}{*}{pixel intensities in $3 \times 3$ overlapping patches with stride 1} & \multirow{3}{*}{676}\\ KMNIST~\citep{clanuwat2018deep}, &  &  \\
Fashion-MNIST~\citep{xiao2017fashion}, & & \\
\midrule[0.1pt]
CIFAR-\{10,100\} \citep{krizhevsky2009learning}  & pixel intensities in $8 \times 8$ overlapping patches with stride 4 & 49 \\
\midrule[0.1pt]
SymCAT-200 \citep{peng2018refuel} & \multirow{3}{*}{indicator for the presence of different medical symptoms} & 328 \\
SymCAT-300 \citep{peng2018refuel} &  & 349 \\
SymCAT-400 \citep{peng2018refuel} &  & 355 \\
\midrule[0.1pt]
MuZhi \citep{wei2018task} & \multirow{1}{*}{ternary \{Yes, No, Can't Say\} queries for the presence of different } & 66 \\
Dxy \citep{xu2019end} &  medical symptoms& 41 \\
\bottomrule[1pt]
\end{tabular}}
\vspace{-0.3cm}
\end{table}

\subsection{Interpretable Predictions using V-IP} 
Basing predictions on an interpretable query set allows us to reason about the predictions in terms of the queries, which are compositions of elementary words, symbols or patterns. We will illustrate this by analyzing the query-answer chains uncovered by V-IP for different datasets. Figure \ref{figure: V-IP trajectories}a illustrates the decision-making process for V-IP on an image of a dog from the CIFAR-10 dataset. A priori the model's belief is almost uniform for all the classes (second row, first column). The first query probes a patch near the centre of the image and observes the snout. Visually it looks similar to the left face of a cat, justifying the shift in the model's belief to label ``cat" with some mass on the label ``dog". Subsequent queries are aimed at distinguishing between these two possibilities. Finally, the model becomes more than $99\%$ confident that it is a ``dog" once it spots the left ear. Figure \ref{figure: V-IP trajectories}b shows the query-chain for a ``genital herpes" diagnosis of a synthetic patient from the SymCAT-200 dataset. The y-axis shows the query asked at each iteration with green indicating a ``Yes" answer and red indicating a ``No". Each row shows the model's current belief in the patient's disease. We begin with an initial symptom, ``0: itching of skin", provided by the patient. The subsequent queries ask about different conditions of the skin. The diseases shown are the top-10 most probable out of $200$. All of these diseases have skin-related symptoms. After discovering the patient has painful urination (query 11), V-IP zooms into two possibilities ``Balanitis" and ``Genital herpes". The subsequent queries rule out symptoms typically observed in patients with ``Balanties" resulting in a $80\%$ confidence in the herpes disease. For our final example, we elucidate the results for a bird image from the CUB-200 dataset in Figure \ref{figure: V-IP trajectories}c. The colour scheme for the y-axis is the same as in Figure \ref{figure: V-IP trajectories}b, with the exception that, unlike the patient case, we do not bootstrap V-IP with an initial positive attribute of the word. Instead, the first query about bill shape is the most-informative query about the label $Y$ before any answer is observed. This is indicated with the grey ``0:Init". All the top-10 most probable bird species in this context are seabirds, and have very similar visual characteristics to the true species, ``Laysan Albatross". After $14$ queries V-IP figures out the true class with more than $99\%$ confidence. Thus, in all three case-studies, we see that V-IP makes transparent decisions, interpretable in terms of queries specified by the user. A limitation of this framework 
however is finding a good query set that is interpretable and allows for highly accurate predictions with short explanations (short query-answer chains). We discuss this further in Appendix \S\ref{appendix: Comparing V-IP with black-box deep networks}.
\begin{figure}[t]
\centering
     \includegraphics[width=0.93\textwidth]{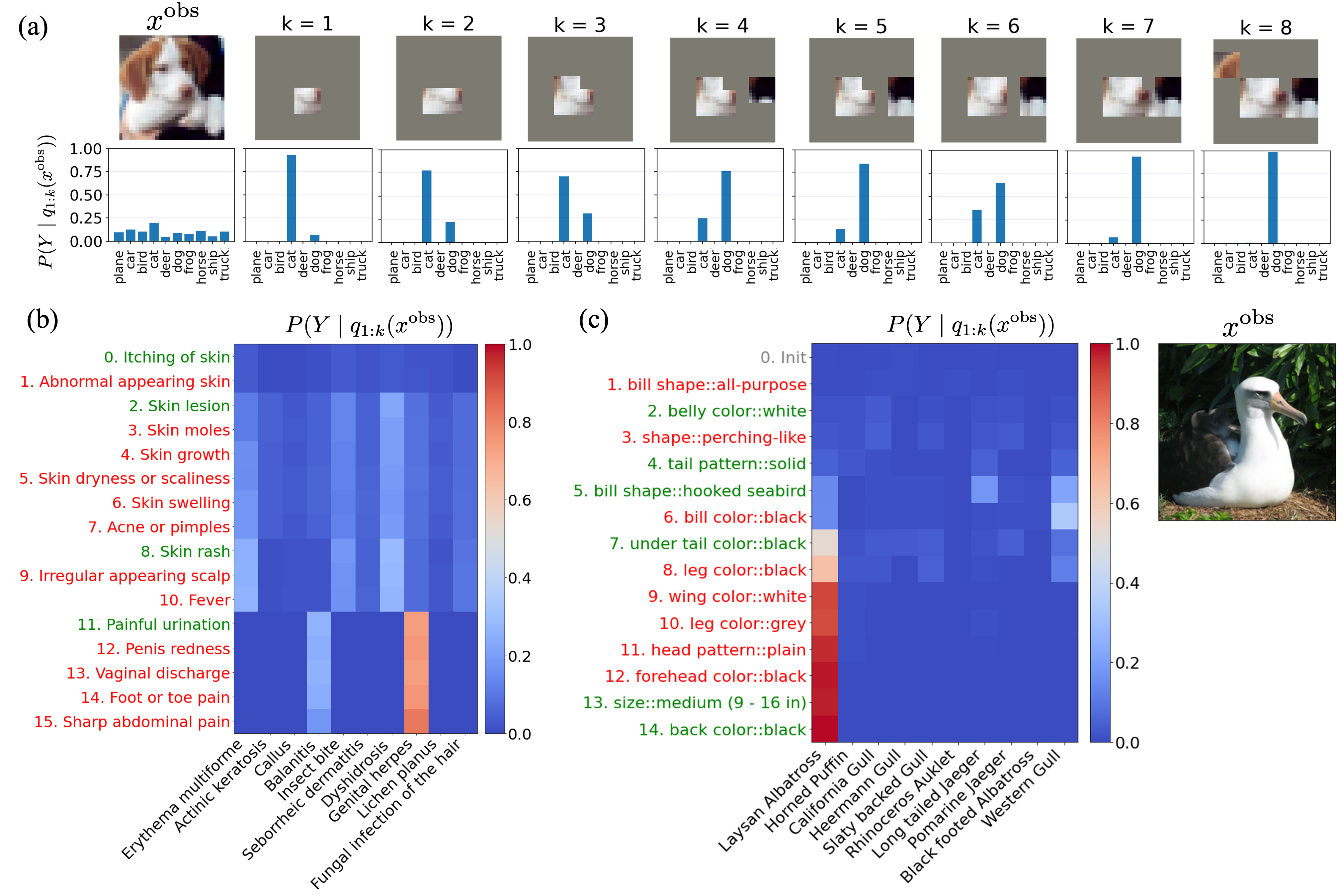}
    \caption{\textbf{(a) V-IP on CIFAR-10.} In column 1 row 1, we show the observed image $x^{\textrm{obs}}$, and in column 1 row 2, we show the label distribution before V-IP observes any patches. In the subsequent columns, row 1 indicates the patches revealed (the history) so far and row 2 shows the corresponding posterior over the labels given this history. \textbf{(b) V-IP on SymCAT-200.} Each row in the heatmap shows the posterior of the disease labels given history. We show the top-10 most probable diseases out of 200. The y-axis indicates the corresponding symptom queried in each iteration by V-IP. We use the colour scheme that red denotes a ``No" answer while green denotes a ``Yes" answer. \textbf{(c) V-IP on CUB-200.}  The observed image $x^{\textrm{obs}}$ is shown on the right. The heatmap shows the posterior, similar to (b).}
    \label{figure: V-IP trajectories}
    \vspace{-1cm}
\end{figure}

\begin{wraptable}{R}{0.5\linewidth}
\centering
\vspace{-0.4cm}
\caption{AUC values for test accuracy vs. explanation length curves for different datasets. To account for different query set sizes across datasets, we normalize the scores by $|Q|$. Refer to Table \ref{table: extended AUC} in Appendix~\S\ref{appendix: extended results} for extended results with standard deviations for other methods (averaged over 5 runs).}\label{table: AUC}
\resizebox{\linewidth}{!}{
\begin{tabular}{l|ccccc}
\toprule
{Dataset}  & Random & RAM & RAM+ & {G-IP} & {V-IP (Ours)} \\
\midrule
CUB & 0.557 & 0.662 & 0.695 & \textbf{0.736} & 0.716 $\pm$ 0.008  \\
HuffingtonNews & 0.423 & 0.389 & 0.431 & \textbf{0.691} & 0.680 $\pm$ 0.002 \\
MNIST & 0.868 & 0.916 & 0.920 & \textbf{0.964} & 0.956 $\pm$ 0.002  \\
KMNIST &  0.775 & 0.832 & 0.841 & 0.872 & \textbf{0.911} $\pm$ 0.008 \\
Fashion-MNIST & 0.735 & 0.770 & 0.804 & 0.831 & \textbf{0.849} $\pm$ 0.010 \\
\bottomrule
\end{tabular}
}
\vspace{-0.4cm}
\end{wraptable}

\subsection{Quantitative comparison with prior work} 
\textbf{Baselines.} We compare V-IP primarily to the generative modelling approach for IP, namely G-IP. 
We also compare to Reinforcement-Learning methods prevalent in other areas of sequential-decision making like Hard-Attention \citep{mnih2014recurrent, rangrej2021probabilistic} or Symptom Checking \citep{peng2018refuel}, which can be adapted for our purposes. In particular, we compare with the RAM \citep{mnih2014recurrent} and RAM+\citep{li2017dynamic} algorithms. In both methods, a policy is learnt using deep networks to select queries based on previous query-answers for a fixed number of iterations such that the expected cumulative reward is maximized. A classifier network is also trained simultaneously with the policy network to make accurate predictions. In RAM, this reward is just the negative cross-entropy loss between true and predicted labels at the last step. In RAM+, this reward is the cumulative sum of the negative cross-entropy loss at each step. We also compare our method with the ``Random" strategy where successive queries are chosen randomly independent of the history observed so far. The predictions given history are still made using the V-IP classifier.

We first show results on the simple datasets used in \cite{chattopadhyay2022interpretable} comparing with our own implementation of G-IP, RAM and RAM+. V-IP is competitive with G-IP in terms of performance but far more efficient in terms of speed of inference. In all datasets, V-IP outperforms the RL-based methods. Subsequently, we present results on more complex tasks like RGB image classification. On these datasets, V-IP achieves a higher accuracy given a fixed budget of queries compared with prior work.


\begin{figure}
\begin{center}
     \includegraphics[width=0.9\textwidth]{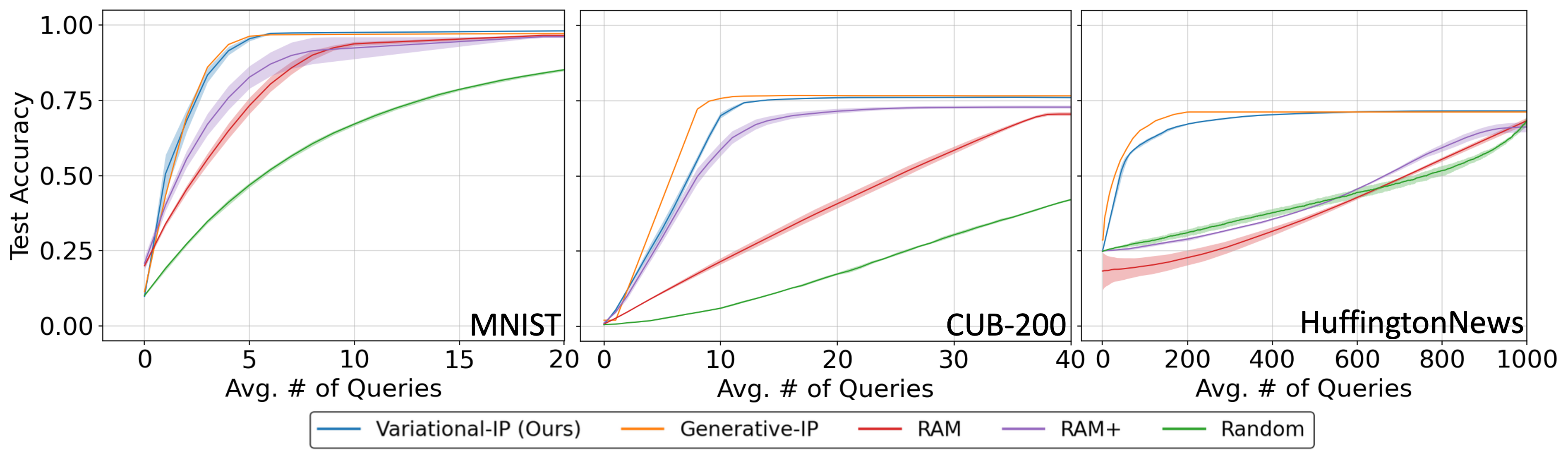}
     \vspace{-0.2cm}
\end{center}
\caption{Tradeoff between accuracy and explanation length (avg. number of queries). The curves were generated by varying the $\epsilon$ parameter in the stopping criteria (see \S\ref{subsection: V-IP with deep networks}). For MNIST \& HuffingtonNews we use the ``MAP criterion", whereas for CUB-200 we use the posterior ``stability criterion". Reported curves are averaged over 5 runs with shaded region denoting the standard deviation.} More results in Appx. \ref{appendix: extended results}.
\label{figure: expl_accuracy_curve}
    \vspace{-0.6cm}
\end{figure}
\myparagraph{Comparisons on Simple Tasks.}
Concise explanations are always preferred due to their simplicity. In Figure \ref{figure: expl_accuracy_curve} we plot the trade-off between accuracy and explanation length obtained by various methods. V-IP is competitive with G-IP and obtains far shorter explanations than the RL-based methods to obtain the same test accuracy. This trade-off is quantified using the Area Under the Curve (AUC) metric in Table \ref{table: AUC}. Notice on the HuffingtonNews dataset the RL methods struggle to perform better than even Random. This is potentially due to the fact that in these RL methods, the classifier is trained jointly with the policy which likely affects its performance when the action-space is large ($|Q|$ = 1000). On the other hand, the random strategy learns its classifier by training on random sequences of query-answer pairs. This agrees with findings in \cite{rangrej2021probabilistic}.

While V-IP performs competitive with the generative approach the biggest gain is in terms of computational cost of inference. Once trained, inference in V-IP, that is computing the most-informative query, is akin to a forward pass through a deep network and is potentially $\mathcal{O}(1)$\footnote{For simplicity we consider unit cost for any operation that was computed in a batch concurrently on a GPU.}. On the other hand, the per-iteration cost in G-IP is $\mathcal{O}(N + |Q|m)$, where $N$ is the number of MCMC iterations employed and $m$ is the cardinality of the space $q(X) \times Y$. As an example, on the same GPU server, G-IP takes about $47$ seconds per iteration on MNIST whereas V-IP requires just $0.11$s, an approximately $400\times$ speedup! Note that the inference cost is the same for V-IP and the RL methods since all of them train a querier/policy function to choose the next query.

\begin{wrapfigure}{R}{0.5\textwidth}
\vspace{-0.3cm}
\centering
  \includegraphics[width=\linewidth]{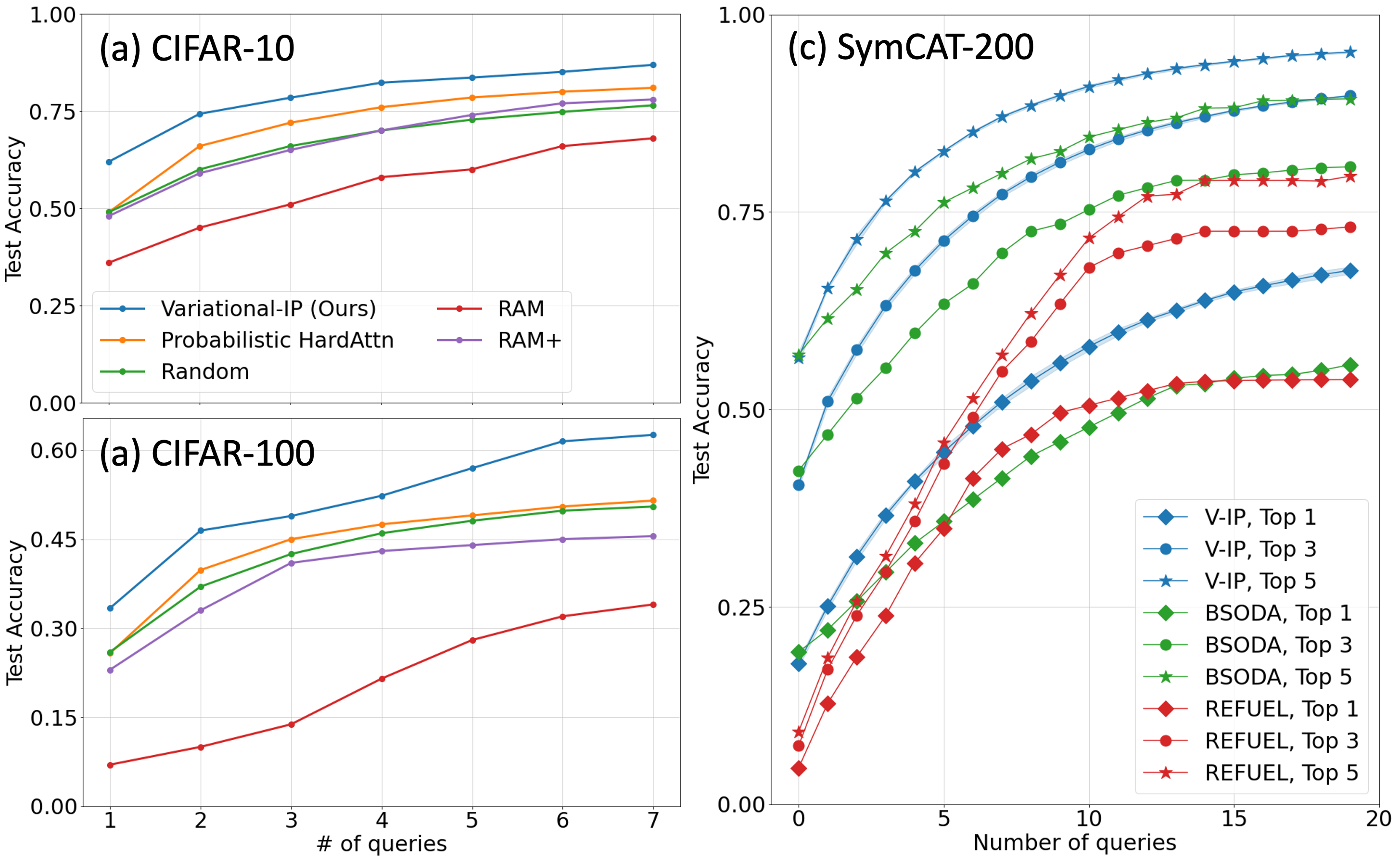}
  \vspace{-0.3cm}
  \caption{Tradeoff between accuracy and number of queries asked (fixed budged stopping criterion) for the CIFAR datasets (a \& b) and the SymCAT-200 dataset (c). Curves for baselines adapted from \cite{rangrej2021probabilistic} for (a) \& (b) and from \cite{he2022bsoda} for (c).}
  \label{fig: cifar_roc}
\vspace{-0.4cm}
\end{wrapfigure}
\myparagraph{Comparisons on Complex Tasks.}
We now move on to more complex datasets 
where the gains of V-IP are more evident. First, we consider the task of natural image classification and show results on the CIFAR-$\{10, 100\}$ datasets. For G-IP on these datasets, we refer to the Probabilistic HardAttn model introduced in \cite{rangrej2021probabilistic} which proposes to learn a partial-VAE model \citep{ma2018eddi} for images and then do inference using this model to compute the most informative query. Figures \ref{fig: cifar_roc} a \& b show the accuracy vs. number of queries curves for different methods. V-IP clearly outperforms all baselines on both datasets. 

Next, we consider the task of medical diagnosis by querying symptoms of the patient. We show results on the popular SymCAT dataset along with comparisons with prior work in Figure \ref{fig: cifar_roc}c. The plot clearly shows that V-IP achieves a much higher accuracy given a fixed budget of queries. 

\begin{wraptable}{R}{0.5\linewidth}
\centering
\vspace{-0.4cm}
\caption{Test accuracy obtained on different symptom checking datasets after asking $20$ queries.}\label{table: medical diagnosis performance}
\resizebox{\linewidth}{!}{
\begin{tabular}{l|ccc}
\toprule
{Dataset} &  BSODA & REFUEL & {V-IP (Ours)} \\
\midrule
SymCAT-200 &  0.556 & 0.548 &  \textbf{0.681} $\pm$ 0.004 \\
SymCAT-300  & 0.475 & 0.482 & \textbf{0.599} $\pm$ 0.007 \\
SymCAT-400 & 0.446 & 0.438 &  \textbf{0.513} $\pm$ 0.009  \\
MuZhi &  \textbf{0.726} & 0.718 & 0.706 $\pm$ 0.011\\
Dxy & \textbf{0.811} & 0.757 &  0.806 $\pm$ 0.019\\
\bottomrule
\end{tabular}
}
\vspace{-0.2cm}
\end{wraptable}
For G-IP on this task, we refer to the BSODA framework introduced in \cite{he2022bsoda} which is again based on partial-VAEs. REFUEL \citep{peng2018refuel} is  a state-of-the-art RL-based method for this task akin to the RAM technique used in Hard-Attention literature. The classification accuracy for these different methods on all medical datasets are summarized in Table \ref{table: medical diagnosis performance}. Numbers for baselines are taken from \cite{nesterov2022neuralsympcheck} since we used their released versions of these datasets\footnote{\url{https://github.com/SympCheck/NeuralSymptomChecker}}. As conjectured in \S\ref{subsection: V-IP with deep networks}, MuZhi and Dxy are small-scale datasets with about $500$ training samples thus approaches based on generative models, like BSODA, are able to perform slightly better than V-IP. 

\section{Conclusion}
\label{section: Conclusion}
IP was recently used to construct interpretable predictions by composing interpretable queries from a user-defined query set. The framework however required generative models which limited its application to simple tasks. Here, we have introduced a variational characterization of IP which does away with generative models and tries to directly optimize a KL-divergence based objective to find the most informative query, as required by IP, in each iteration.
Through qualitative and quantitative experiments we show the effectiveness of the proposed method.

\subsubsection*{Acknowledgments}
This research was supported by the Army Research Office under the Multidisciplinary University Research Initiative contract W911NF-17-1-0304, the NSF grant 2031985 and by Simons Foundation Mathematical and Scientific Foundations of Deep Learning (MoDL) grant 135615. Moreover, the authors acknowledge support from the National Science Foundation Graduate Research Fellowship Program under Grant No. DGE2139757. Any opinions, findings, and conclusions or recommendations expressed in this material are those of the author(s) and do not necessarily reflect the views of the National Science Foundation.


\newpage
\bibliography{references.bib}

\begin{thebibliography}{56}
\providecommand{\natexlab}[1]{#1}
\providecommand{\url}[1]{\texttt{#1}}
\expandafter\ifx\csname urlstyle\endcsname\relax
  \providecommand{\doi}[1]{doi: #1}\else
  \providecommand{\doi}{doi: \begingroup \urlstyle{rm}\Url}\fi

\bibitem[Adebayo et~al.(2018)Adebayo, Gilmer, Muelly, Goodfellow, Hardt, and
  Kim]{adebayo2018sanity}
Julius Adebayo, Justin Gilmer, Michael Muelly, Ian Goodfellow, Moritz Hardt,
  and Been Kim.
\newblock Sanity checks for saliency maps.
\newblock \emph{Advances in neural information processing systems}, 31, 2018.

\bibitem[Alvarez~Melis \& Jaakkola(2018)Alvarez~Melis and
  Jaakkola]{alvarez2018towards}
David Alvarez~Melis and Tommi Jaakkola.
\newblock Towards robust interpretability with self-explaining neural networks.
\newblock \emph{Advances in neural information processing systems}, 31, 2018.

\bibitem[Bohle et~al.(2021)Bohle, Fritz, and Schiele]{bohle2021convolutional}
Moritz Bohle, Mario Fritz, and Bernt Schiele.
\newblock Convolutional dynamic alignment networks for interpretable
  classifications.
\newblock In \emph{Proceedings of the IEEE/CVF Conference on Computer Vision
  and Pattern Recognition}, pp.\  10029--10038, 2021.

\bibitem[Chattopadhyay et~al.(2019)Chattopadhyay, Manupriya, Sarkar, and
  Balasubramanian]{chattopadhyay2019neural}
Aditya Chattopadhyay, Piyushi Manupriya, Anirban Sarkar, and Vineeth~N
  Balasubramanian.
\newblock Neural network attributions: A causal perspective.
\newblock In \emph{International Conference on Machine Learning}, pp.\
  981--990. PMLR, 2019.

\bibitem[Chattopadhyay et~al.(2022)Chattopadhyay, Slocum, Haeffele, Vidal, and
  Geman]{chattopadhyay2022interpretable}
Aditya Chattopadhyay, Stewart Slocum, Benjamin~D Haeffele, Rene Vidal, and
  Donald Geman.
\newblock Interpretable by design: Learning predictors by composing
  interpretable queries.
\newblock \emph{arXiv preprint arXiv:2207.00938}, 2022.

\bibitem[Clanuwat et~al.(2018)Clanuwat, Bober-Irizar, Kitamoto, Lamb, Yamamoto,
  and Ha]{clanuwat2018deep}
Tarin Clanuwat, Mikel Bober-Irizar, Asanobu Kitamoto, Alex Lamb, Kazuaki
  Yamamoto, and David Ha.
\newblock Deep learning for classical japanese literature.
\newblock \emph{arXiv preprint arXiv:1812.01718}, 2018.

\bibitem[Donnelly et~al.(2022)Donnelly, Barnett, and
  Chen]{donnelly2022deformable}
Jon Donnelly, Alina~Jade Barnett, and Chaofan Chen.
\newblock Deformable protopnet: An interpretable image classifier using
  deformable prototypes.
\newblock In \emph{Proceedings of the IEEE/CVF Conference on Computer Vision
  and Pattern Recognition}, pp.\  10265--10275, 2022.

\bibitem[Geman \& Jedynak(1996)Geman and Jedynak]{geman1996active}
Donald Geman and Bruno Jedynak.
\newblock An active testing model for tracking roads in satellite images.
\newblock \emph{IEEE Transactions on Pattern Analysis and Machine
  Intelligence}, 18\penalty0 (1):\penalty0 1--14, 1996.

\bibitem[He et~al.(2016)He, Zhang, Ren, and Sun]{he2016deep}
Kaiming He, Xiangyu Zhang, Shaoqing Ren, and Jian Sun.
\newblock Deep residual learning for image recognition.
\newblock In \emph{Proceedings of the IEEE conference on computer vision and
  pattern recognition}, pp.\  770--778, 2016.

\bibitem[He et~al.(2022)He, Mao, Ma, Huang, Hern{\`a}ndez-Lobato, and
  Chen]{he2022bsoda}
Weijie He, Xiaohao Mao, Chao Ma, Yu~Huang, Jos{\'e}~Miguel
  Hern{\`a}ndez-Lobato, and Ting Chen.
\newblock Bsoda: A bipartite scalable framework for online disease diagnosis.
\newblock In \emph{Proceedings of the ACM Web Conference 2022}, pp.\
  2511--2521, 2022.

\bibitem[Hu et~al.(2018)Hu, Shen, and Sun]{hu2018squeeze}
Jie Hu, Li~Shen, and Gang Sun.
\newblock Squeeze-and-excitation networks.
\newblock In \emph{Proceedings of the IEEE conference on computer vision and
  pattern recognition}, pp.\  7132--7141, 2018.

\bibitem[Huang et~al.(2017)Huang, Liu, Van Der~Maaten, and
  Weinberger]{huang2017densely}
Gao Huang, Zhuang Liu, Laurens Van Der~Maaten, and Kilian~Q Weinberger.
\newblock Densely connected convolutional networks.
\newblock In \emph{Proceedings of the IEEE conference on computer vision and
  pattern recognition}, pp.\  4700--4708, 2017.

\bibitem[Kindermans et~al.(2019)Kindermans, Hooker, Adebayo, Alber, Sch{\"u}tt,
  D{\"a}hne, Erhan, and Kim]{kindermans2019reliability}
Pieter-Jan Kindermans, Sara Hooker, Julius Adebayo, Maximilian Alber, Kristof~T
  Sch{\"u}tt, Sven D{\"a}hne, Dumitru Erhan, and Been Kim.
\newblock The (un) reliability of saliency methods.
\newblock In \emph{Explainable AI: Interpreting, Explaining and Visualizing
  Deep Learning}, pp.\  267--280. Springer, 2019.

\bibitem[Kingma \& Ba(2014)Kingma and Ba]{kingma2014adam}
Diederik~P Kingma and Jimmy Ba.
\newblock Adam: A method for stochastic optimization.
\newblock \emph{arXiv preprint arXiv:1412.6980}, 2014.

\bibitem[Kingma \& Welling(2013)Kingma and Welling]{kingma2013auto}
Diederik~P Kingma and Max Welling.
\newblock Auto-encoding variational bayes.
\newblock \emph{arXiv preprint arXiv:1312.6114}, 2013.

\bibitem[Koh et~al.(2020)Koh, Nguyen, Tang, Mussmann, Pierson, Kim, and
  Liang]{koh2020concept}
Pang~Wei Koh, Thao Nguyen, Yew~Siang Tang, Stephen Mussmann, Emma Pierson, Been
  Kim, and Percy Liang.
\newblock Concept bottleneck models.
\newblock In \emph{International Conference on Machine Learning}, pp.\
  5338--5348. PMLR, 2020.

\bibitem[Krizhevsky et~al.(2009)Krizhevsky, Hinton,
  et~al.]{krizhevsky2009learning}
Alex Krizhevsky, Geoffrey Hinton, et~al.
\newblock Learning multiple layers of features from tiny images.
\newblock 2009.

\bibitem[Lavin(2019)]{lavin2019analyzing}
Matthew Lavin.
\newblock Analyzing documents with tf-idf, 2019.

\bibitem[LeCun et~al.(1998)LeCun, Bottou, Bengio, and
  Haffner]{lecun1998gradient}
Yann LeCun, L{\'e}on Bottou, Yoshua Bengio, and Patrick Haffner.
\newblock Gradient-based learning applied to document recognition.
\newblock \emph{Proceedings of the IEEE}, 86\penalty0 (11):\penalty0
  2278--2324, 1998.

\bibitem[Li et~al.(2018)Li, Liu, Chen, and Rudin]{li2018deep}
Oscar Li, Hao Liu, Chaofan Chen, and Cynthia Rudin.
\newblock Deep learning for case-based reasoning through prototypes: A neural
  network that explains its predictions.
\newblock In \emph{Proceedings of the AAAI Conference on Artificial
  Intelligence}, volume~32, 2018.

\bibitem[Li et~al.(2017)Li, Yang, Liu, Zhou, Wen, and Xu]{li2017dynamic}
Zhichao Li, Yi~Yang, Xiao Liu, Feng Zhou, Shilei Wen, and Wei Xu.
\newblock Dynamic computational time for visual attention.
\newblock In \emph{Proceedings of the IEEE International Conference on Computer
  Vision Workshops}, pp.\  1199--1209, 2017.

\bibitem[Loshchilov \& Hutter(2016)Loshchilov and Hutter]{loshchilov2016sgdr}
Ilya Loshchilov and Frank Hutter.
\newblock Sgdr: Stochastic gradient descent with warm restarts.
\newblock \emph{arXiv preprint arXiv:1608.03983}, 2016.

\bibitem[Lundberg \& Lee(2017)Lundberg and Lee]{lundberg2017unified}
Scott~M Lundberg and Su-In Lee.
\newblock A unified approach to interpreting model predictions.
\newblock \emph{Advances in neural information processing systems}, 30, 2017.

\bibitem[Ma et~al.(2018)Ma, Tschiatschek, Palla, Hern{\'a}ndez-Lobato, Nowozin,
  and Zhang]{ma2018eddi}
Chao Ma, Sebastian Tschiatschek, Konstantina Palla, Jos{\'e}~Miguel
  Hern{\'a}ndez-Lobato, Sebastian Nowozin, and Cheng Zhang.
\newblock Eddi: Efficient dynamic discovery of high-value information with
  partial vae.
\newblock \emph{arXiv preprint arXiv:1809.11142}, 2018.

\bibitem[Misra(2018)]{huffpostdataset}
Rishabh Misra.
\newblock News category dataset, 06 2018.

\bibitem[Mnih et~al.(2014)Mnih, Heess, Graves, et~al.]{mnih2014recurrent}
Volodymyr Mnih, Nicolas Heess, Alex Graves, et~al.
\newblock Recurrent models of visual attention.
\newblock \emph{Advances in neural information processing systems}, 27, 2014.

\bibitem[Nauta et~al.(2021)Nauta, van Bree, and Seifert]{nauta2021neural}
Meike Nauta, Ron van Bree, and Christin Seifert.
\newblock Neural prototype trees for interpretable fine-grained image
  recognition.
\newblock In \emph{Proceedings of the IEEE/CVF Conference on Computer Vision
  and Pattern Recognition}, pp.\  14933--14943, 2021.

\bibitem[Nesterov et~al.(2022)Nesterov, Ibragimov, Umerenkov, Shelmanov,
  Zubkova, and Kokh]{nesterov2022neuralsympcheck}
Aleksandr Nesterov, Bulat Ibragimov, Dmitriy Umerenkov, Artem Shelmanov, Galina
  Zubkova, and Vladimir Kokh.
\newblock Neuralsympcheck: A symptom checking and disease diagnostic neural
  model with logic regularization.
\newblock \emph{arXiv preprint arXiv:2206.00906}, 2022.

\bibitem[Ng \& Jordan(2001)Ng and Jordan]{ng2001discriminative}
Andrew Ng and Michael Jordan.
\newblock On discriminative vs. generative classifiers: A comparison of
  logistic regression and naive bayes.
\newblock \emph{Advances in neural information processing systems}, 14, 2001.

\bibitem[Paszke et~al.(2019)Paszke, Gross, Massa, Lerer, Bradbury, Chanan,
  Killeen, Lin, Gimelshein, Antiga, Desmaison, Kopf, Yang, DeVito, Raison,
  Tejani, Chilamkurthy, Steiner, Fang, Bai, and Chintala]{NEURIPS2019_9015}
Adam Paszke, Sam Gross, Francisco Massa, Adam Lerer, James Bradbury, Gregory
  Chanan, Trevor Killeen, Zeming Lin, Natalia Gimelshein, Luca Antiga, Alban
  Desmaison, Andreas Kopf, Edward Yang, Zachary DeVito, Martin Raison, Alykhan
  Tejani, Sasank Chilamkurthy, Benoit Steiner, Lu~Fang, Junjie Bai, and Soumith
  Chintala.
\newblock Pytorch: An imperative style, high-performance deep learning library.
\newblock In H.~Wallach, H.~Larochelle, A.~Beygelzimer, F.~d'~Alch\'{e}-Buc,
  E.~Fox, and R.~Garnett (eds.), \emph{Advances in Neural Information
  Processing Systems 32}, pp.\  8024--8035. Curran Associates, Inc., 2019.

\bibitem[Paulus et~al.(2020)Paulus, Maddison, and Krause]{paulus2020rao}
Max~B Paulus, Chris~J Maddison, and Andreas Krause.
\newblock Rao-blackwellizing the straight-through gumbel-softmax gradient
  estimator.
\newblock \emph{arXiv preprint arXiv:2010.04838}, 2020.

\bibitem[Peng et~al.(2018)Peng, Tang, Lin, and Chang]{peng2018refuel}
Yu-Shao Peng, Kai-Fu Tang, Hsuan-Tien Lin, and Edward Chang.
\newblock Refuel: Exploring sparse features in deep reinforcement learning for
  fast disease diagnosis.
\newblock \emph{Advances in neural information processing systems}, 31, 2018.

\bibitem[Rangrej \& Clark(2021)Rangrej and Clark]{rangrej2021probabilistic}
Samrudhdhi~B Rangrej and James~J Clark.
\newblock A probabilistic hard attention model for sequentially observed
  scenes.
\newblock \emph{arXiv preprint arXiv:2111.07534}, 2021.

\bibitem[Reddi et~al.(2019)Reddi, Kale, and Kumar]{reddi2019convergence}
Sashank~J Reddi, Satyen Kale, and Sanjiv Kumar.
\newblock On the convergence of adam and beyond.
\newblock \emph{arXiv preprint arXiv:1904.09237}, 2019.

\bibitem[Ribeiro et~al.(2016)Ribeiro, Singh, and Guestrin]{ribeiro2016should}
Marco~Tulio Ribeiro, Sameer Singh, and Carlos Guestrin.
\newblock " why should i trust you?" explaining the predictions of any
  classifier.
\newblock In \emph{Proceedings of the 22nd ACM SIGKDD international conference
  on knowledge discovery and data mining}, pp.\  1135--1144, 2016.

\bibitem[Rudin(2019)]{rudin2019stop}
Cynthia Rudin.
\newblock Stop explaining black box machine learning models for high stakes
  decisions and use interpretable models instead.
\newblock \emph{Nature Machine Intelligence}, 1\penalty0 (5):\penalty0
  206--215, 2019.

\bibitem[Sarkar et~al.(2022)Sarkar, Vijaykeerthy, Sarkar, and
  Balasubramanian]{sarkar2022framework}
Anirban Sarkar, Deepak Vijaykeerthy, Anindya Sarkar, and Vineeth~N
  Balasubramanian.
\newblock A framework for learning ante-hoc explainable models via concepts.
\newblock In \emph{Proceedings of the IEEE/CVF Conference on Computer Vision
  and Pattern Recognition}, pp.\  10286--10295, 2022.

\bibitem[Schulman et~al.(2017)Schulman, Wolski, Dhariwal, Radford, and
  Klimov]{schulman2017proximal}
John Schulman, Filip Wolski, Prafulla Dhariwal, Alec Radford, and Oleg Klimov.
\newblock Proximal policy optimization algorithms.
\newblock \emph{arXiv preprint arXiv:1707.06347}, 2017.

\bibitem[Selvaraju et~al.(2017)Selvaraju, Cogswell, Das, Vedantam, Parikh, and
  Batra]{selvaraju2017grad}
Ramprasaath~R Selvaraju, Michael Cogswell, Abhishek Das, Ramakrishna Vedantam,
  Devi Parikh, and Dhruv Batra.
\newblock Grad-cam: Visual explanations from deep networks via gradient-based
  localization.
\newblock In \emph{Proceedings of the IEEE international conference on computer
  vision}, pp.\  618--626, 2017.

\bibitem[Shah et~al.(2021)Shah, Jain, and Netrapalli]{shah2021input}
Harshay Shah, Prateek Jain, and Praneeth Netrapalli.
\newblock Do input gradients highlight discriminative features?
\newblock \emph{Advances in Neural Information Processing Systems}, 34, 2021.

\bibitem[Shrikumar et~al.(2017)Shrikumar, Greenside, and
  Kundaje]{shrikumar2017learning}
Avanti Shrikumar, Peyton Greenside, and Anshul Kundaje.
\newblock Learning important features through propagating activation
  differences.
\newblock In \emph{International conference on machine learning}, pp.\
  3145--3153. PMLR, 2017.

\bibitem[Simonyan et~al.(2013)Simonyan, Vedaldi, and
  Zisserman]{simonyan2013deep}
Karen Simonyan, Andrea Vedaldi, and Andrew Zisserman.
\newblock Deep inside convolutional networks: Visualising image classification
  models and saliency maps.
\newblock \emph{arXiv preprint arXiv:1312.6034}, 2013.

\bibitem[Slack et~al.(2020)Slack, Hilgard, Jia, Singh, and
  Lakkaraju]{slack2020fooling}
Dylan Slack, Sophie Hilgard, Emily Jia, Sameer Singh, and Himabindu Lakkaraju.
\newblock Fooling lime and shap: Adversarial attacks on post hoc explanation
  methods.
\newblock In \emph{Proceedings of the AAAI/ACM Conference on AI, Ethics, and
  Society}, pp.\  180--186, 2020.

\bibitem[Smilkov et~al.(2017)Smilkov, Thorat, Kim, Vi{\'e}gas, and
  Wattenberg]{smilkov2017smoothgrad}
Daniel Smilkov, Nikhil Thorat, Been Kim, Fernanda Vi{\'e}gas, and Martin
  Wattenberg.
\newblock Smoothgrad: removing noise by adding noise.
\newblock \emph{arXiv preprint arXiv:1706.03825}, 2017.

\bibitem[Subramanya et~al.(2019)Subramanya, Pillai, and
  Pirsiavash]{subramanya2019fooling}
Akshayvarun Subramanya, Vipin Pillai, and Hamed Pirsiavash.
\newblock Fooling network interpretation in image classification.
\newblock In \emph{Proceedings of the IEEE/CVF International Conference on
  Computer Vision}, pp.\  2020--2029, 2019.

\bibitem[Wah et~al.(2011)Wah, Branson, Welinder, Perona, and
  Belongie]{wah2011caltech}
Catherine Wah, Steve Branson, Peter Welinder, Pietro Perona, and Serge
  Belongie.
\newblock The caltech-ucsd birds-200-2011 dataset.
\newblock 2011.

\bibitem[Wei et~al.(2018)Wei, Liu, Peng, Tou, Chen, Huang, Wong, and
  Dai]{wei2018task}
Zhongyu Wei, Qianlong Liu, Baolin Peng, Huaixiao Tou, Ting Chen, Xuan-Jing
  Huang, Kam-Fai Wong, and Xiang Dai.
\newblock Task-oriented dialogue system for automatic diagnosis.
\newblock In \emph{Proceedings of the 56th Annual Meeting of the Association
  for Computational Linguistics (Volume 2: Short Papers)}, pp.\  201--207,
  2018.

\bibitem[Wu et~al.(2021)Wu, Parbhoo, Hughes, Roth, and
  Doshi-Velez]{wu2021optimizing}
Mike Wu, Sonali Parbhoo, Michael~C Hughes, Volker Roth, and Finale Doshi-Velez.
\newblock Optimizing for interpretability in deep neural networks with tree
  regularization.
\newblock \emph{Journal of Artificial Intelligence Research}, 72:\penalty0
  1--37, 2021.

\bibitem[Xiao et~al.(2017{\natexlab{a}})Xiao, Rasul, and
  Vollgraf]{fashion_github}
Han Xiao, Kashif Rasul, and Roland Vollgraf.
\newblock Fashion-mnist: A mnist-like fashion product database.
\newblock In \emph{GitHub}, 2017{\natexlab{a}}.

\bibitem[Xiao et~al.(2017{\natexlab{b}})Xiao, Rasul, and
  Vollgraf]{xiao2017fashion}
Han Xiao, Kashif Rasul, and Roland Vollgraf.
\newblock Fashion-mnist: a novel image dataset for benchmarking machine
  learning algorithms.
\newblock 2017{\natexlab{b}}.

\bibitem[Xu et~al.(2019)Xu, Zhou, Gong, Liang, Tang, and Lin]{xu2019end}
Lin Xu, Qixian Zhou, Ke~Gong, Xiaodan Liang, Jianheng Tang, and Liang Lin.
\newblock End-to-end knowledge-routed relational dialogue system for automatic
  diagnosis.
\newblock In \emph{Proceedings of the AAAI Conference on Artificial
  Intelligence}, volume~33, pp.\  7346--7353, 2019.

\bibitem[Yang \& Kim(2019)Yang and Kim]{yang2019benchmarking}
Mengjiao Yang and Been Kim.
\newblock Benchmarking attribution methods with relative feature importance.
\newblock \emph{arXiv preprint arXiv:1907.09701}, 2019.

\bibitem[Yeh et~al.(2020)Yeh, Kim, Arik, Li, Pfister, and
  Ravikumar]{yeh2020completeness}
Chih-Kuan Yeh, Been Kim, Sercan Arik, Chun-Liang Li, Tomas Pfister, and Pradeep
  Ravikumar.
\newblock On completeness-aware concept-based explanations in deep neural
  networks.
\newblock \emph{Advances in Neural Information Processing Systems},
  33:\penalty0 20554--20565, 2020.

\bibitem[Yu(2018)]{yu2018three}
Bin Yu.
\newblock Three principles of data science: predictability, computability, and
  stability (pcs).
\newblock 2018.

\bibitem[Yu et~al.(2018)Yu, Wang, Shelhamer, and Darrell]{yu2018deep}
Fisher Yu, Dequan Wang, Evan Shelhamer, and Trevor Darrell.
\newblock Deep layer aggregation.
\newblock In \emph{Proceedings of the IEEE conference on computer vision and
  pattern recognition}, pp.\  2403--2412, 2018.

\bibitem[Zeiler \& Fergus(2014)Zeiler and Fergus]{zeiler2014visualizing}
Matthew~D Zeiler and Rob Fergus.
\newblock Visualizing and understanding convolutional networks.
\newblock In \emph{European conference on computer vision}, pp.\  818--833.
  Springer, 2014.

\end{thebibliography}
\bibliographystyle{iclr2023_conference}

\newpage
\appendix
\section*{Appendix}

\section{Proof of Proposition \ref{prop: VIP is IP}}
\label{appendix: VIP same as IP}

Before proceeding to the proof we will prove the following lemma,
\begin{lemma}
\label{lemma: KL to MI}
Let $Q$ be an user-defined query set and $\sP(Y)$ be the set of all possible distributions on $Y$. Then, for any realization $S = s$, the following holds true:
\begin{equation}
\begin{aligned}
\label{eq: VIP given s}
    &\min_{\tilde{P} \in  \mathcal{P}_{\gY}, q \in Q} \mathbb{E}_{X \mid s} \left[\KL(P(Y \mid X) \mid \mid \tilde{P}(Y \mid q(X), s)\right] \\ \equiv& \max_{\tilde{P} \in  \mathcal{P}_{\gY}, q \in Q} \left[I(q(X); Y \mid s) - \mathbb{E}_{X \mid s}[\KL(P(Y \mid q(X), s) \mid\mid \tilde{P}(Y \mid q(X), s))]\right]
\end{aligned}
\end{equation}
\end{lemma}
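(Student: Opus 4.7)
The plan is to show that the two objectives in \eqref{eq: VIP given s} differ by a constant (in $(\tilde{P},q)$) and a sign, so that minimizers of the LHS and maximizers of the RHS coincide. The key identity to establish is
\begin{equation*}
\mathbb{E}_{X\mid s}\!\left[\KL\!\bigl(P(Y\mid X)\,\|\,\tilde{P}(Y\mid q(X),s)\bigr)\right] \;=\; I(X;Y\mid s) \;-\; I(q(X);Y\mid s) \;+\; \mathbb{E}_{X\mid s}\!\left[\KL\!\bigl(P(Y\mid q(X),s)\,\|\,\tilde{P}(Y\mid q(X),s)\bigr)\right].
\end{equation*}
Once this is proved, the term $I(X;Y\mid s)$ does not depend on $(\tilde{P},q)$, so minimizing the LHS is equivalent to minimizing $-I(q(X);Y\mid s) + \mathbb{E}_{X\mid s}[\KL(P(Y\mid q(X),s)\,\|\,\tilde{P}(Y\mid q(X),s))]$, which is exactly the negative of the RHS objective in \eqref{eq: VIP given s}.

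To obtain the identity, I would first use the standard add--subtract trick: insert $\pm\log P(Y\mid q(X),s)$ inside the log-ratio defining the KL on the LHS, which splits the expected KL into $\mathbb{E}_{X\mid s}[\KL(P(Y\mid X)\,\|\,P(Y\mid q(X),s))]$ plus $\mathbb{E}_{X\mid s}\mathbb{E}_{Y\mid X}[\log(P(Y\mid q(X),s)/\tilde{P}(Y\mid q(X),s))]$. The second piece depends on $X$ only through $q(X)$, so the tower property (conditioning first on $q(X)$ and $s$ and using $\mathbb{E}_{Y\mid X}$ averages to $\mathbb{E}_{Y\mid q(X),s}$ on integrands that are $(q(X),Y)$-measurable) collapses it to $\mathbb{E}_{X\mid s}[\KL(P(Y\mid q(X),s)\,\|\,\tilde{P}(Y\mid q(X),s))]$. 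For the first piece, expand the KL as $\mathbb{E}_{Y\mid X}[\log P(Y\mid X)-\log P(Y\mid q(X),s)]$ and take the outer expectation to get the difference of conditional entropies $H(Y\mid q(X),s)-H(Y\mid X,s)$. Applying $H(Y\mid Z,s) = H(Y\mid s) - I(Z;Y\mid s)$ for $Z=X$ and $Z=q(X)$ then rewrites this difference as $I(X;Y\mid s)-I(q(X);Y\mid s)$, completing the derivation.

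The step I expect to require the most care is the tower-property reduction in the second piece, because one must use the fact (emphasized in the paper's footnote) that $S$ is a deterministic function of $X$, so that $P(Y\mid X,S{=}s)=P(Y\mid X)$ whenever $X$ is consistent with $s$, and that $\mathbb{E}_{X\mid s}[\,\phi(q(X))\,] = \mathbb{E}_{q(X)\mid s}[\phi(q(X))]$ is valid for any measurable $\phi$. Everything else is routine manipulation of conditional entropies and mutual information. Note also that the identity shows the minimum of the LHS (and maximum of the RHS) is $I(X;Y\mid s) - I(q^\star(X);Y\mid s) \ge 0$, attained when $\tilde{P}(\cdot\mid q(X),s)=P(\cdot\mid q(X),s)$ and $q$ maximizes $I(q(X);Y\mid s)$, which will be used in the proof of Proposition~\ref{prop: VIP is IP}.
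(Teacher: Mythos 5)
Your proof is correct and follows essentially the same route as the paper's: both insert the true posterior $P(Y\mid q(X),s)$ into the log-ratio to split the expected KL into a mutual-information term plus $\mathbb{E}_{X\mid s}\left[\KL\left(P(Y\mid q(X),s)\,\|\,\tilde{P}(Y\mid q(X),s)\right)\right]$, and both reduce the former to $I(X;Y\mid s)-I(q(X);Y\mid s)$ (the paper by identifying it as $I(X;Y\mid q(X),s)$ and applying the chain rule $I(X,q(X);Y\mid s)=I(X;Y\mid s)=I(q(X);Y\mid s)+I(X;Y\mid q(X),s)$, you via the equivalent conditional-entropy identities). The points you flag as delicate --- that $P(Y\mid X)=P(Y\mid X,q(X),S{=}s)$ on the event $\{S=s\}$ and the tower-property collapse of the second piece --- are exactly the facts the paper's proof also invokes.
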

\begin{proof}
Using information-theoretic properties of the KL-divergence we have the following set of equalities. 
\begin{equation}
    \begin{aligned}
     &\min_{\tilde{P} \in  \mathcal{P}_{\gY}, q \in Q} \mathbb{E}_{X \mid s} \left[\KL(P(Y \mid X) \mid \mid \tilde{P}(Y \mid q(X), s)\right] \\
     = & \min_{\tilde{P} \in  \mathcal{P}_{\gY}, q \in Q} \mathbb{E}_{X \mid s} \left[\KL(P(Y \mid X, q(X), s) \mid \mid \tilde{P}(Y \mid q(X), s)\right] \\
     = & \min_{\tilde{P} \in  \mathcal{P}_{\gY}, q \in Q} \mathbb{E}_{X \mid s} \left[ \sum_Y P(Y \mid X, q(X), s) \log \frac{P(Y \mid X, q(X),s)}{\tilde{P}(Y \mid q(X), s)} \right]\\
     =& \min_{\tilde{P} \in  \mathcal{P}_{\gY}, q \in Q} \mathbb{E}_{X \mid s} \left[ \sum_Y P(Y \mid X, q(X), s) \log \frac{P(X, Y \mid q(X),s)}{\tilde{P}(Y \mid q(X), s)P(X \mid q(X), s)} \right]\\
     =&\min_{\tilde{P} \in  \mathcal{P}_{\gY}, q \in Q} \mathbb{E}_{X, Y \mid s} \left[\log \frac{P(X, Y \mid q(X),s)}{P(Y \mid q(X), s)P(X \mid q(X), s)} \right] + \mathbb{E}_{X, Y \mid s} \left[\log \frac{P(Y \mid q(X), s)}{\tilde{P}(Y \mid q(X), s)} \right]\\
     =& \min_{\tilde{P} \in  \mathcal{P}_{\gY}, q \in Q} I (X; Y \mid q(X), s) + \mathbb{E}_{X \mid s}[\KL(P(Y \mid q(X), s) \mid\mid \tilde{P}(Y \mid q(X), s))]
    \end{aligned}
    \label{eq. transforming loss to MI}
\end{equation}

In the first equality, assuming $P(X = x, S = s) > 0$.\footnote{For any $x' \in \gX$, if $P(X = x', S = s) = 0$, then $x'$ would not contribute to the expectation in the first equation and so we do not need consider this case.},  we used the fact that given any $X = x$, the label $Y$ is independent of any query answer $q(X) = q(x)$ and event $\{S = s\}$. Thus, $P(Y \mid X = x) = P(Y \mid X = x, q(X) = q(x), S = s)$. In the fourth equality we multiplied the term inside the $\log$ by the identity $\frac{P(Y \mid q(X), s)}{P(Y \mid q(X), s)}$, where $P(Y \mid q(X), s)$ represents the true posterior of $Y$ given the query answer $q(X)$ and $S = s$. \\

Now observe that for any fixed $S = s$ and any $q \in Q$, 
\begin{equation}
    \begin{aligned}
    I(X, q(X); Y \mid s) &= I(X; Y \mid s) + I(q(X) ; Y \mid X, s) \\
    &=  I(X; Y \mid s)
    \end{aligned}
    \label{eq. MI decomposition 1}
\end{equation}
The second equality is obtained by using the fact that $q(X)$ is a function of $X$.

Decomposing $I(X, q(X); Y \mid s)$ another way,
\begin{equation}
    \begin{aligned}
    I(X, q(X); Y \mid s) &= I(q(X); Y \mid s) + I(X ; Y \mid q(X), s) \\
    \end{aligned}
    \label{eq. MI decomposition 2}
\end{equation}

From \eqref{eq. MI decomposition 1} and \eqref{eq. MI decomposition 2} we conclude that 
\[\min_{q \in Q} I (Y; X \mid q(X), s) \equiv \min_{q \in Q} -I(q(X); Y \mid s)\]

Substituting the RHS in the above result in \eqref{eq. transforming loss to MI} we obtain the desired result.
\end{proof}

\textbf{Proof of Proposition \ref{prop: VIP is IP}.}\\
Restating the objective from \eqref{eq:V-IP-functional},
\begin{equation*}
\begin{aligned}
    \min_{f, g} \quad &\mathbb{E}_{X, S} \left[\KL\left(P(Y \mid X) \ \| \ \hat{P}(Y \mid q(X), S\right)\right]  \\
    \text{where} \quad & q := g(S) \in Q \\
    & \hat{P}(Y \mid q(X), S) := f(\{q, q(X)\} \cup S), 
\end{aligned}
\end{equation*}

Now, for any realization $S = s$, such that $P(S = s) > 0$, we have,

\begin{equation}
\begin{aligned}
\label{eq: random inequality}
 &\min_{\tilde{P} \in \mathcal{P}_{\gY}, q \in Q} \; \mathbb{E}_{X \mid s} \left[\KL\left(P(Y \mid X) \mid \mid \tilde{P}_s(Y \mid q(X), s)\right)\right] \\
 =& \; \mathbb{E}_{X \mid s} \left[\KL\left(P(Y \mid X) \mid \mid \tilde{P}_s^*(Y \mid q_s^*(X), s)\right)\right]\\
 =& \; \mathbb{E}_{X \mid s} \left[\KL\left(P(Y \mid X) \mid \mid \hat{P}(Y \mid \tilde{q}(X), s)\right)\right] +
  \mathbb{E}_{X \mid s}\left[
 \sum_{Y}P(Y \mid X) \log \frac{\hat{P}(Y \mid \tilde{q}(X), s)}{\tilde{P}_s^*(Y \mid q_s^*(X), s)}\right]
 \\
 =& \; \mathbb{E}_{X \mid s} \left[\KL\left(P(Y \mid X) \mid \mid \hat{P}(Y \mid \tilde{q}(X), s)\right)\right] -
  \mathbb{E}_{X \mid s}\left[
 \sum_{Y}P(Y \mid X) \log \frac{\tilde{P}_s^*(Y \mid q_s^*(X), s)}{\hat{P}(Y \mid \tilde{q}(X), s)}\right] \\
  =& \; \mathbb{E}_{X \mid s} \left[\KL\left(P(Y \mid X) \mid \mid \hat{P}(Y \mid \tilde{q}(X), s)\right) -
   \KL\left(\tilde{P}_s^*(Y \mid q_s^*(X), s) \mid\mid \hat{P}(Y \mid \tilde{q}(X), s)\right)\right]
 \\
 \leq & \; \mathbb{E}_{X \mid s} \left[\KL\left(P(Y \mid X) \mid \mid \hat{P}(Y \mid \tilde{q}(X), s)\right) \right]
\end{aligned}
\end{equation}
In the first equality we used the definition of $(\tilde{P}_s^*, q_s^*)$ as the solution to the minimization problem in the first equality. In the second equality, $\tilde{q} = g(s)$ for any querier $g$ and $\hat{P}(Y \mid \tilde{q}(X), s) = f(\{\tilde{q}, \tilde{q}(X)\} \cup s\})$ for any classifier $f$. In the fourth equality we appealed to lemma \ref{lemma: KL to MI} to conclude that $\tilde{P}_s^*(Y \mid q_s^*(X), s) = P(Y \mid q_s^*(X), s)$, the true posterior over $Y$ given answer $q_s^*(X)$ and history $s$. The final step we used the non-negativity of the KL-divergence for getting rid of the second term. 

Since the inequality in \eqref{eq: random inequality} holds $\forall S = s,$ and mappings $f$ and $g$, we conclude that $q_s^* = g^*(s)$ and $\tilde{P}_s^* = f^* (\{q_s^*,q_s^*(X)\} \cup s)$ for any given $S = s$. \Eqref{eq: characterizing q* and p*} in the proposition is then proved by using \cref{lemma: KL to MI} to characterize $q_s^*$ and $\tilde{P}_s^*$.

\section{Training Procedure}
Consider a mini-batch of $N$ samples, $\{(x_i, y_i)\}_{i=1}^N$ from a training set.

In \ref{eq:Practical V-IP} objective, the KL-divergence is mathematically equivalent to the cross entropy loss. The mini-batch estimate of this loss can be expressed as:
\begin{equation}
\begin{aligned}\label{eq:V-IP empirical}
    \min_{\theta, \eta} \quad   & -\frac{1}{N}\sum^N_{i=1} y_i \log \hat{y}_i \\
    \text{subject to}   \quad   & q_\eta = \texttt{argmax}(g_\eta(s_i)) \\
                        \quad   & \hat{y}_i = f_\theta(s_i \cup \{q_\eta, q_\eta(x_i)\}),
\end{aligned}
\end{equation}
where $y_i$ is the ground truth label corresponding to input $x_i$.
$s_i$ is obtained by sampling for $P^J_S$ as defined in \S\ref{subsection: V-IP with deep networks}. We optimize the above objective using Stochastic Gradient Descent (or its variants). 

To optimize objective~\ref{eq:V-IP empirical}, for every sample $x_i$ in the batch, the sampled history $s_i$ is fed into the querier network $g_\eta$ which outputs a score for every query $q \in Q$. The \texttt{argmax(.)} (see \ref{appendix: straight-through} regarding its differentiability) 
operator converges these scores into a $|Q|$-dimensional one-hot vector, with the non-zero entry at the location of the max. We then append this argmax query $q_{\eta}$ and it's answer, $(q_{\eta}, q_{\eta}(X))$ to $s_i$. The updated history $s_i \cup (q_{\eta}, q_{\eta}(x_i))$ is then fed into the classifier $f_\theta$ to obtain a softmax distribution over the labels, denoted as $\hat{y}_i$. 

\section{Experiment Details}
\label{appendix: experiment details}
All of our experiments are implemented in python using PyTorch~\citep{NEURIPS2019_9015} version 1.12. Moreover, all training is done on one computing node with 64-core 2.10GHz Intel(R) Xeon(R) Gold 6130 CPU, 8 NVIDIA GeForce RTX 2080 GPUs (each with ~10GB memory) and 377GB of RAM. 

\myparagraph{General Optimization Scheme.} The following optimization scheme is used in all experiments for both Initial Random Sampling and Subsequent Biased Sampling, unless stated otherwise. We minimize the \ref{eq:Practical V-IP} objective using Adam~\citep{kingma2014adam} as our optimizer, with learning rate \texttt{lr=1e-4}, \texttt{betas=(0.9, 0.999)}, \texttt{weight\_decay=0} and \texttt{amdgrad=True}~\citep{reddi2019convergence}. We also use Cosine Annealing learning rate scheduler~\citep{loshchilov2016sgdr} with \texttt{T\_max=50}. We train our networks $f_\theta$ and $g_\eta$ for 500 epochs using batch size 128. In both sampling stages, we linearly anneal temperature parameter $\tau$, in our straight-through softmax estimator, from 1.0 to 0.2 over the 500 epochs.

\myparagraph{Training Details for RAM and RAM+.} For train the RL policy and classification network we use the popular PPO algorithm \citep{schulman2017proximal} with entropy regularization ($0.01$ regularization parameter). We used in initial learning rate of $\texttt{3e-5}$ and clip value $\texttt{0.2}$.
For a fair comparison the architectures for the policy\footnote{This term is from the RL community. In our context, the policy network is exactly the querier network.} and classification networks are kept the same for RAM, RAM+ and V-IP.

\subsection{Species Classification on CUB.}
\label{appendix: subsection cub}
\myparagraph{Dataset and Query Set.} Caltech-UCSD Birds-200-201 (CUB-200)~\citep{wah2011caltech} is a dataset of 200 bird species, containing 11,788 images with 312 annotated binary features for different visual attributes of birds, such as the color or shape of the wing or the head of the bird. We construct the query set $Q$ using these attributes. For example, given an image for a Blue-jay a possible query might be ``is the back-colour blue?" and the answer ``Yes".

The query set construction and data preprocessing steps as same as  \cite{chattopadhyay2022interpretable}: In the original dataset, each annotated attribution for each image is too noisy, containing often imprecise description of the bird. Hence, if a certain attribute is true/false for over 50\% of the samples in a given category, then that attribute is true/false for all samples in the same category. We also train a CNN to answer each query using the training set annotations called the concept network. This concept network provides answers for training all the methods compared in \S\ref{section: Experiments}, namely, RAM, RAM+, G-IP and V-IP. Last but not least, our query set $Q$ consists of $312$ queries that ask whether each of the binary attributes is $1$ for present or $-1$ for absent.

\myparagraph{Architecture and Training}
A diagram of the architectures is shown in Figure~\ref{fig:arch-cub}. Both $f_\theta$ and $g_\eta$ have the same full-connected network architecture (except the last linear layer), but they do not share any parameters with each. We initialize each architecture randomly, and train using the optimization scheme mentioned in the beginning of this section. The input history is a $|Q|$-dimensional vector with unobserved answers masked out with zeros.

\myparagraph{Updating the History.}
Let the history of query-answer pairs observed after $k$ steps be denoted as $S_k$. $S_k$ is represented by a masked vector of dimension 312, and $q_{k+1} = \texttt{argmax}(g_\eta(S_k))$\footnote{recall $g_\eta$ is our querier function parameterized by a deep network with weights $\eta$.} is a one-hot vector of the same dimension, denoting the next query. For a given observation $x^{\textrm{obs}}$, we update $S_k$ using $q_{k+1}$ as follows: 
\begin{itemize}
    \item We obtain the query-answer by performing a point-wise multiplication, that is,  $q_{k+1}(x^{\textrm{obs}}) = q_{k+1} \odot x^{\textrm{obs}}$.
    \item We update the history to $S_{k+1}$ by adding this query answer $q_{k+1}(x^{\textrm{obs}})$ to $S_k$.
\end{itemize}
 The entire process can be denoted by $S_{k+1} = S_k +  q_{k+1} \odot x^{\textrm{obs}}$.

\begin{figure}[ht]
    \begin{center}
    \begin{subfigure}[b]{0.45\textwidth}
        \centering
        \includegraphics[height=0.3\textheight]{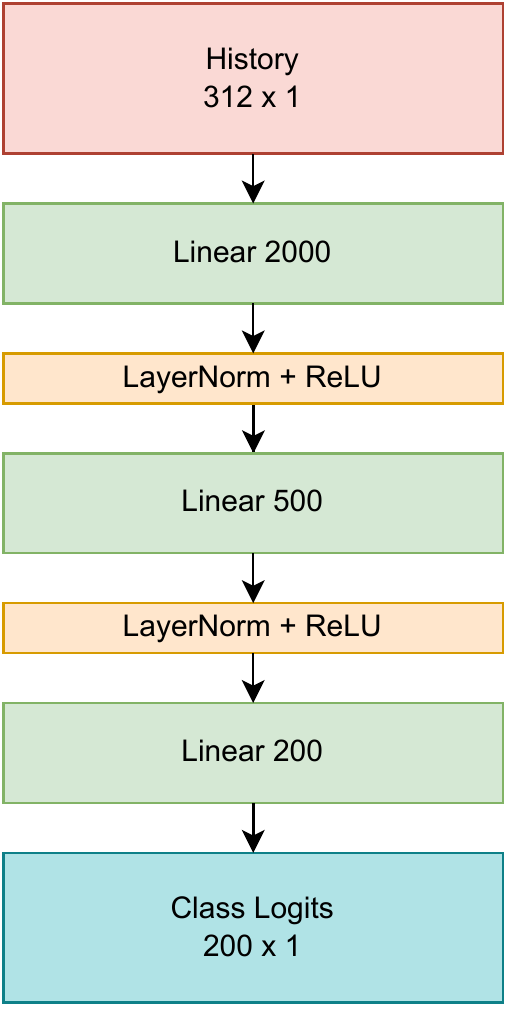}
    \end{subfigure}
    \begin{subfigure}[b]{0.25\textwidth}
        \centering
        \includegraphics[height=0.3\textheight]{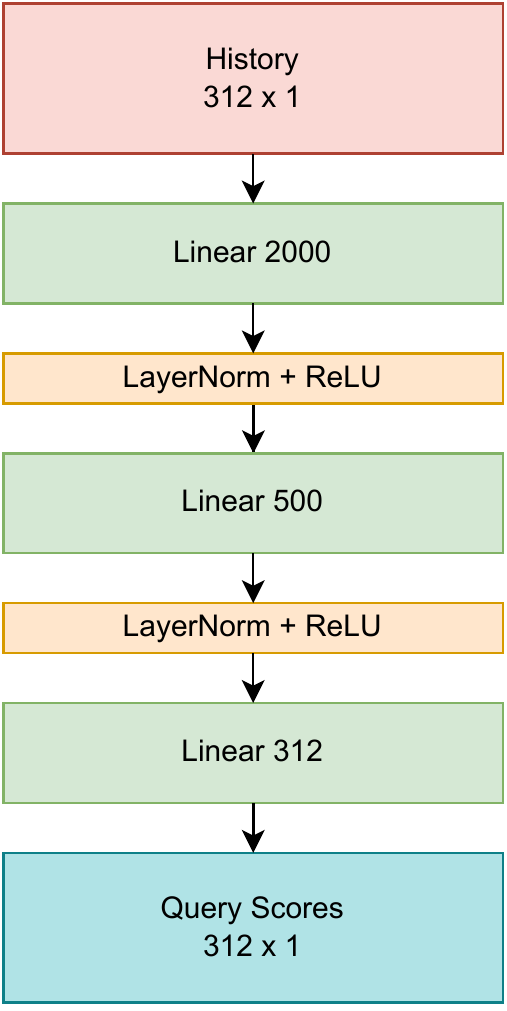}
    \end{subfigure}
    \caption{Classifier $f_\theta$ (Left) and querier $g_\eta$ (Right) architectures used for species classification on CUB.}
    \label{fig:arch-cub}
    \end{center}
\end{figure}

\subsection{Topic Identification on The Huffington Post News Category Dataset.}
\myparagraph{Dataset and Query Set.} The Huffington Post News Category dataset (HuffingtonNews) is a natural language dataset, containing news ``headlines" and their short descriptions (continuation of the headline) extracted from Huffington Post news published between 2012 and 2018. We follow the same data-processing procedure as \cite{chattopadhyay2022interpretable}: Each data point is an extended headline formed by concatenation of the headline with their short descriptions. Moreover, we also remove redundant categories, including semantically ambiguous and HuffPost-specific words such as ``Impact`` and ``Worldpost." We also remove categories with small number of articles, along with semantically equivalent category names, such as ``Arts \& Culture" versus "Culture \& Art."  After processing, there is a total of 10 categories in the dataset. In addition, only the top-1,000 words are kept according to their tf-idf scores~\citep{lavin2019analyzing}, along with semantically redundant words removed. For more details, please refer to \cite{chattopadhyay2022interpretable}. 

The query set $Q$ contains binary questions of whether one of the 1000 words exist in the headline. The query answer is $1$ if the word in question is present and $-1$ if absent. 

\myparagraph{Architecture and Training.} A diagram of the architectures is shown in Figure~\ref{fig:arch-news}. Both the classifer $f_\theta$ and the querier $g_\eta$ shares the same architecture except the last layer; However, they do not share any parameters. The inputs to $f_\theta$ and $g_\eta$ are masked vectors, with masked values set to 0. To optimize the \ref{eq:Practical V-IP} objective, we randomly randomly initialize $f_\theta$ and $g_\eta$, and train using Adam optimizer and Cosine Annealing learning rate scheduler, with the settings mentioned in the beginning of the section. During Subsequent Adaptive Sampling, we train for 100 epochs using Stochastic Gradient Descent (SGD) instead of Adam, with learning rate \texttt{lr=1e-4} and \texttt{momentum=0.9}, and Cosine Annealing learning rate scheduler, with \texttt{T\_max=100}. The input history is a $|Q|$-dimensional vector with unobserved answers masked out with zeros.

\myparagraph{Updating the History.}
The method of updating the history is equivalent to that for CUB as mentioned in \S \ref{appendix: subsection cub}. The history is now a masked vector of dimension 1000, since there are 1000 queries in our query set for this dataset.

\begin{figure}[ht]
    \begin{center}
    \begin{subfigure}[b]{0.45\textwidth}
        \centering
        \includegraphics[height=0.4\textheight]{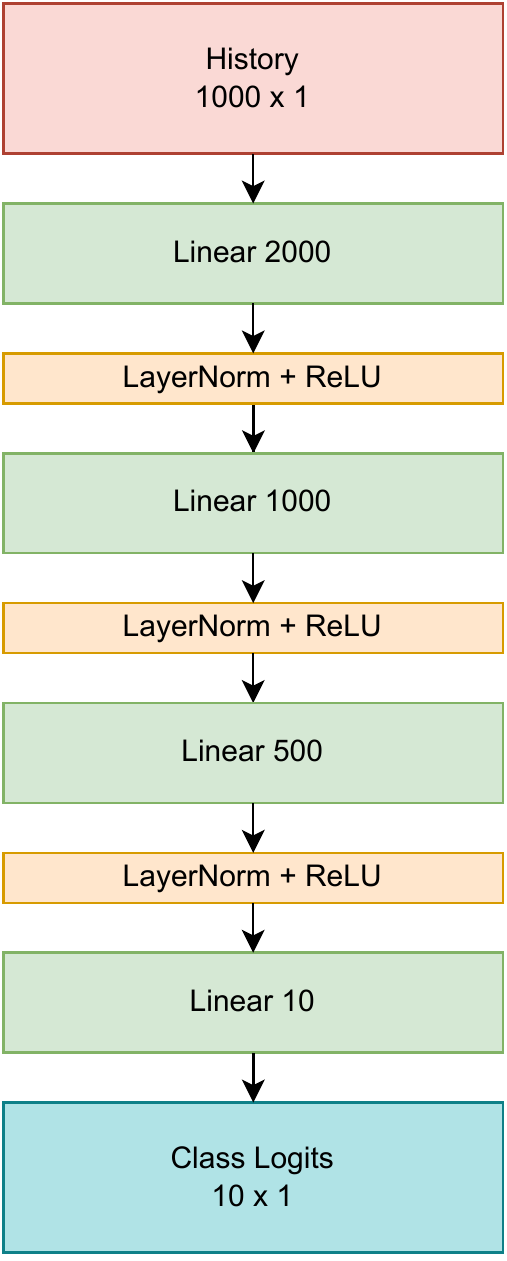}
    \end{subfigure}
    \begin{subfigure}[b]{0.4\textwidth}
        \centering
        \includegraphics[height=0.4\textheight]{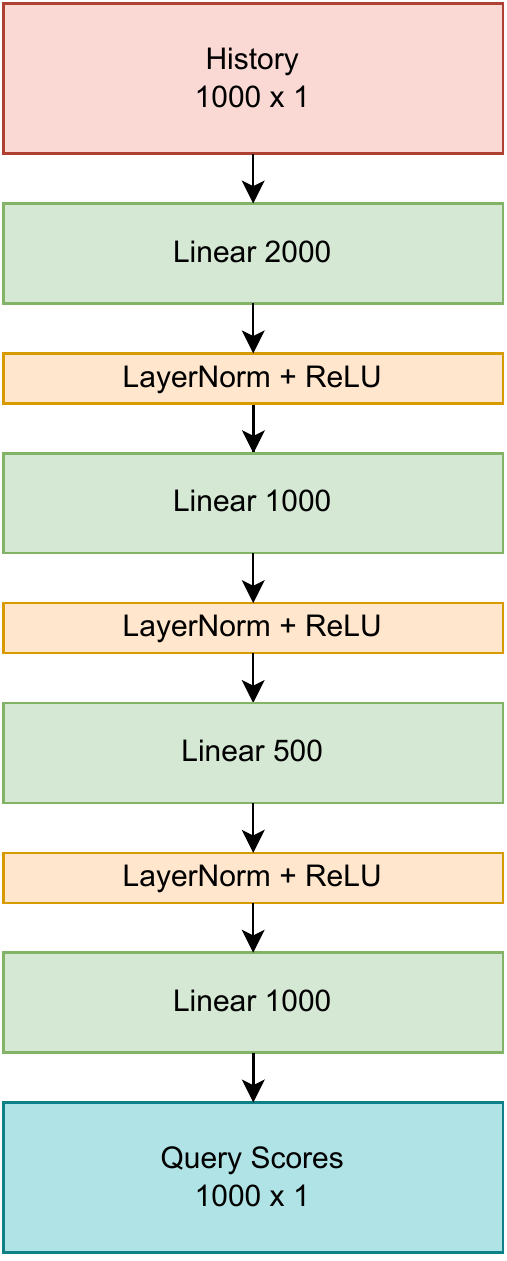}
    \end{subfigure}
    \caption{Classifier $f_\theta$ (Left) and querier $g_\eta$ (Right) architectures used for topic identification on Huffington Post News Category dataset.}
    \label{fig:arch-news}
    \end{center}
\end{figure}

\subsection{Image Classification on MNIST, KMNIST and Fashion-MNIST.}
\label{sec:appendix-mnist}

Each setting mentioned in the following is the same for MNIST, KMNIST and Fashing-MNIST unless stated otherwise. 

\myparagraph{Dataset and Query Set.}
MNIST~\citep{lecun1998gradient}, KMNIST~\citep{clanuwat2018deep}, Fashion-MNIST~\citep{xiao2017fashion} are gray-scale hand-written digit datasets, each containing 60,000 training images and 10,000 testing images of size $28 \times 28$. We follow \cite{chattopadhyay2022interpretable} for data pre-processing procedure and the design of our query sets of all three datasets. 

Each gray-scale image is converted into a binary image. For MNIST and KMNIST, we round values greater or equal than 0.5 up to 1 and round values below 0.5 down to -1. For Fashion-MNIST, we round values greater or equal up to 0.1 to 1 and round values below 0.1 down to -1.

Each query set $Q$ contains all overlapping $3 \times 3$ patches over the $28 \times 28$ pixel space, resulting in $676$ queries. 
Each query answer indicates the $9$ pixel intensities at the queried patch. The inputs to the classifier $f_\theta$ and the querier $g_\theta$ are masked images, with masked pixels zeroed out if they are not part of the current History.

\myparagraph{Updating the History.}
The method for updating the history is equivalent to that for CUB as mentioned in \S \ref{appendix: subsection cub} with some differences as we will describe next. For a given observation $x^{\textrm{obs}}$, we update $S_k$ using $q_{k+1}$ as follows:
\begin{itemize}
    \item We reshape $q_{k+1}$ from a vector of dimension 676 (the number of queries) to a 2D grid of dimension $26 \times 26$, denoted by $\hat{q}_{k+1}$.
    \item $\hat{q}_{k+1}$ is then converted to a binary matrix with 1s at the location corresponding to the queried $3 \times 3$ patch and 0s everywhere else via a convolutation operation with a kernel of all 1s of size $3 \times 3$, stride 1 and padding 2.
    \item We then obtain the query-answer by performing a hadamard product of the convolved output (in the previous step) with $x^{\textrm{obs}}$. This results in a masked image, $\hat{q}_{k+1}(x^{\textrm{obs}})$, with the queried patch revealed and 0 everywhere else.
    \item Finally, we update the history to $S_{k+1}$ by adding this query-answer to $S_k$. To account for pixels observed (unmsaked) in $\hat{q}_{k+1}(x^{\textrm{obs}})$ that overlap with the history $S_k$, we clip the values in $S_k$ to lie between $-1$ and $1$.
\end{itemize}
The entire process can be summarized as, 
\[S_{k+1} = \texttt{Clip}\left(S_k + (\texttt{Conv2D}(\hat{q}_{k+1}) \odot x^{\textrm{obs}}), \texttt{ minval} = -1, \texttt{ maxval} = 1\right)\].

\myparagraph{Architecture and Training.}
Refer to Figure~\ref{fig:arch-mnist} for a diagram of the architecture for the classifier $f_\theta$ and the querier $g_\eta$. Every \texttt{Conv} is a 2D Convolution with a $3 \times 3$ kernel, stride 1 and padding 1. Moreover, every \texttt{MaxPool} is a 2D max pooling operator with a $2 \times 2$ kernel. We initialize each architecture from random initialization, and train our networks using Adam as our optimizer and Cosine Annealing learning rate scheduler, with the same settings mentioned in the beginning of this section.

\begin{figure}[ht]
    \begin{center}
    \begin{subfigure}[b]{0.3\textwidth}
        \centering
        \includegraphics[height=0.50\textheight]{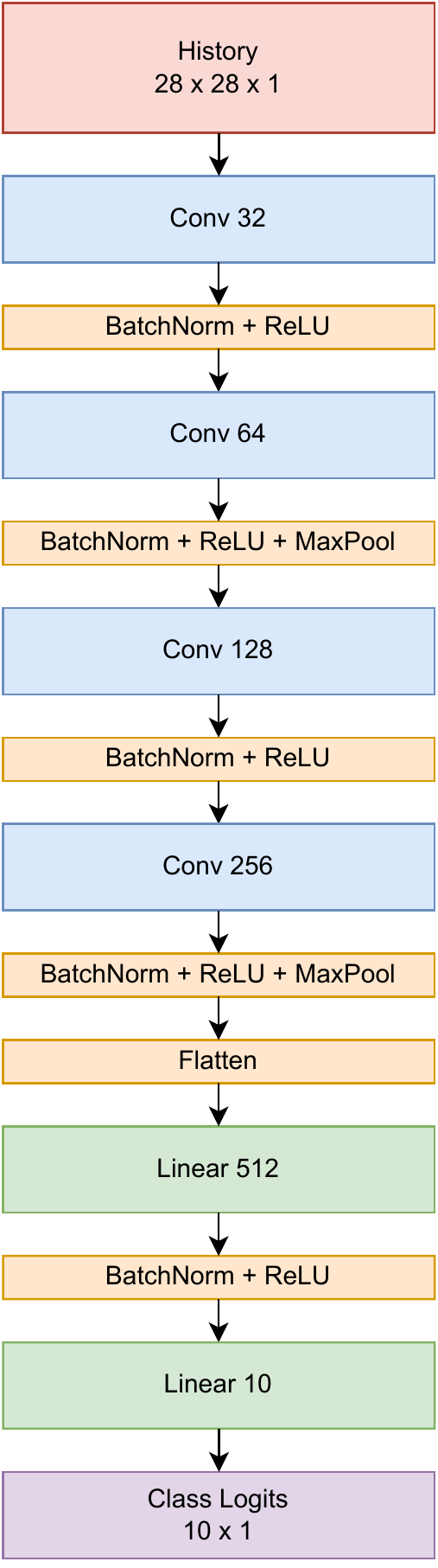}
        \caption{Classifier $f_\theta$}
        \label{fig:arch-mnist-classifier}
    \end{subfigure}
    \begin{subfigure}[b]{0.6\textwidth}
        \centering
        \includegraphics[height=0.50\textheight]{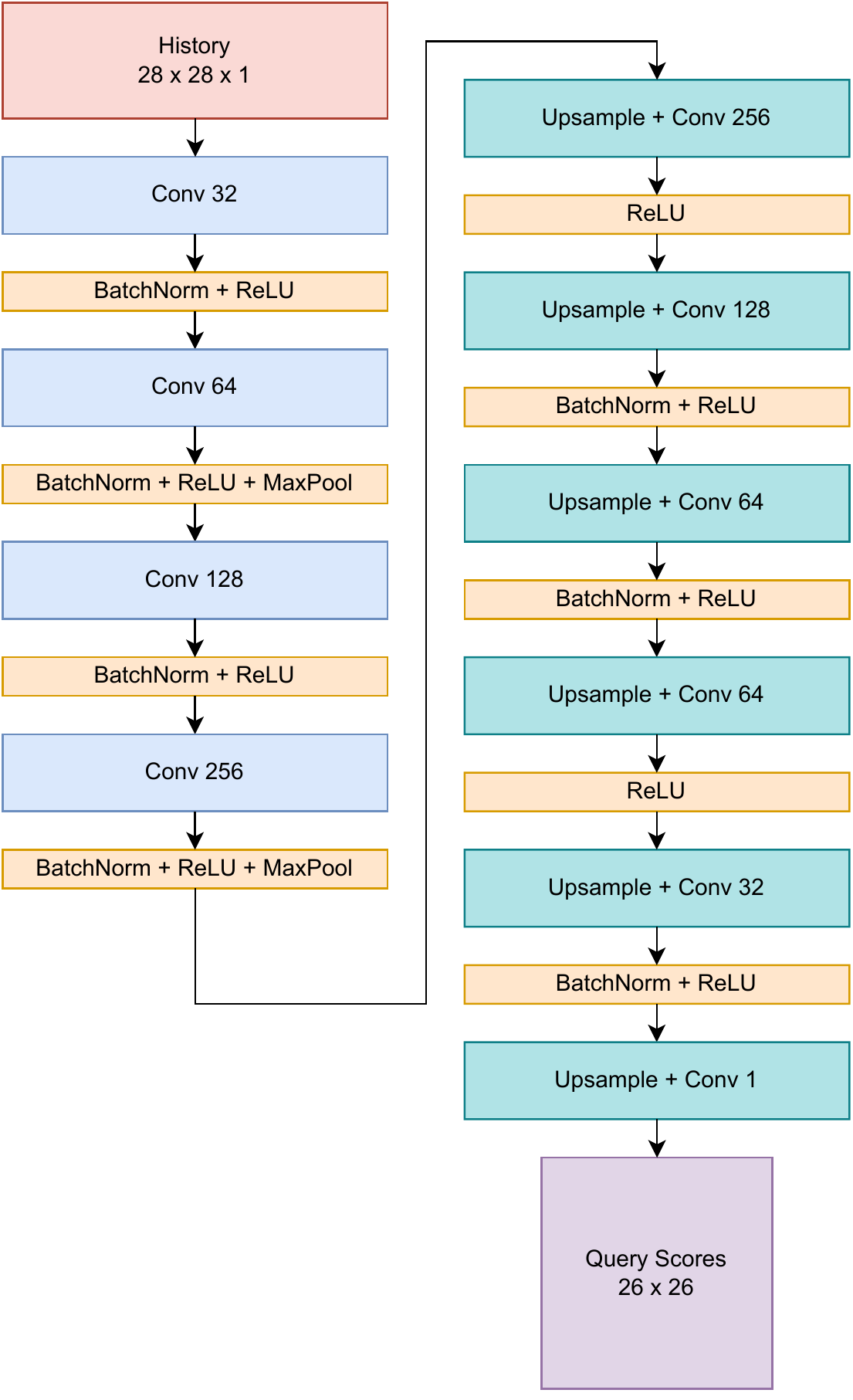}
        \caption{Querier $g_\eta$}
        \label{fig:arch-mnist-querier}
    \end{subfigure}
    \caption{Architectures used for image classification on MNIST, KMNIST, and Fashion-MNIST.}
    \label{fig:arch-mnist}
    \end{center}
\end{figure}

\subsection{Medical Diagnosis on  SymCAT, MuZhi and Dxy}
\myparagraph{Dataset and Query Set.} MuZhi~\citep{wei2018task} and Dxy~\citep{xu2019end} are two real-world medical datasets containing symptoms and diseases extracted from Chinese healthcare websites (\url{https://muzhi.baidu.com/} and \url{https://dxy.com/}), where doctors provide online help based on patient's self-report symptoms and online conversations. We follow the same data processing procedure as \cite{he2022bsoda}. MuZhi has 66 symptoms (features) and 4 diseases (classes), including children’s bronchitis, children’s functional dyspepsia, infantile diarrhea infection, and upper respiratory infection. Dxy dataset contains 41 symptoms for 5 diseases: allergic rhinitis, upper respiratory infection, pneumonia, children hand-foot-mouth disease, and pediatric diarrhea. Last but not least, our query set $Q$ consists of queries that correspond to asking questions about the presence of each symptom for the patient; the query-answer is either 1 for Yes, 0 for No and -1 for Can't Say. 

SymCAT is a synthetic medical dataset generated from a symptom checking website called SymCAT introduced by \cite{peng2018refuel}. The dataset has three versions;  SymCAT-200 contains 328 symptoms and 200 disease, SymCAT-300 contains 349 symptoms and 300 classes, and SymCAT-400 contains 355 symptoms and 400 classes. We used publically avialable version of this dataset provided by \cite{nesterov2022neuralsympcheck} at \url{https://github.com/SympCheck/NeuralSymptomChecker}.
Our query set $Q$ consists of queries that correspond to asking questions about the presence/absence of each symptom for the patient; the query-answer is either 1 for Yes and 0 for No.  

\myparagraph{Architecture and Training.} A diagram of the architecture is shown in Figure~\ref{fig:arch_set}. We used the set architecture proposed in \cite{ma2018eddi}. We made this choice for a fair comparison with \cite{he2022bsoda} which also used to same architecture for their partial-VAEs. The input to the network is a concatenation of the query-answers $\{\texttt{q(x\_j)}: \texttt{q} \in Q\}$, trainable positional embeddings $\texttt{e\_j}$ (red blocks), and bias terms $\texttt{b\_j}$ (blue blocks). Each positional embedding is also multiplied by the query-answer. After the first linear layer, the intermediate embedding is multiplied by a query-answer mask derived from the history, which each dimension has a value of 1 for query selected in the history and 0 otherwise. To optimize our objective, we randomly initialize $f_\theta$ and $g_\theta$ and train using the same optimizer and learning rate scheduler setting as mentioned in the beginning of the section. However, we train our algorithms for only 200 epochs, and linearly anneal the straight-through softmax estimator's temperature $\tau$ from 1.0 to 0.2 over the first 50 epochs.

\myparagraph{Updating the History.}
Let the history of query-answer pairs observed after $k$ steps be denoted as $S_k$. Since we used set architectures as our querier and classifier networks for these datasets, as proposed in \cite{ma2018eddi}, $S_k$ is represented as a set consisting of embeddings of query-answer pairs observed so far. The next query, $q_{k+1} = \texttt{argmax}(g_\eta(S_k))$ is a one-hot vector of dimension equal to the size of the query set used. For a given observation $x^{\textrm{obs}}$, we update $S_k$ using $q_{k+1}$ as follows: 
\begin{itemize}
    \item Let $M$ be a matrix of size $|Q| \times d$ where every row corresponds to a query-answer evaluated at $x^{\textrm{obs}}$ and $d$ is the size the embeddings used for representing the query-answers. We obtain the answer corresponding to the selected query $q_{{k+1}}$ by performing a matrix-vector product, that is,  $q_{k+1}(x^{\textrm{obs}}) = q_{k+1}^T M$.
    \item We update the history to $S_{k+1}$ by concatenating $q_{k+1}(x^{\textrm{obs}})$ to $S_k$.
\end{itemize}

\begin{figure}[ht]
    \centering
    \includegraphics[height=0.65\textheight]{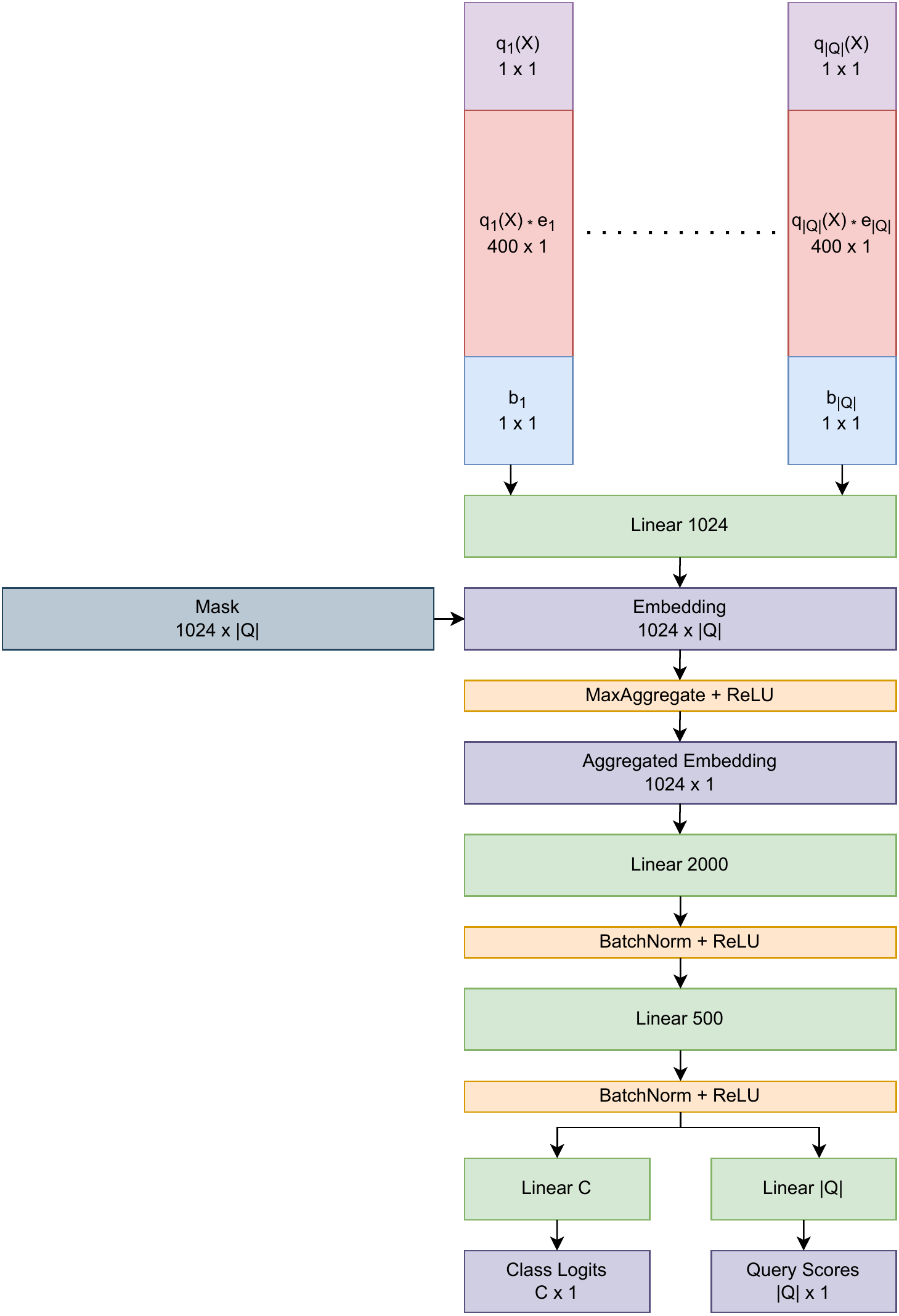}
    \caption{Architecture used for Medical Diagnosis tasks on SymCAT, MuZhi and Dxy. The classifier $f_\theta$ and querier $g_\eta$ share parameters in this architecture. $|Q|$ is the size of the query set, which also corresponds to the number of features. $C$ is the number of classes, depending on the dataset. }
    \label{fig:arch_set}
\end{figure}

\subsection{Image Classification on CIFAR-10 and CIFAR-100}
\myparagraph{Dataset and Query Set.} CIFAR-\{10,100\}~\citep{krizhevsky2009learning} are natural image datasets that contain \{10, 100\} different classes of objects. They contain 50,000 training images and 10,000 testing images. Each RGB image is of size $32 \times 32$. We design our query set $Q$ following \cite{rangrej2021probabilistic} consisting of all $8 \times 8$ overlapping patches with stride $4$. This results in a query set size $|Q|$ of $49$. The inputs to the classifier $f_\theta$ and the querier $g_\eta$ are full-sized $3 \times 32 \times 32$ masked image, with masked pixels zeroed out if they are not part of the current History.

\myparagraph{Architecture and Training.}
For CIFAR-10, the architectures used for the classifier $f_\theta$ and querier $g_\eta$ are both Deep Layer Aggregation Network (DLA)~\citep{yu2018deep}. $f_\theta$ and $g_\eta$ do not share any parameters with each other. An out-of-the-box implementation was used and can be found here: \url{https://github.com/kuangliu/pytorch-cifar/blob/master/models/dla.py}. For CIFAR-100, the architectures used for the classifier $f_\theta$ and querier $g_\eta$ are both DenseNet169~\citep{huang2017densely}. $f_\theta$ and $g_\eta$ do not share any parameters with each other. An out-of-the-box implementation was used and can be found here: \url{https://github.com/kuangliu/pytorch-cifar/blob/master/models/densenet.py}. The only change made to the architectures is the last layer, which the dimensions depend on the number of classes or the size of the query set $Q$. 

During training, we follow standard data processing techniques as in \cite{he2016deep}. In CIFAR-10, we set batch size 128 for both initial and subsequent sampling stages. In CIFAR-100, we set batch size 64 for both initial and subsequent sampling stages. For both CIFAR-10 and CIFAR-100, we randomly initialize $f_\theta$ and $g_\eta$, and train them for 500 epochs during Initial Random Sampling using optimizer Adam and Cosine Annealing learning rate scheduler, with settings mentioned above. During Subsequent Adaptive Sampling, we optimize using Stochastic Gradient Descent (SGD), with learning rate \texttt{lr=0.01} and \texttt{momentum=0.9} and Cosine Annealing learning rate scheduler, with \texttt{T\_max=50}, for 100 epochs.

\myparagraph{Updating the History.}
The method of updating the history is similar to that for MNIST, KMNIST, and Fashion-MNIST, as mentioned in \S~\ref{sec:appendix-mnist}. For a given observation $x^{\textrm{obs}}$, we update the history $S_k$ using $q_{k+1}$ as follows:
\begin{itemize}
    \item We reshape $q_{k+1}$ from a vector of dimension 49 (the number of queries) to a 2D grid of dimension $7 \times 7$, denoted by $\hat{q}_{k+1}$.
    \item $\hat{q}_{k+1}$ is then converted to a binary matrix with 1s at the location corresponding to the queried $8 \times 8$ patch and 0s everywhere else via a 2D transposed convolutation operation with a kernel of all 1s of size $8 \times 8$, stride 4 and no padding.
    \item We then obtain the query-answer by performing a hadamard product of the convolved output (in the previous step) with $x^{\textrm{obs}}$. This results in a masked image, $\hat{q}_{k+1}(x^{\textrm{obs}})$, with the queried patch revealed and 0 everywhere else.
    \item Finally, we update the history to $S_{k+1}$ by adding this query-answer to $S_k$. Any pixel $ij$ that is observed (unmasked) in $\hat{q}_{k+1}(x^{\textrm{obs}})$ and is also observed in the history $S_k$ is handled by the transformation $S_{k+1}[i,j] \rightarrow S_k[i,j]$ if $S_{k+1}[i,j] = 2S_{k}[i,j]$. 
\end{itemize}
The entire process can be summarized as, 
\[S'_{k+1} = S_k + \texttt{TransposedConv2D}(\hat{q}_{k+1}) \odot x^{\textrm{obs}}.\]
\begin{equation}
    \begin{aligned}
        \quad \quad \quad \quad \forall (i, j) \in \{\textrm{Number of pixels in image}\}\\
         S_{k+1}[i,j] &= S_{k}[i, j] \quad \quad \textrm{if } S'_{k+1}[i,j] = 2S_{k}[i,j] \\
         S_{k+1}[i,j] &= S'_{k+1}[i, j] \quad \textrm{otherwise} 
    \end{aligned}
\end{equation}

\section{Straight-Through Softmax Estimator}
\label{appendix: straight-through}
As mentioned in \S\ref{subsection: V-IP with deep networks}, in the main text, we employ the straight-through softmax gradient estimator for differentiating through the $\texttt{argmax}$ operation. We will now describe this estimator in detail. Consider the following optimization problem,
\begin{equation}
\begin{aligned}
    \min_{\theta \in \R^d}f(\texttt{argmax}(\theta)) \\
\end{aligned}
\label{eq: toy example}
\end{equation}

Let $Z := \texttt{argmax}(\theta)$. We will assume $f$ is differentiable in its input. The straight-through softmax estimator of the gradient of $f$ w.r.t $\theta$ is defined as,
\[ \nabla^{ST}_{\theta}f := \frac{\partial f}{\partial Z}\frac{d \textrm{softmax}_{\tau}(\theta)}{d\theta},\]

where $\textrm{softmax}_{\tau}(\theta) := \left[\frac{e^{\frac{\theta_1}{\tau}}}{\sum_{i=1}^d e^{\frac{\theta_i}{\tau}}} \; \frac{e^{\frac{\theta_2}{\tau}}}{\sum_{i=1}^d e^{\frac{\theta_i}{\tau}}} \; \ldots \frac{e^{\frac{\theta_d}{\tau}}}{\sum_{i=1}^d e^{\frac{\theta_i}{\tau}}} \right] $ and $\tau$ is the temperature parameter. Notice that $\lim_{\tau \rightarrow 0} \textrm{softmax}_{\tau}(\theta) \rightarrow \texttt{argmax}(\theta)$. Thus, we replace the gradient of the argmax operation which is either $0$ almost everywhere or doesn't exist with a surrogate biased estimate. 

\Eqref{eq: toy example} can then be optimized using the straight-through estimator by iteratively carrying out the following operation,
\[ \theta = \theta - \eta \nabla^{ST}_{\theta}f,\]
where $\eta$ is the learning rate. 

In our experiments we start with $\tau = 1.0$ and linearly anneal it down to $0.2$ over the course of training. 

\section{Ablation Studies}
\label{appendix: ablation studies}

In \S\ref{subsection: V-IP with deep networks} we discussed two possible architectures for operating on histories of arbitrary sequence lengths; the set-based architectures (as in Figure \ref{fig:arch_set}) and the fixed-sized input masking based architectures (as in Figure \ref{fig:arch-cub}). In Figure \ref{fig:roc-cub-set-vs-mask}, we compare the two architectures on the same task of bird species identification using the same query set (see Table \ref{table:query_set_description}). We see that the fixed-sized input masking based architecture performs better in terms of avg. number of queries needed to get the same performance. Based on this observation, we use the latter architecture in all our experiments except on the medical datasets, where we used the set-based architecture for fair comparison with the BSODA method which uses a similar set-based architecture for their partial-VAEs. 

In \S\ref{subsection: V-IP with deep networks} we discussed that a sampling distribution $P_S$ that assigns a positive mass to every element in $\bar{\sK}$ would make learning a good querier function challenging since the network has to learn over an exponentially (in size of Q) large number of histories. Instead we proposed to sequential bias the sampling according to our current estimate of the querier function. We validate the usefulness of this biased sampling strategy in Figure \ref{fig: ablation for biased sampling}. In most datasets we observe that biasing the sampling distribution for $S$ helps learn better querying strategies (in terms of accuracy vs. explanation length trade-offs).

Notice that in most datasets, biased sampling without the initial random sampling ultimately ends up learning a slightly better strategy (in terms of accuracy vs. explanation length trade-offs). However, since biased sampling requires multiple forward passes through our querier network to generate histories, it is much slower than the initial random sampling (IRS) scheme for $P_S$. Thus, when computational resources are not a concern one could do away with the initial random sampling but under a computational budget, the random sampling allows for finding a quick solution which can then be finetuned using  biased sampling. This finetuning will potentially require fewer epochs to get good performance than training using biased sampling from scratch.

\begin{figure}
    \centering
    \includegraphics[width=0.5\textwidth]{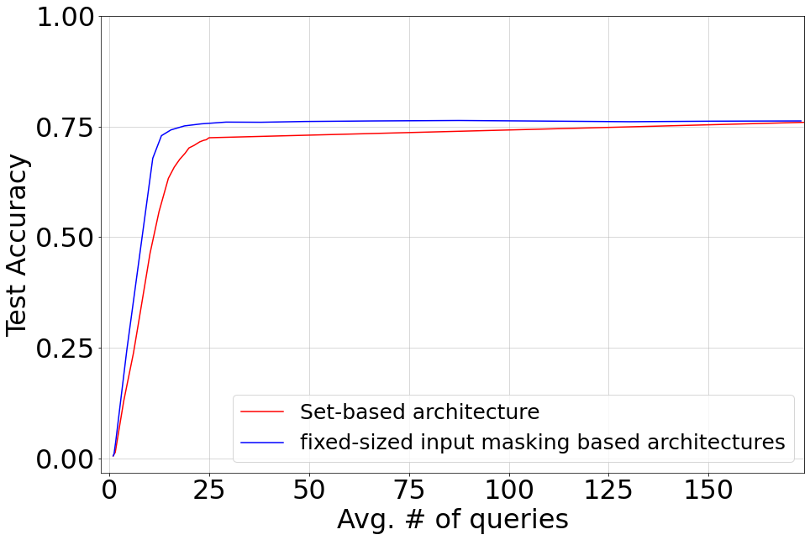}
    \caption{Comparison of accuracy v/s explanation length (avg. number of queries) curves for the set-based and fixed-sized input masking based architectures on the CUB-200 dataset.}
    \label{fig:roc-cub-set-vs-mask}
\end{figure}

\begin{figure}[t]
    \begin{subfigure}[b]{0.33\textwidth}
        \centering
        \includegraphics[width=\textwidth]{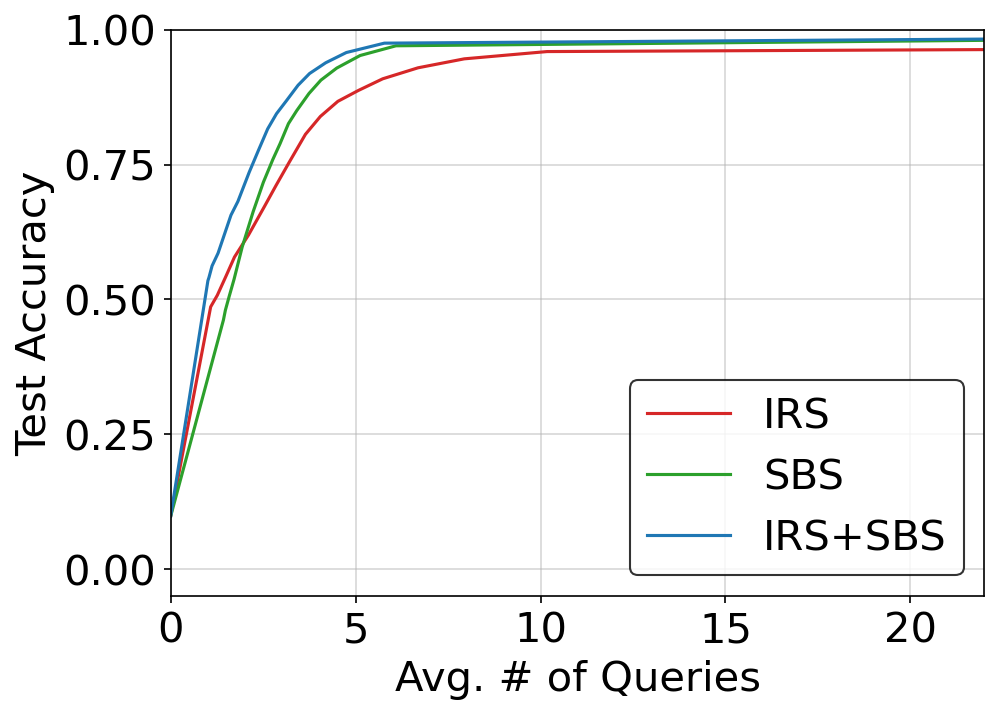}
        \caption{{MNIST}}
    \end{subfigure}
    \begin{subfigure}[b]{0.33\textwidth}
        \centering
        \includegraphics[width=\textwidth]{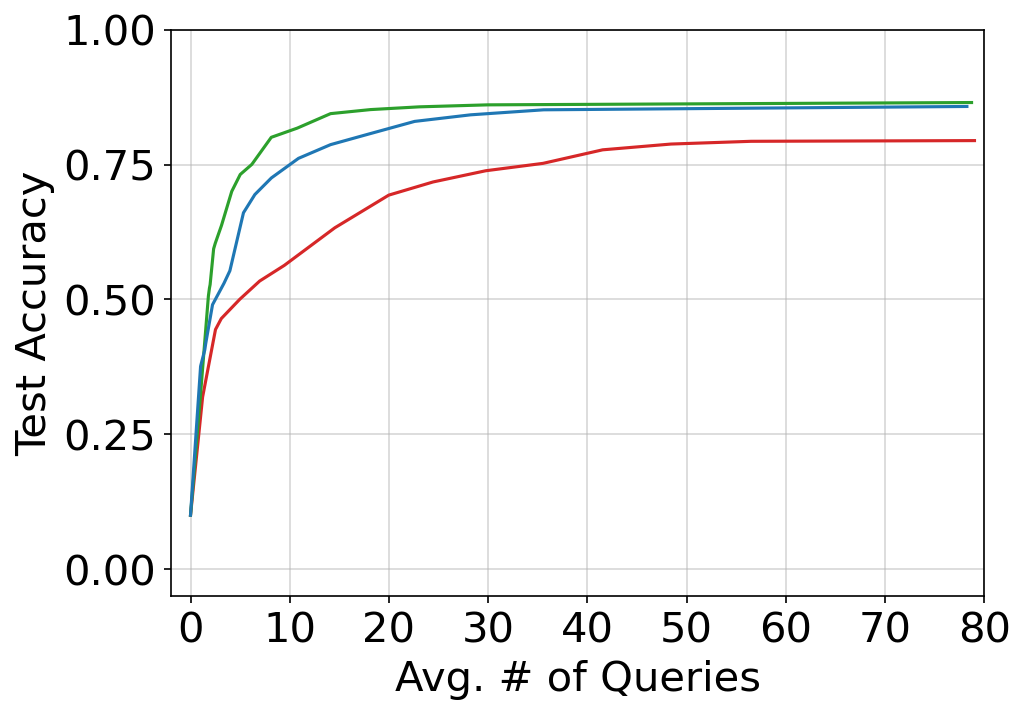}
        \caption{{Fashion-MNIST}}
    \end{subfigure}
    \begin{subfigure}[b]{0.33\textwidth}
        \centering
        \includegraphics[width=\textwidth]{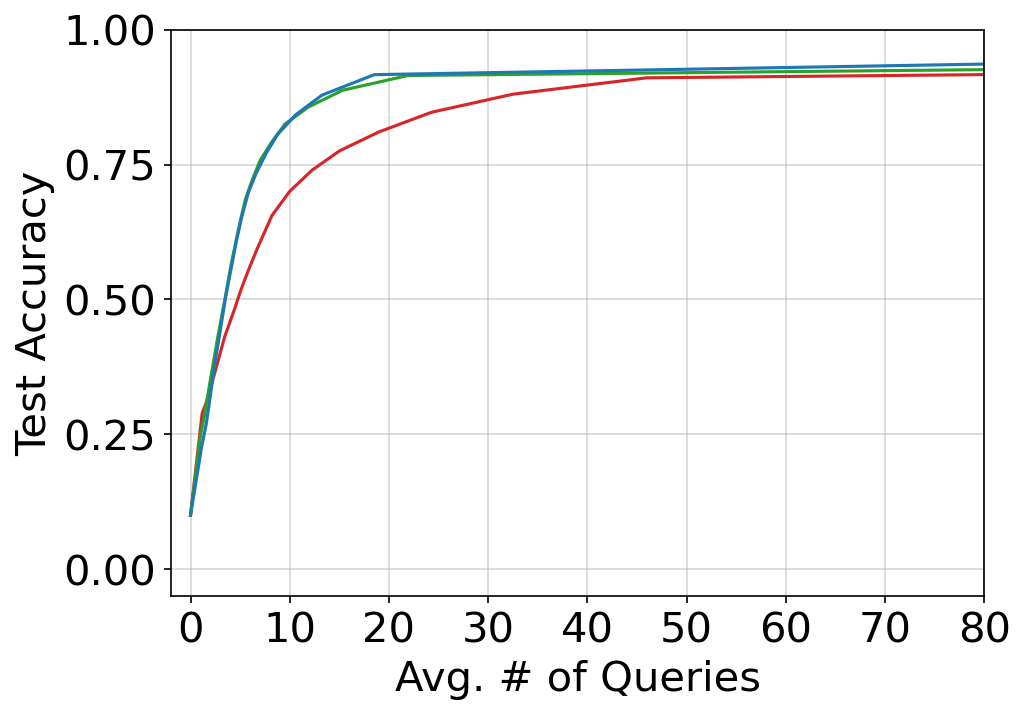}
        \caption{{KMNIST}}
    \end{subfigure}
    \begin{subfigure}[b]{0.33\textwidth}
        \centering
        \includegraphics[width=\textwidth]{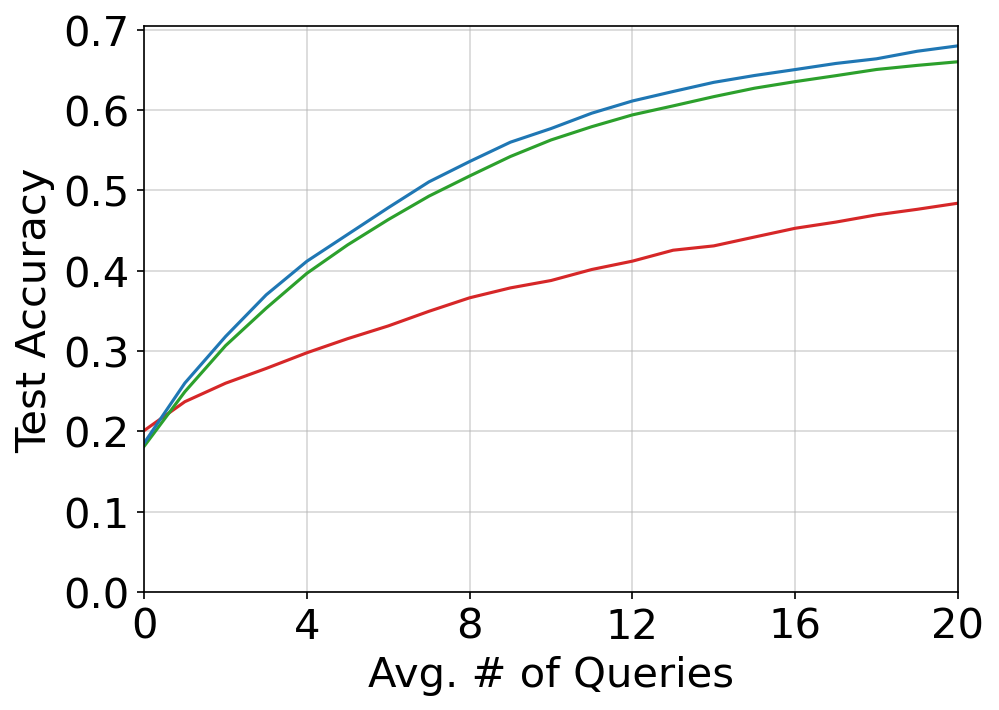}
        \caption{{symCAT-200}}
    \end{subfigure}
    \begin{subfigure}[b]{0.33\textwidth}
        \centering
        \includegraphics[width=\textwidth]{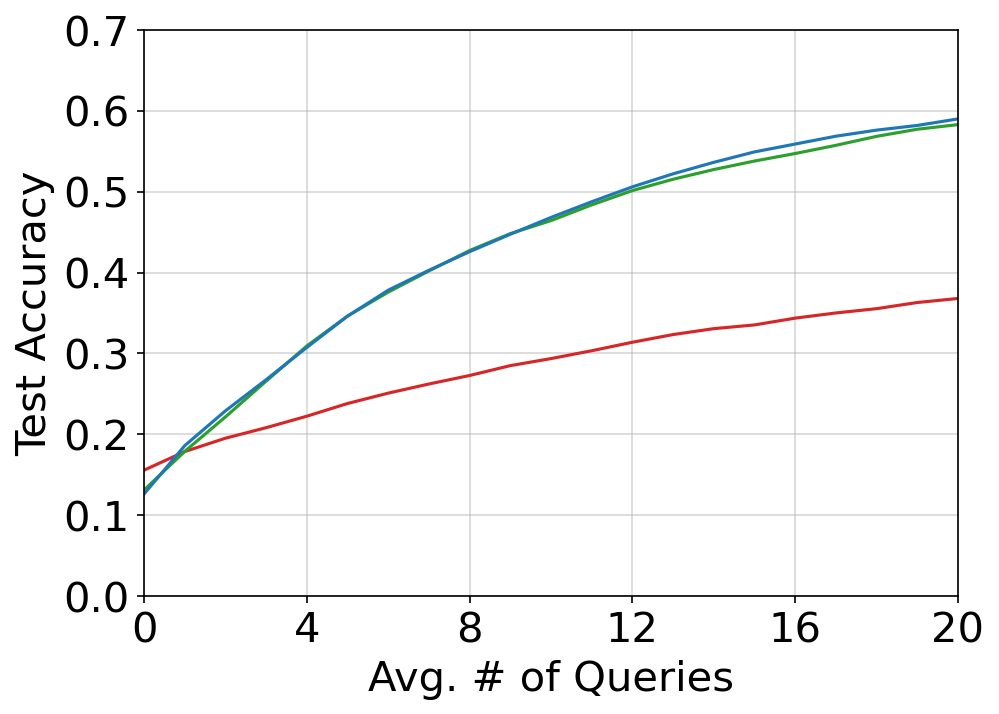}
        \caption{{symCAT-300}}
    \end{subfigure}
    \begin{subfigure}[b]{0.33\textwidth}
        \centering
        \includegraphics[width=\textwidth]{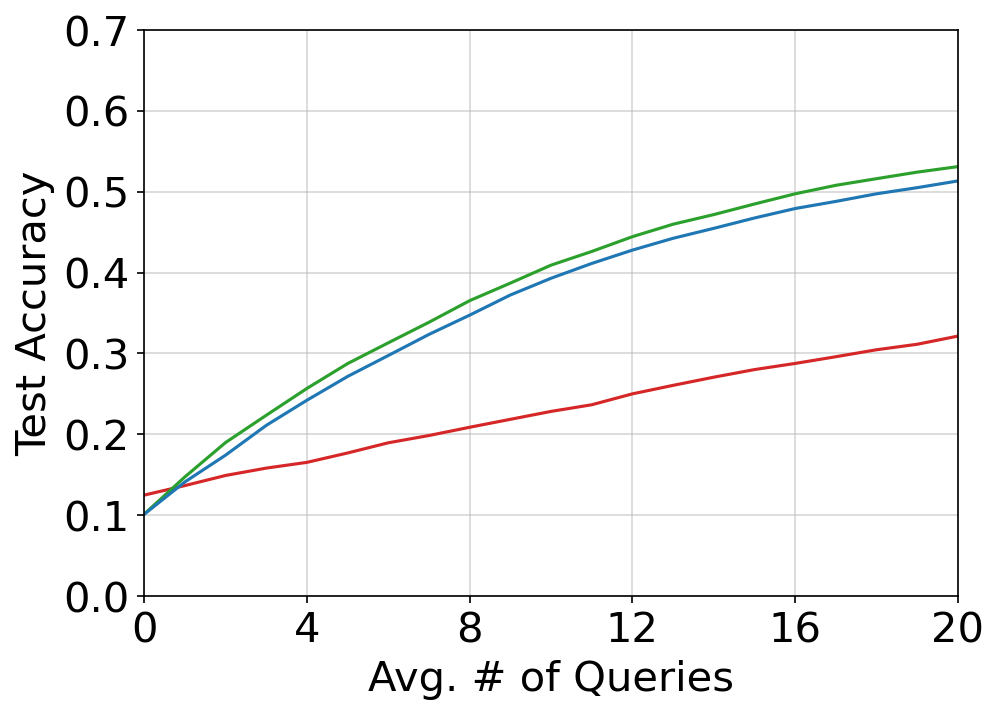}
        \caption{{symCAT-400}}
    \end{subfigure}

    \begin{subfigure}[b]{0.33\textwidth}
        \centering
        \includegraphics[width=\textwidth]{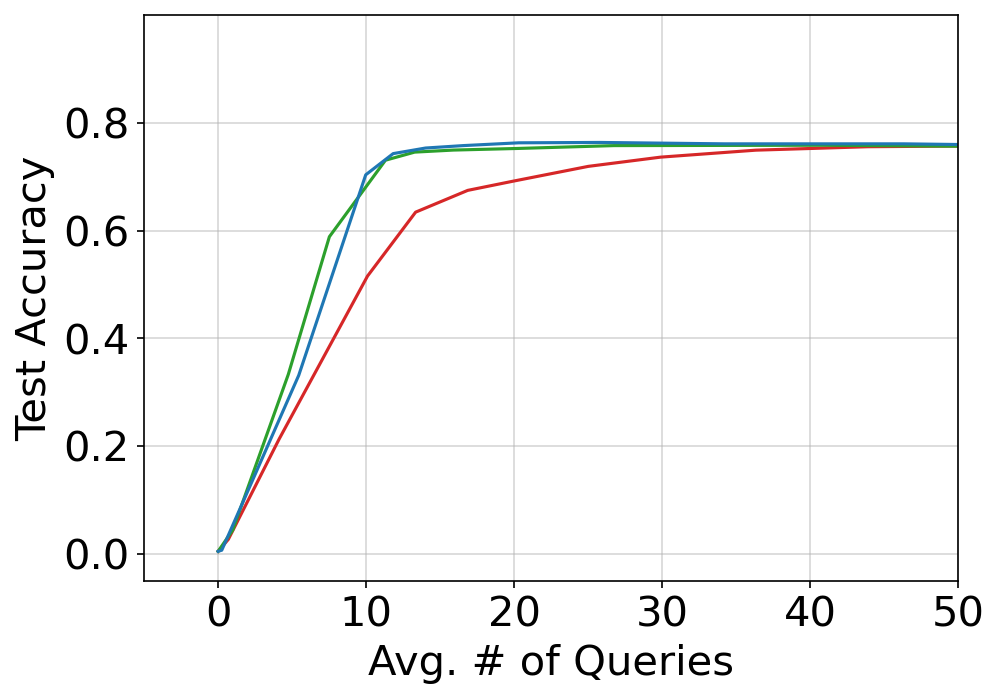}
        \caption{{CUB}}
    \end{subfigure}
    \begin{subfigure}[b]{0.33\textwidth}
        \centering
        \includegraphics[width=\textwidth]{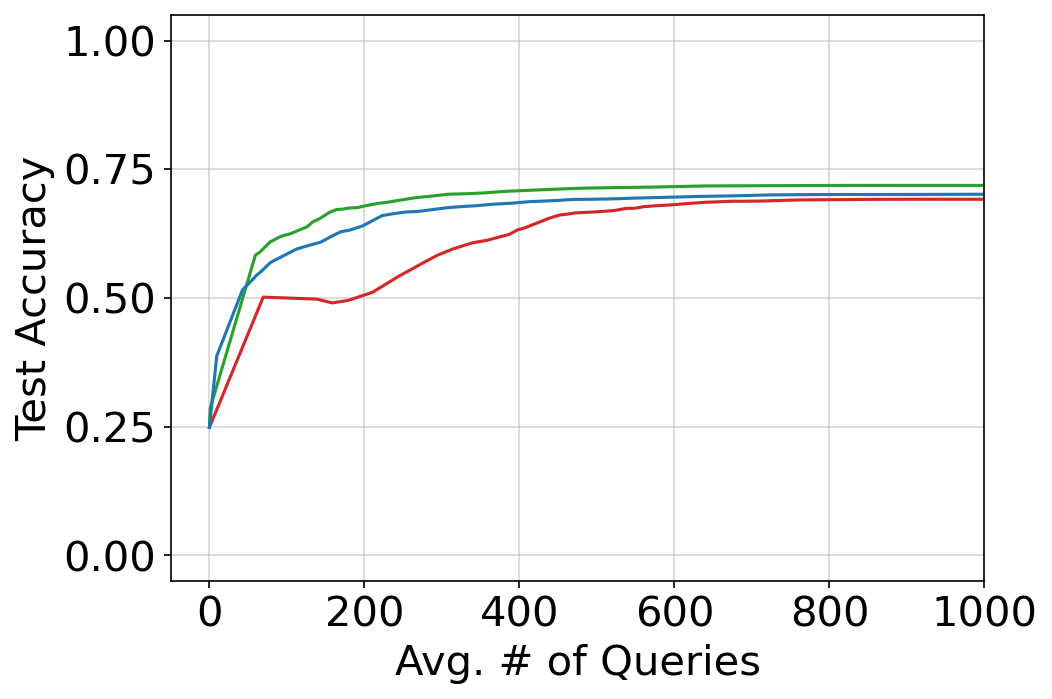}
        \caption{{HuffingtonNews}}
    \end{subfigure}
    \begin{subfigure}[b]{0.33\textwidth}
        \centering
        \includegraphics[width=\textwidth]{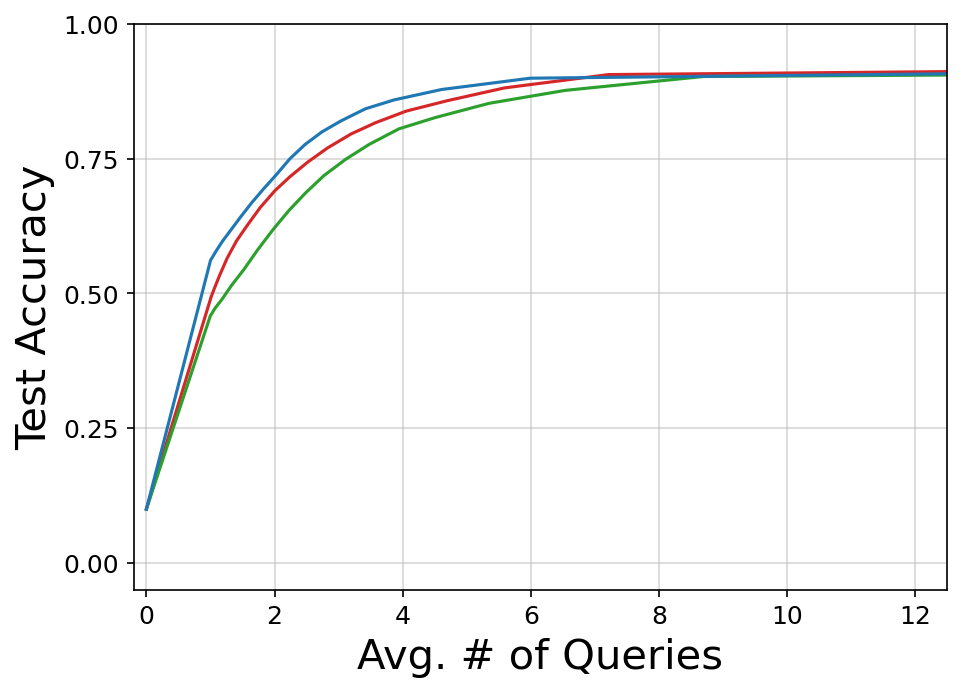}
        \caption{CIFAR-10}
    \end{subfigure}
    \caption{Accuracy v/s Explanation length (avg. number of queries) curves for different datasets with and without the successive biased sampling for history $S$. { We denote training with Initial Random Sampling only as $\mathrm{IRS}$; training with  Subsqeuent Biased Sampling only as $\mathrm{SBS}$; and training with IRS, followed by subsequent fine-tuning with SBS as $\mathrm{IRS+SBS}$}.}
    \label{fig: ablation for biased sampling}
\end{figure}

\section{Extended Results}
\label{appendix: extended results}
In Figure \ref{fig:roc-fashionmnist-kmnist}, we show the trade-off between accuracy and explanation length (avg. number of queries) on KMNIST and Fashion-MNIST datasets as the stopping criterion $\epsilon$ is changed. In both these datasets, the ``MAP criterion" is used as the stopping criterion. V-IP performs better than RL-based RAM and RAM+ on both these datasets. V-IP is competitive with G-IP eventually surpassing it in terms of accuracy for longer query-answer chains.

In addition, Table~\ref{table: extended AUC} shows extended results for AUC values for test accuracy versus explanation length curves for different datasets. A simplified table is shown in Table \ref{table: AUC}.

\begin{table}[h]
\centering
\caption{Extended results for Table \ref{table: AUC}. Every method, except G-IP, was repeated 5 times with a different seed. The high computational cost of G-IP (one run of inference on the MNIST test set takes a few weeks) makes it infeasible to repeat G-IP multiple times for computing standard-deviation values.}\label{table: extended AUC}
\resizebox{\linewidth}{!}{
\begin{tabular}{l|ccccc}
\toprule
{Dataset}  & Random & RAM & RAM+ & {G-IP} & {V-IP (Ours)} \\
\midrule
CUB & {0.557 $\pm$ 0.002} & {0.662 $\pm$ 0.002} & {0.695 $\pm$ 0.004}  & \textbf{0.736} & {0.716 $\pm$ 0.008}  \\
HuffingtonNews  & {0.423 $\pm$ 0.015} & {0.389 $\pm$ 0.010} & {0.431 $\pm$ 0.003} & \textbf{0.691} & {0.664 $\pm$ 0.002} \\
MNIST & {0.868 $\pm$ 0.002} & {0.916 $\pm$ 0.005} & {0.920 $\pm$ 0.007} & \textbf{0.964} & {0.956 $\pm$ 0.002}  \\
KMNIST &  {0.775 $\pm$ 0.003} & {0.832 $\pm$ 0.003} & {0.841 $\pm$ 0.002} & 0.872 & {\textbf{0.911} $\pm$ 0.008} \\
Fashion-MNIST & {0.735 $\pm$ 0.029} & {0.770 $\pm$ 0.003} & {0.804 $\pm$ 0.010} & 0.831 & {\textbf{0.849} $\pm$ 0.010} \\
\bottomrule
\end{tabular}
}
\end{table}

\begin{figure}[h]
    \centering
    \includegraphics[width=\textwidth]{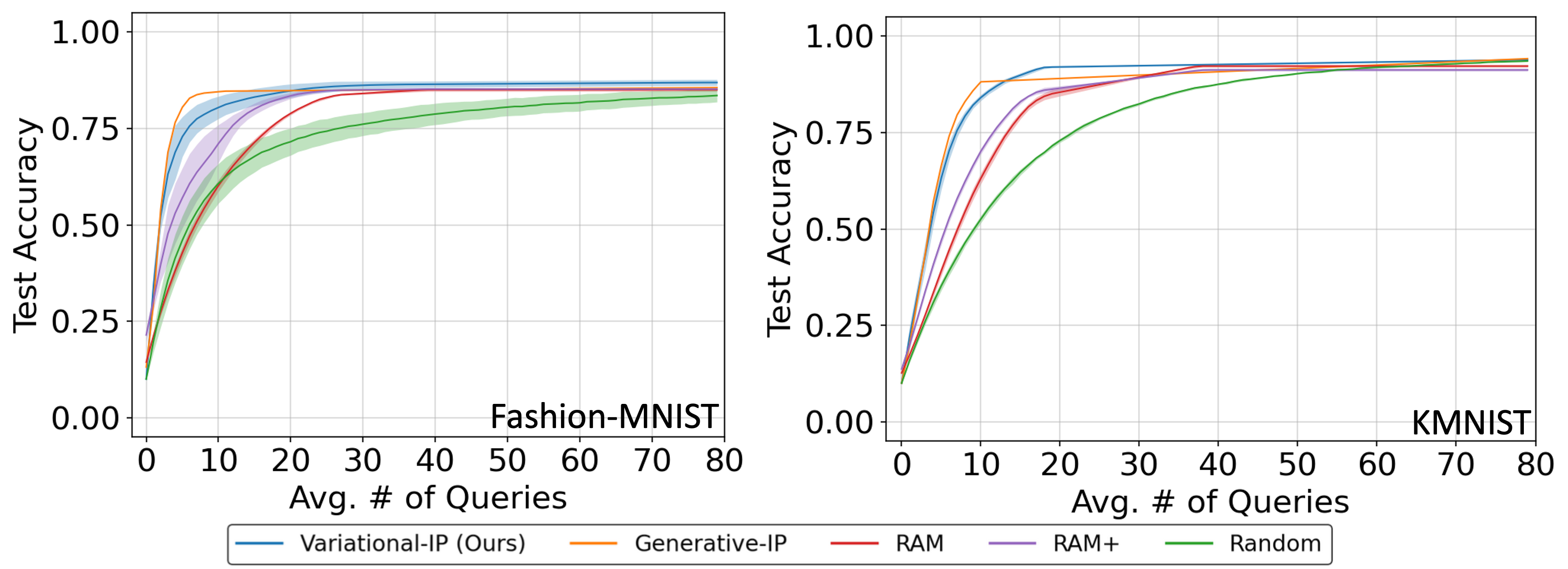}
    \caption{{{Tradeoff between accuracy and explanation length (average number of queries) for Fashion-MNIST (left) and KMNIST (right). Reported curves are averaged over 5 runs with shaded region denoting the standard-deviation.}}
    \label{fig:roc-fashionmnist-kmnist}}
\end{figure}

\section{Comparing V-IP with black-box deep networks}
\label{appendix: Comparing V-IP with black-box deep networks}
An important aspect of the framework introduced by \cite{chattopadhyay2022interpretable} is that the end-user defines queries that are interpretable to them. Given this set of queries, $Q$, V-IP learns to efficiently compose them into concise explanations for model predictions (in terms of
query-answer chains). This begs the question, how much do we loose in terms of performance by constructing an interpretable $Q$? In Table \ref{table: VIP vs black box performance}, we report results studying this question.

``Acc. w/ V-IP given $\epsilon$" is the test accuracy obtained by V-IP after termination using our stopping criterion. ``Acc. w/ V-IP given Q(X)" is the accuracy the classifier network (trained jointly with the queried network using the \ref{eq:Practical V-IP} objective) obtains when seeing all the query answers $Q(X)$. In all data sets, V-IP learns to predict with short explanations (avg. number of queries) and a test accuracy in proximity to what can be achieved if all answers were observed (col 6).

`Acc. w/ Black-Box given $Q(X)$" reports the accuracy a black-box deep network obtains by training on feature vectors comprised of all query answers $Q(X)$ using the standard cross-entropy loss. Columns 6 and 7 show that training a classifier network with the \ref{eq:Practical V-IP} objective results in only a minor loss in performance compared to training using the standard supervised classification cross-entropy loss.

``Acc. w/ Black-Box given $X$" reports test accuracies obtained by training black-box deep networks on the whole input $X$ to produce a single output (the classification label). Comparing these values with the accuracies reported in col 6 we see that basing predictions on an interpretable query set, almost always, results in a drop in accuracy. This is expected since interpretability can be seen as a constraint on learning. For example, there is a drop of about $15\%$ for the HuffingtonNews dataset since our queries are about the presence/absence of words which completely ignores the linguistic structure present in documents. Similarly, for the binary image classification tasks (MNIST, KMNIST and Fashion-MNIST) the queries are binary patches which can be easily interpreted as edges, foregrounds and backgrounds. This binarization however results in a drop in performance, especially in Fashion-MNIST where it is harder to distinguish between some classes like coat and shirt without grayscale information.

\begin{table}[ht]
\begin{center}
\caption{Comparison of Test Performance between prediction using V-IP with stopping criterion $\epsilon$, prediction using all queries, and prediction using a black-box trained model.}
\label{table: VIP vs black box performance}
\resizebox{\linewidth}{!}{
\begin{tabular}{lc|ccc|c|c|c}
\toprule
Dataset & $|Q|$ & \begin{tabular}[c]{@{}c@{}}Stopping\\
Criterion\end{tabular} ($\epsilon$) &
\begin{tabular}[c]{@{}c@{}}Explanation\\
Length\end{tabular} & \begin{tabular}[c]{@{}c@{}}Acc. w/ V-IP\\
 given $\epsilon$\end{tabular} & \begin{tabular}[c]{@{}c@{}}Acc. w/ V-IP\\
 given $Q(X)$\end{tabular} & \begin{tabular}[c]{@{}c@{}}Acc. w/ Black-Box\\
given $Q(X)$ \end{tabular} &
\begin{tabular}[c]{@{}c@{}} {Acc. w/ Black-Box}\\
{given $X$} \end{tabular} \\
\midrule
HuffingtonNews & 1000 & MAP (0.99)        & 195.89 & 0.672 & 0.712 &  0.715 & {0.865 \tablefootnote{Number taken from \cite{chattopadhyay2022interpretable} where authors fine-tuned a Bert Large Uncased Transformer model to classify documents.}}\\
CUB-200  & 312       & Stability (0.001) & 18.82  & 0.752 & 0.763 &  0.763 &  {0.827 \tablefootnote{Number taken from \cite{koh2020concept} which trained a CNN on raw images of birds from the CUB-200 dataset}} \\
MNIST    & 676       & MAP (0.99)        &  6.34  & 0.971 & 0.991 &  0.992 &  {0.998 \tablefootnote{Number reported from \cite{hu2018squeeze}}} \\
KMNIST      & 676    & MAP (0.99)        & 20.34  & 0.916 & 0.958 &  0.951 &  {0.988  \tablefootnote{Number reported from \cite{clanuwat2018deep}}} \\
Fashion-MNIST & 676  & MAP (0.99)        & 19.42  & 0.872 & 0.876 &  0.884 &  {0.967 \tablefootnote{Number reported from \cite{fashion_github}}}\\
CIFAR-10   & 49     & Stability (0.127) & 9.79   & 0.936 & 0.946 &  0.955 &  0.955 \tablefootnote{Trained a Deep Layer Aggregation model \cite{yu2018deep} to classify CIFAR-10 images from scratch.} \\
CIFAR-100  & 49     & Stability (1.438)  & 12.82 &   0.752   &   0.788  &  0.798 & 0.798 \tablefootnote{Number reported from\cite{huang2017densely}} \\
\bottomrule
\end{tabular}}
\end{center}
\end{table}

\section{Additional Query-Answer Chains}
\label{appendix: additional trajectories}
We show additional trajectories for different task and datasets: CUB-200 (Figure~\ref{fig:app-traj-cub}), MNIST (Figure~\ref{fig:app-traj-mnist}), Fashion-MNIST (Figure~\ref{fig:app-traj-fashion-mnist}), KMNIST (Figure~\ref{fig:app-traj-kmnist}), HuffingtonNews (Figure~\ref{fig:app-traj-news}), CIFAR-10 (Figure \ref{fig:app_traj_cifar10}) and CIFAR-100 (Figure \ref{fig:app_traj_cifar100}). In every figure, we see that the correct predictions are explained by an interpretable sequence of query-answer pairs. The evolution of the posterior $P(Y \mid q_{1:k}(x^{\textrm{obs}}))$, as more and more queries are asked, gives insights into the model's decision-making process as we see it's belief shift among the possible labels for $x^{\textrm{obs}}$. This adds an additional layer of transparency to the model.

\begin{figure}[ht]
    \centering
    \includegraphics[width=\textwidth]{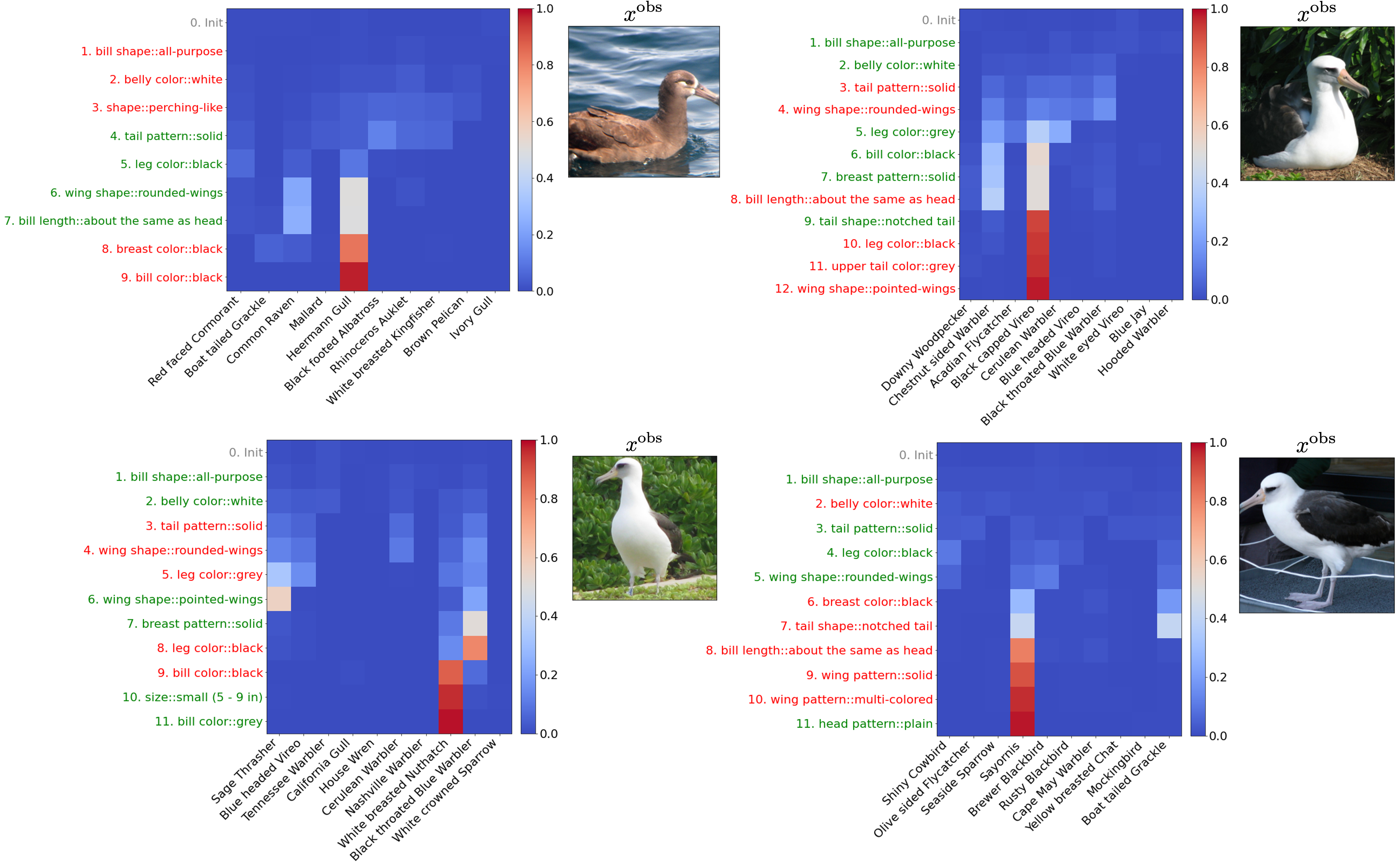}
    \caption{Examples for CUB-200.}\label{fig:app-traj-cub}
\end{figure}

\begin{figure}[ht]
    \begin{center}
        \begin{subfigure}[b]{\textwidth}
            \centering
            \includegraphics[width=\textwidth]{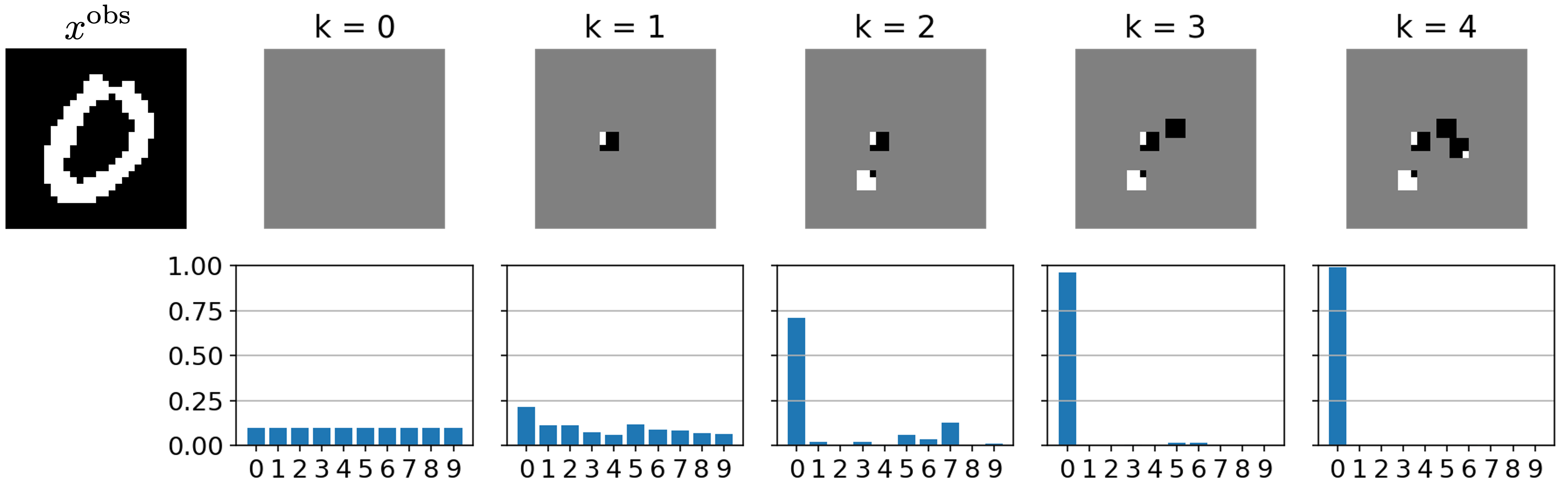}
            \caption{}
        \end{subfigure}
        \begin{subfigure}[b]{\textwidth}
            \centering
            \includegraphics[width=\textwidth]{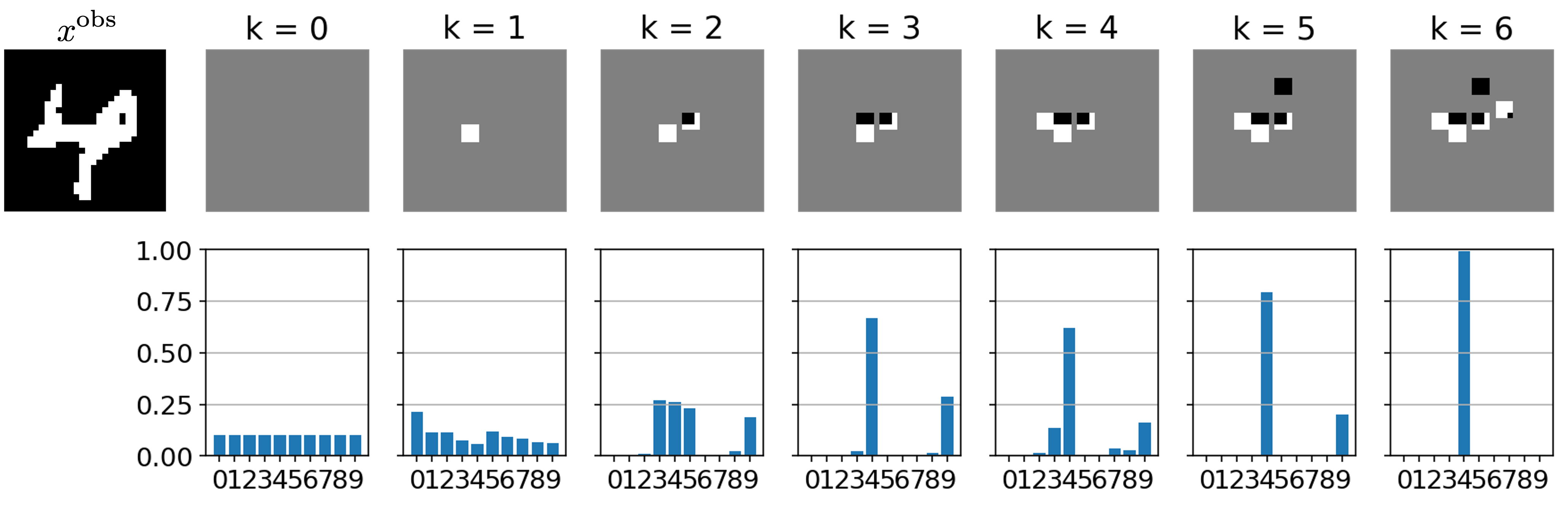}
            \caption{}
        \end{subfigure}
        \begin{subfigure}[b]{\textwidth}
            \centering
            \includegraphics[width=\textwidth]{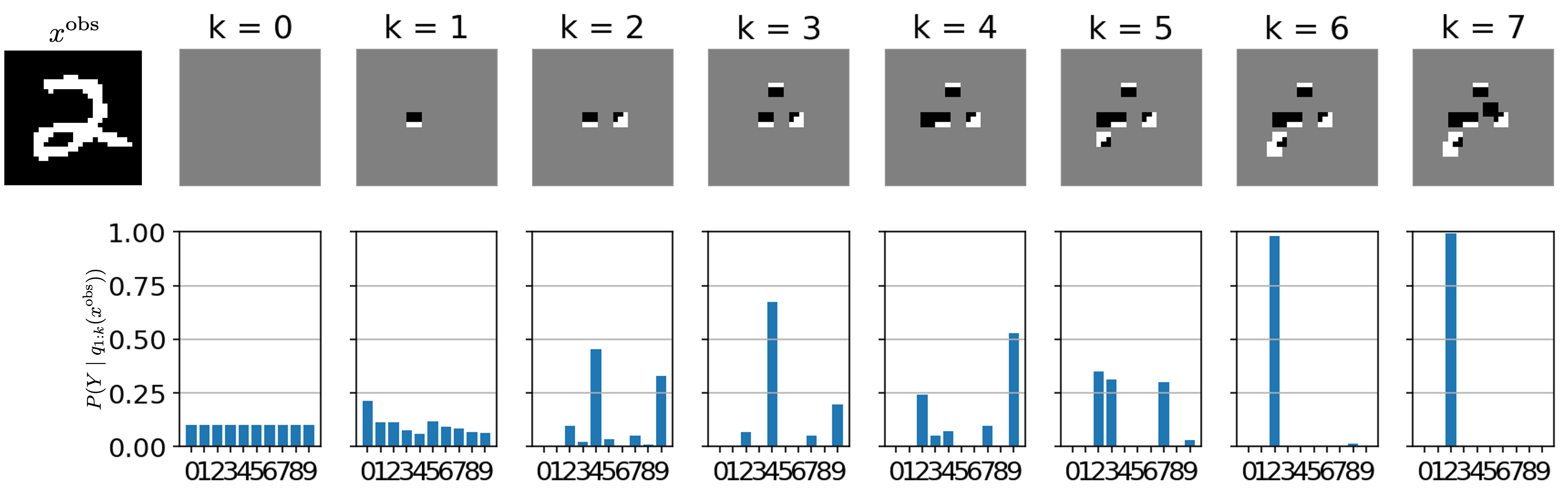}
            \caption{}
        \end{subfigure}
        \begin{subfigure}[b]{\textwidth}
            \centering
            \includegraphics[width=\textwidth]{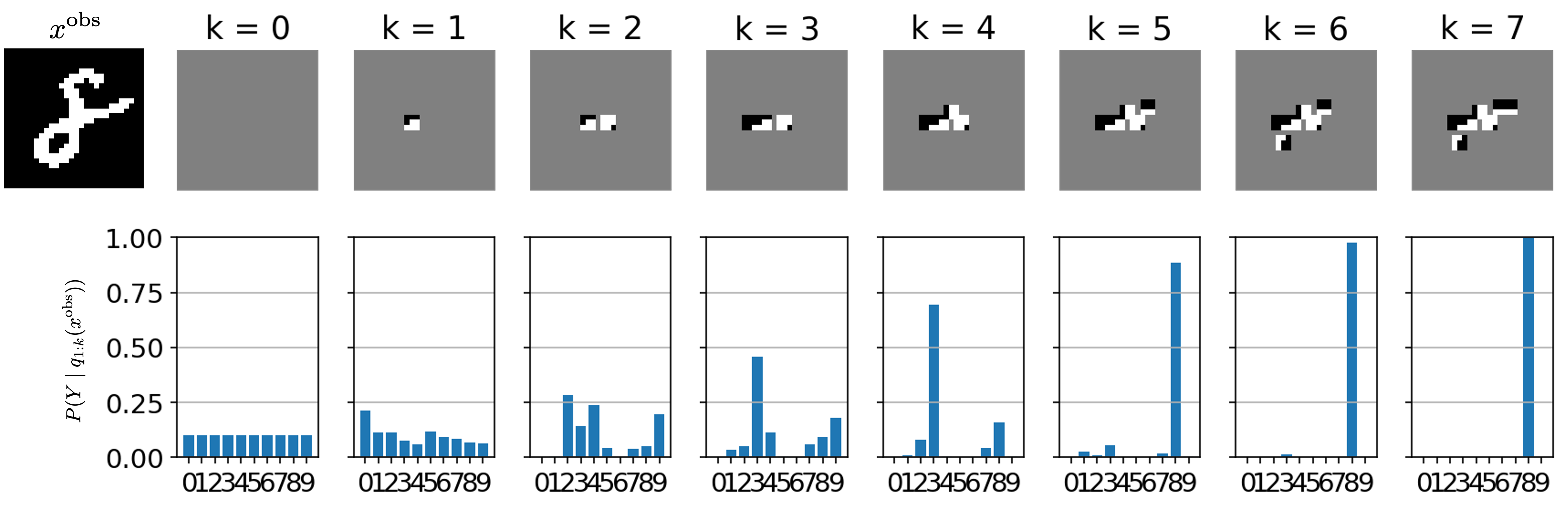}
            \caption{}
        \end{subfigure}
        \caption{Examples for MNIST. }\label{fig:app-traj-mnist}
    \end{center}
\end{figure}

\begin{figure}[ht]
    \begin{center}
        \begin{subfigure}[b]{\textwidth}
            \centering
            \includegraphics[width=\textwidth]{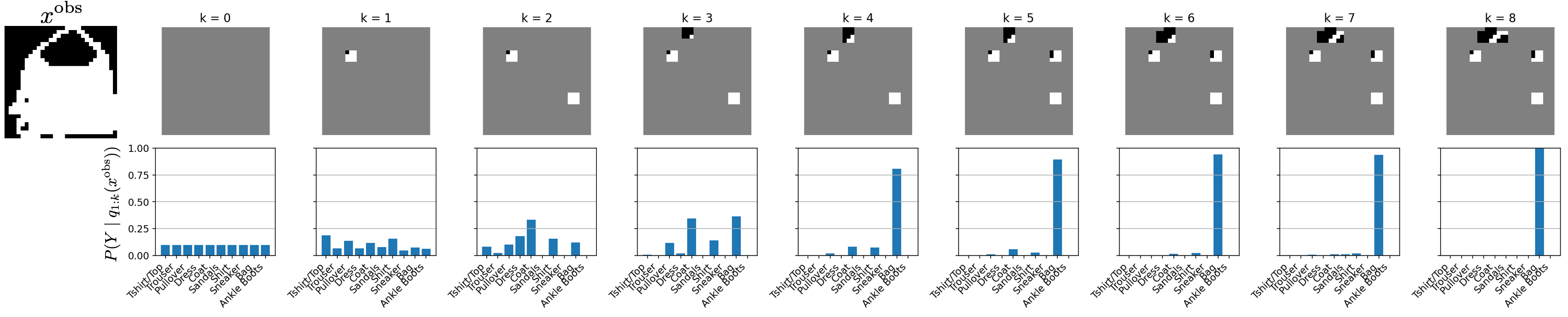}
            \caption{}
        \end{subfigure}
        \begin{subfigure}[b]{\textwidth}
            \centering
            \includegraphics[width=\textwidth]{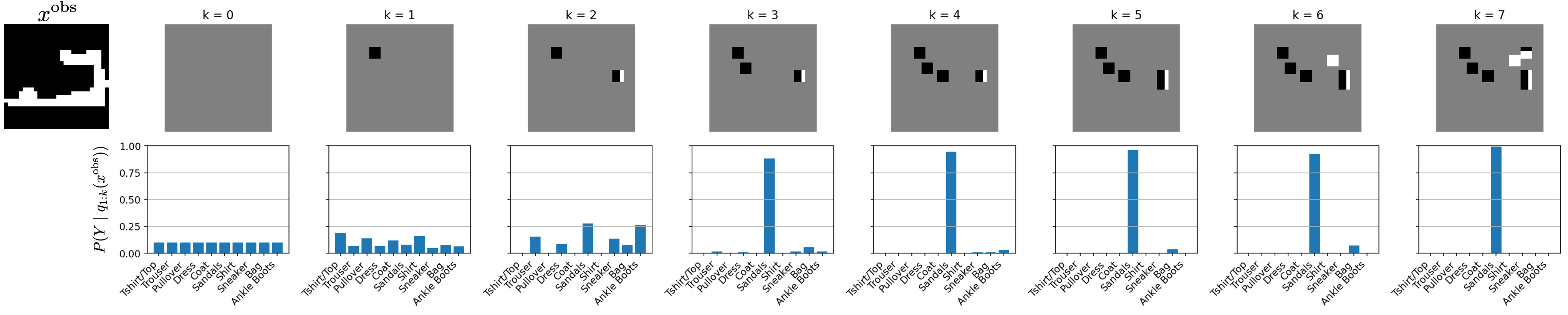}
            \caption{}
        \end{subfigure}
        \begin{subfigure}[b]{\textwidth}
            \centering
            \includegraphics[width=\textwidth]{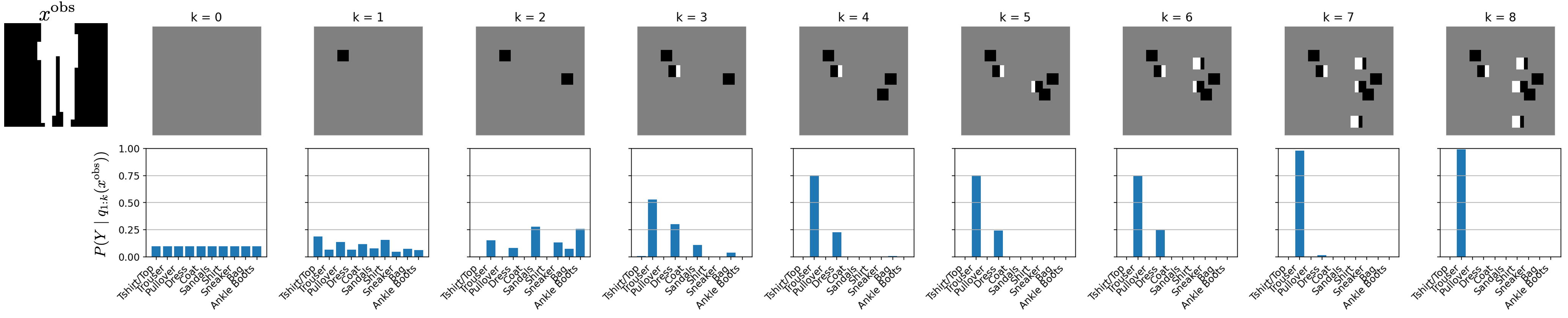}
            \caption{}
        \end{subfigure}
        \begin{subfigure}[b]{\textwidth}
            \centering
            \includegraphics[width=\textwidth]{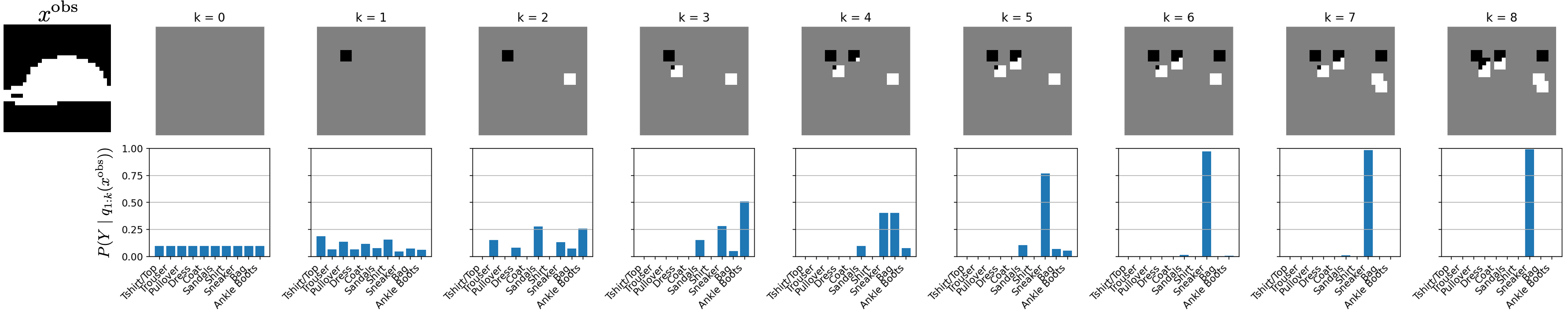}
            \caption{}
        \end{subfigure}
        \caption{Examples for Fashion-MNIST. }\label{fig:app-traj-fashion-mnist}
    \end{center}
\end{figure}

\begin{figure}[ht]
    \begin{center}
        \begin{subfigure}[b]{\textwidth}
            \centering
            \includegraphics[width=\textwidth]{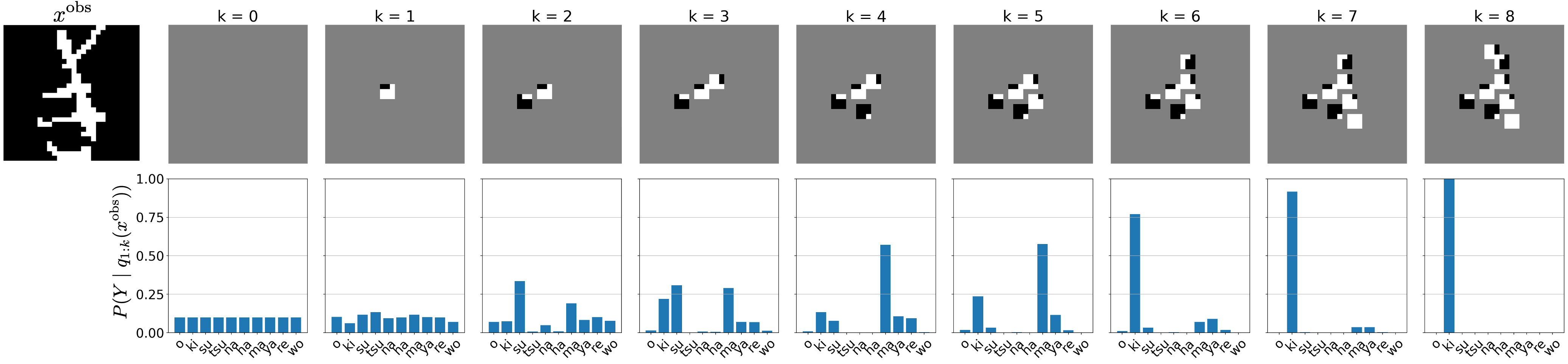}
            \caption{}
        \end{subfigure}
        \begin{subfigure}[b]{\textwidth}
            \centering
            \includegraphics[width=\textwidth]{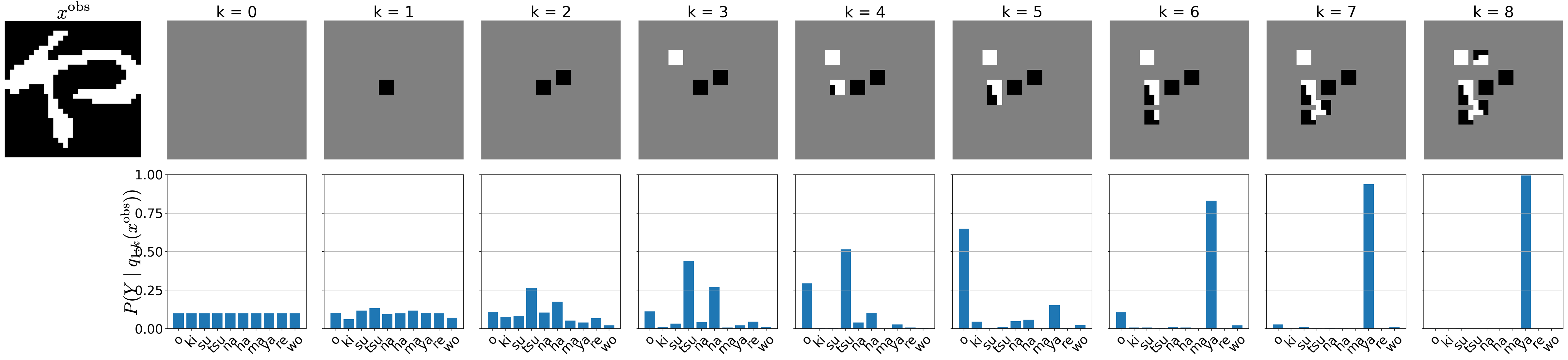}
            \caption{}
        \end{subfigure}
        \begin{subfigure}[b]{\textwidth}
            \centering
            \includegraphics[width=\textwidth]{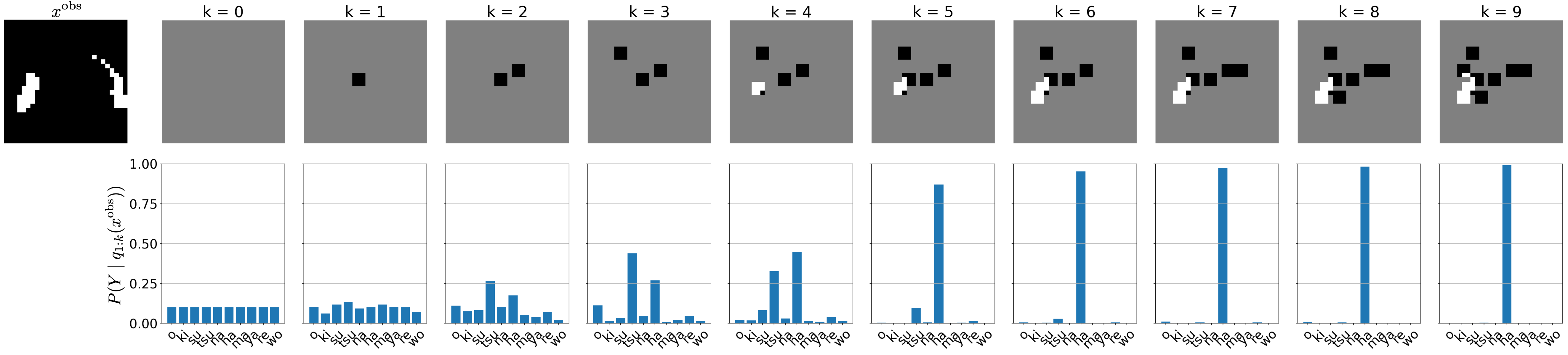}
            \caption{}
        \end{subfigure}
        \begin{subfigure}[b]{\textwidth}
            \centering
            \includegraphics[width=\textwidth]{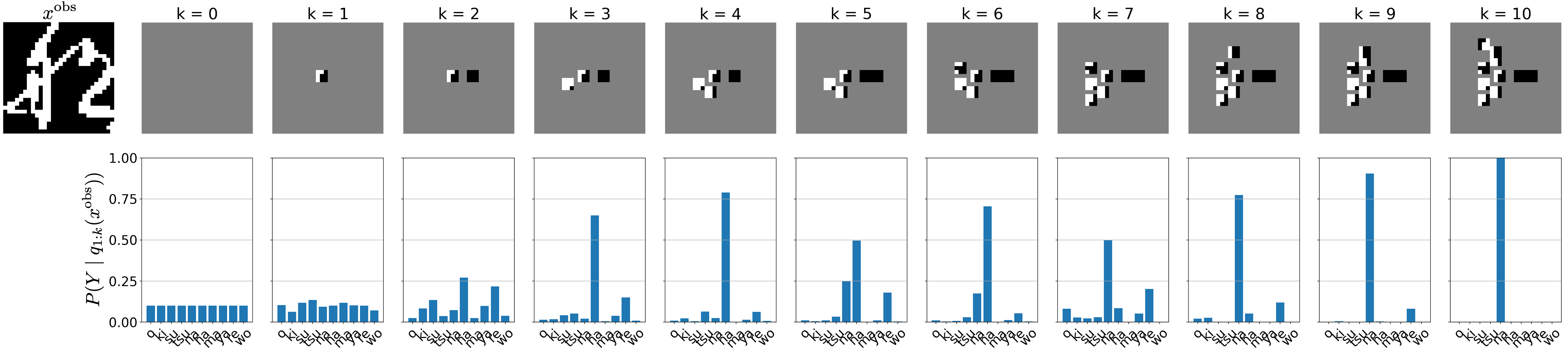}
            \caption{}
        \end{subfigure}
        \caption{Examples for KMNIST.}\label{fig:app-traj-kmnist}
    \end{center}
\end{figure}

\begin{figure}[ht]
\begin{center}
    \includegraphics[width=\textwidth]{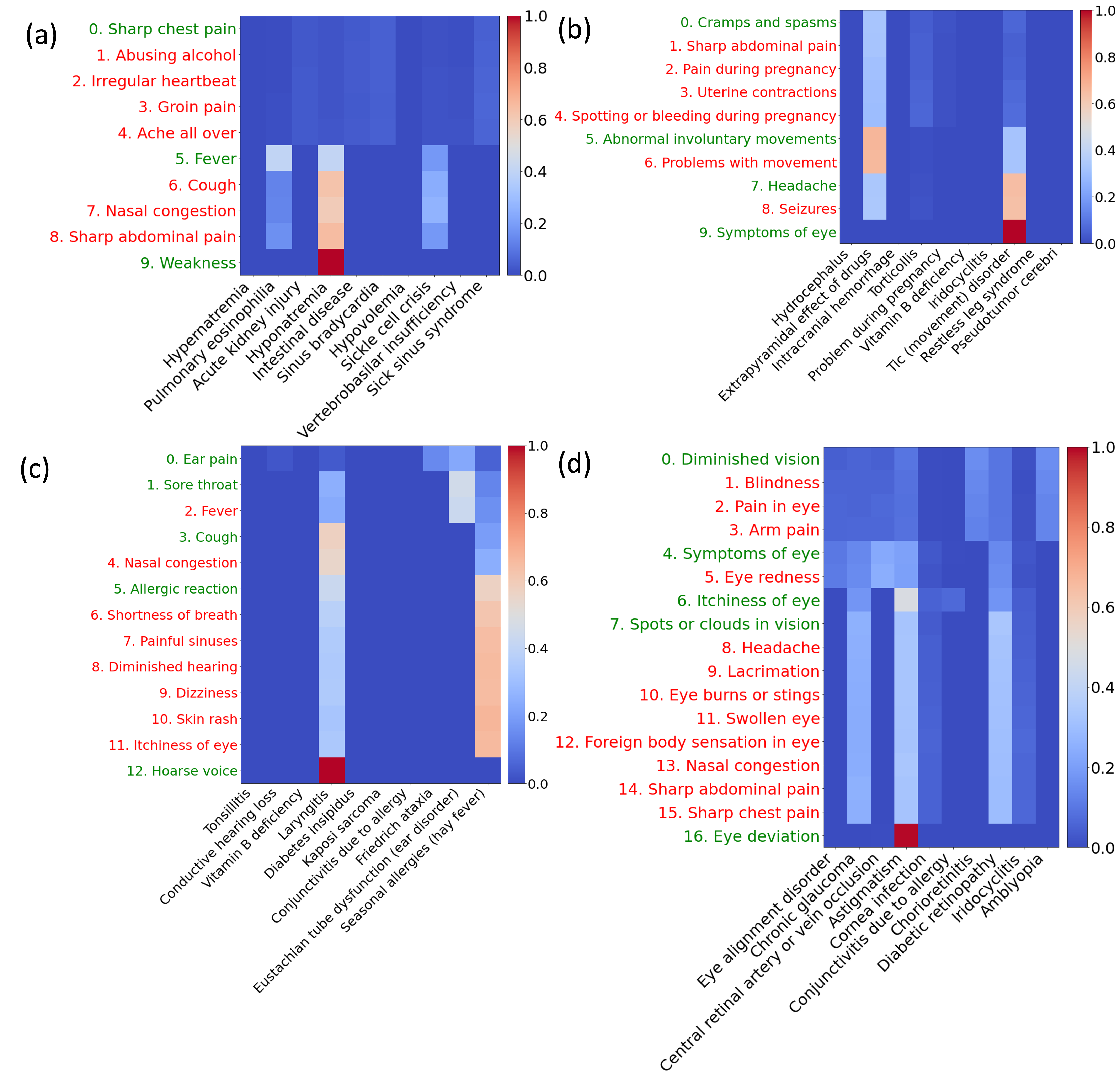}
    \caption{Examples for SymCAT200}\label{fig:app-traj-symcat200}
\end{center}
\end{figure}

\begin{figure}[ht]
\begin{center}
    \includegraphics[width=\textwidth]{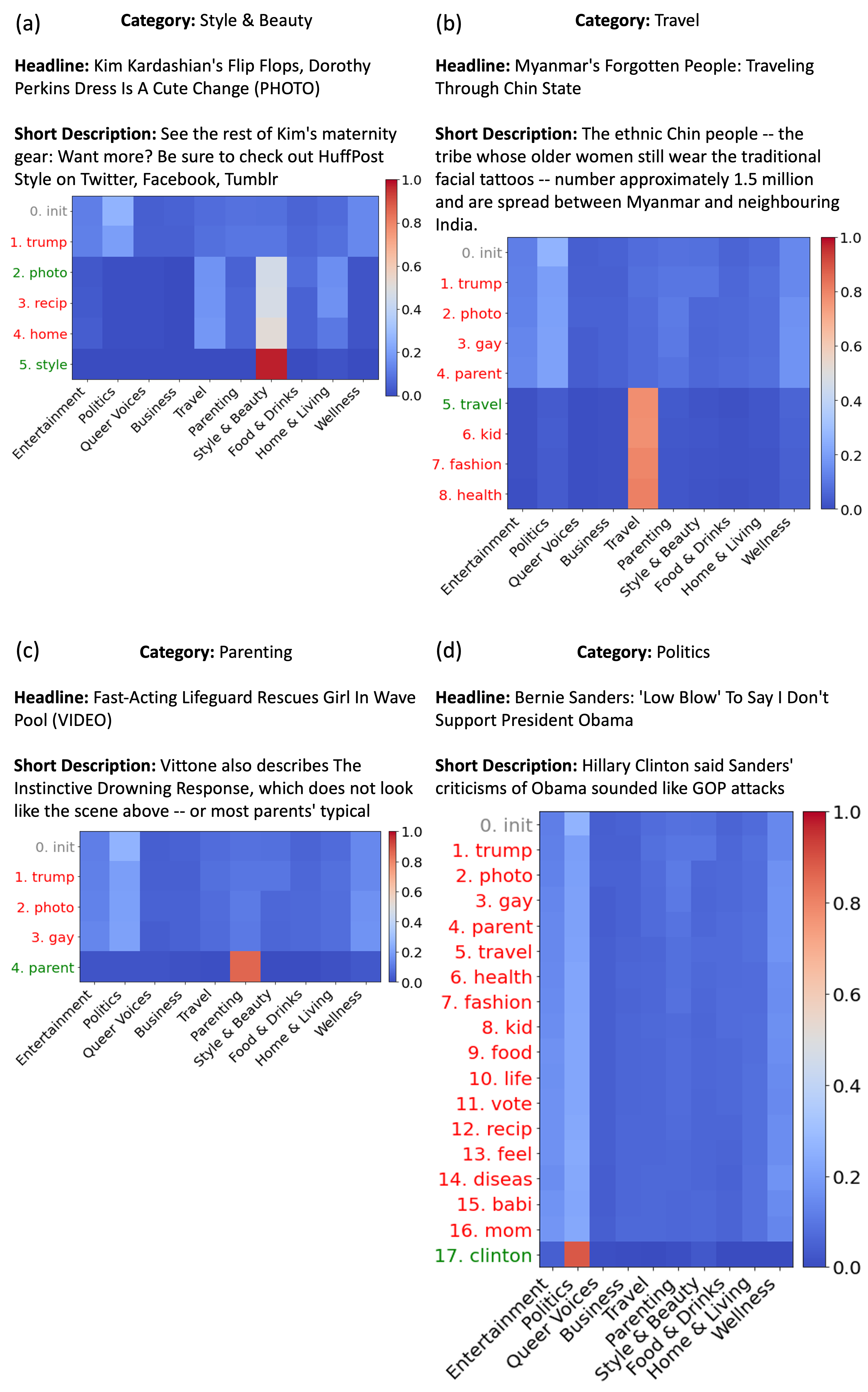}
    \caption{Examples for HuffingtonNews.}\label{fig:app-traj-news}
\end{center}
\end{figure}

\begin{figure}[ht]
\begin{center}
    \begin{subfigure}[b]{\linewidth}
        \centering
        \includegraphics[width=\textwidth]{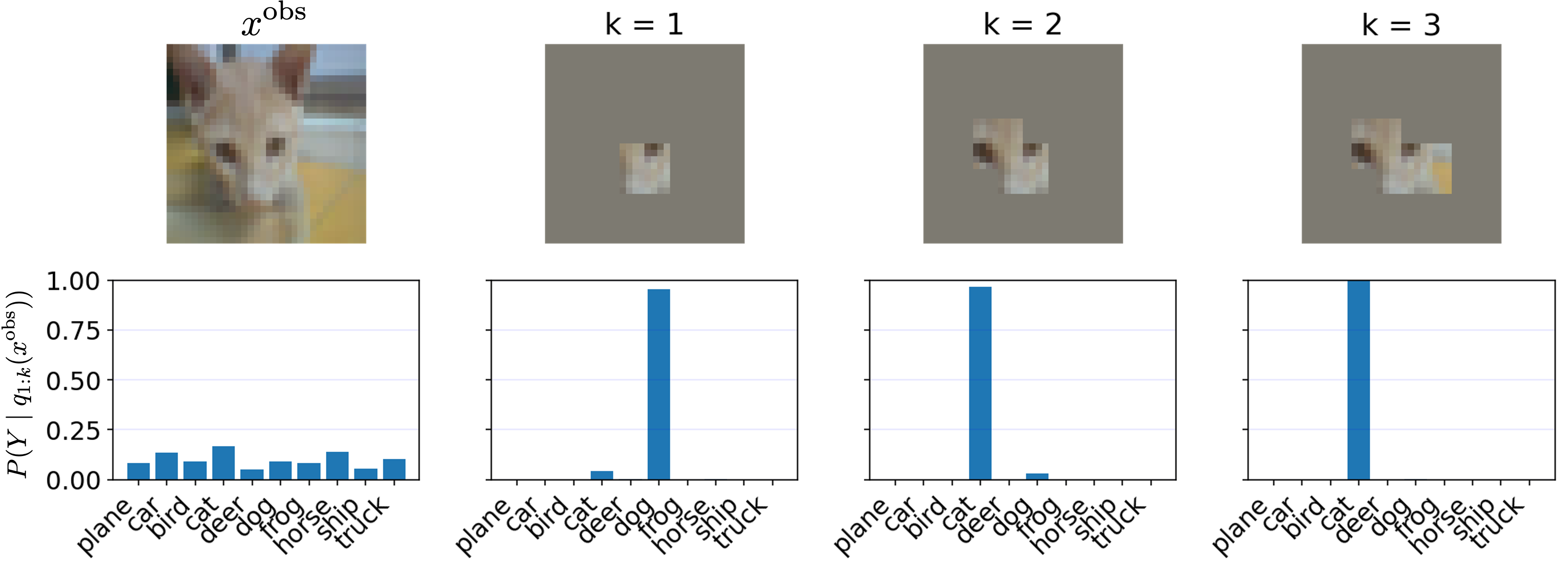}
    \end{subfigure}
    \begin{subfigure}[b]{\linewidth}
        \centering
        \includegraphics[width=\textwidth]{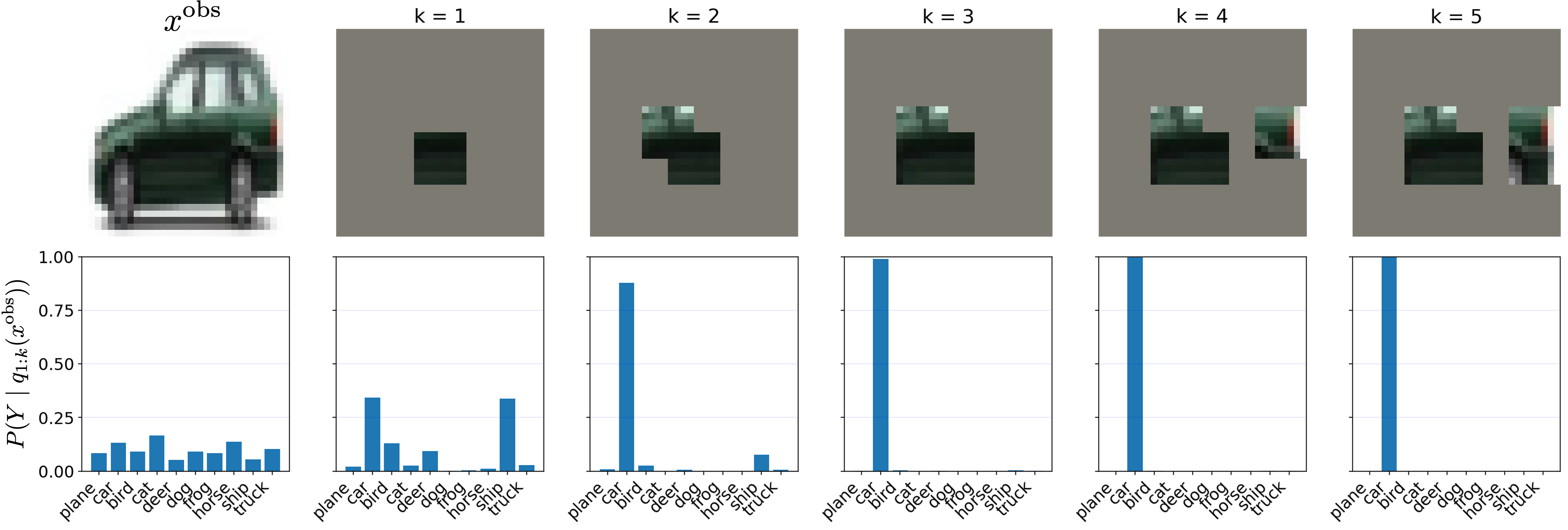}
    \end{subfigure}
    \begin{subfigure}[b]{\linewidth}
        \centering
        \includegraphics[width=\textwidth]{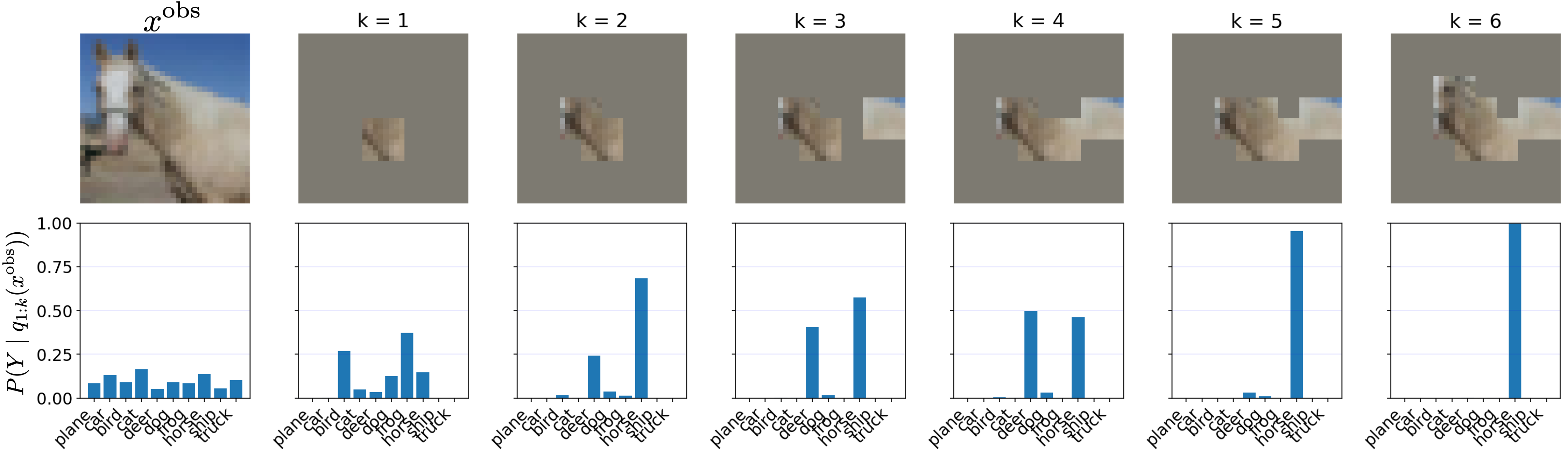}
    \end{subfigure}
    \caption{Examples for  CIFAR10.}\label{fig:app_traj_cifar10}
\end{center}
\end{figure}

\begin{figure}[ht]
\begin{center}
    \includegraphics[width=\textwidth]{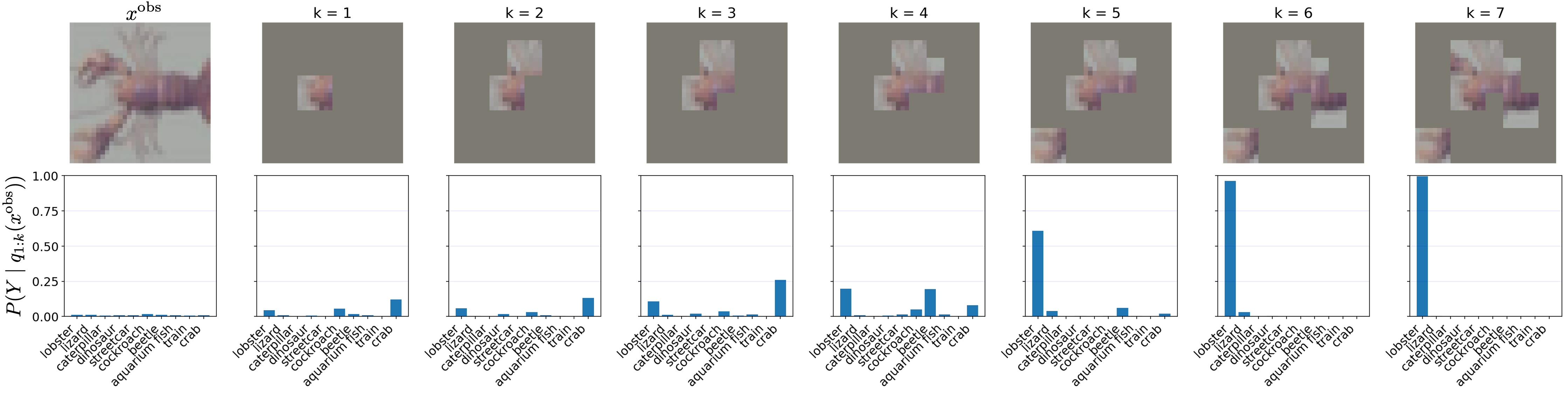}
    \caption{Examples for  CIFAR100.}\label{fig:app_traj_cifar100}
\end{center}
\end{figure}


\end{document}